\def\figref#1{figure~\ref{#1}}
\def\secref#1{section~\ref{#1}}
\def\eqref#1{equation~\ref{#1}}
\def\1{\bm{1}}
\def\vtheta{{\bm{\theta}}}
\def\vx{{\bm{x}}}
\DeclareMathAlphabet{\mathsfit}{\encodingdefault}{\sfdefault}{m}{sl}
\SetMathAlphabet{\mathsfit}{bold}{\encodingdefault}{\sfdefault}{bx}{n}
\newcommand{\expect}{\mathbb{E}}
\newcommand{\pref}{\pi_{\textnormal{ref}}}
\newcommand{\vepsilon}{\mathbf{\epsilon}}
\newcommand{\grp}{\mathcal{G}}
\newcommand{\VarSty}[1]{\textnormal{\ttfamily\color{blue!90!black}#1}\unskip}
\newcommand{\mdpo}{{\fontfamily{lmtt}\selectfont mDPO}\xspace}%
\newcommand{\mipo}{{\fontfamily{lmtt}\selectfont mIPO}\xspace}%
\declaretheoremstyle[
    spaceabove=0pt, spacebelow=0pt,
    headfont=\normalfont\bfseries,
    notefont=\mdseries, notebraces={(}{)},
    bodyfont=\itshape,
    postheadspace=0.1em,
]{mystyle}
\renewcommand{\figref}[1]{Fig.~\ref{#1}}
\newcommand{\tabref}[1]{Table~\ref{#1}}
\newcommand{\eqnref}[1]{\text{Eq.}~(\ref{#1})}
\renewcommand{\secref}[1]{\S\ref{#1}}
\newcommand{\appref}[1]{Appendix~\ref{#1}}
\newcommand{\bfsection}[1]{\noindent\textbf{#1}}
\newcommand\scalemath[2]{\scalebox{#1}{\mbox{\ensuremath{\displaystyle #2}}}}
\title{Preference Optimization with Multi-Sample Comparisons}
\author[1,2,*,\dagger]{Chaoqi Wang}
\author[1,*]{Zhuokai Zhao}
\author[1]{Chen Zhu}
\author[1]{Karthik Abinav Sankararaman}
\author[1]{Michal Valko}
\author[1]{Sara Cao}
\author[2]{Zhaorun Chen}
\author[1]{Madian Khabsa}
\author[2]{Yuxin Chen}
\author[1]{Hao Ma}
\author[1]{Sinong Wang}
\affiliation[1]{Meta GenAI}
\affiliation[2]{University of Chicago}
\abstract{
Recent advancements in generative models, particularly large language models (LLMs) and diffusion models, have been driven by extensive pretraining on large datasets followed by post-training. 
However, current post-training methods such as reinforcement learning from human feedback (RLHF) and direct alignment from preference methods (DAP) primarily utilize single-sample comparisons. 
These approaches often fail to capture critical characteristics such as generative diversity and bias, which are more accurately assessed through multiple samples. 
To address these limitations, we introduce a novel approach that extends post-training to include multi-sample comparisons. 
To achieve this, we propose Multi-sample Direct Preference Optimization (\mdpo) and Multi-sample Identity Preference Optimization (\mipo). 
These methods improve traditional DAP methods by focusing on group-wise characteristics. Empirically, we demonstrate that multi-sample comparison is more effective in optimizing collective characteristics~(e.g., diversity and bias) for generative models than single-sample comparison. 
Additionally, our findings suggest that multi-sample comparisons provide a more robust optimization framework, particularly for dataset with label noise.
}
\date{\today}
\begin{document}

\maketitle

\section{Introduction}\label{sec:introduction}
Generative models, particularly large language models (LLMs)~\citep{achiam2023gpt, llama3modelcard, bai2023qwen, bi2024deepseek, team2023gemini, anthropic2024claude} and diffusion models~\citep{sohl2015deep, ho2020denoising, Rombach_2022_CVPR, podell2023sdxl}, hold the tremendous promise to transform numerous industries by automating complex tasks, enhancing creativity, and personalizing user experiences at an unprecedented scale~\citep{eloundou2023gpts}. 
These models achieve their capabilities through extensive pretraining on large-scale datasets, followed by post-training to unlock the capabilities~(i.e., the superficial alignment hypothesis)~\citep{wei2022emergent,tay2022transcending,zhou2024lima}. 
For post-training stage, reinforcement learning from human feedback~(RLHF) and direct alignment from preference methods~(DAP) have been crucial for the success of both LLMs~\citep{rafailov2024direct, azar2024general} and diffusion models~\citep{black2023training, wallace2023diffusion, yang2024dense}. 
\begin{figure*}
    \centering
    \includegraphics[width=0.8\textwidth]{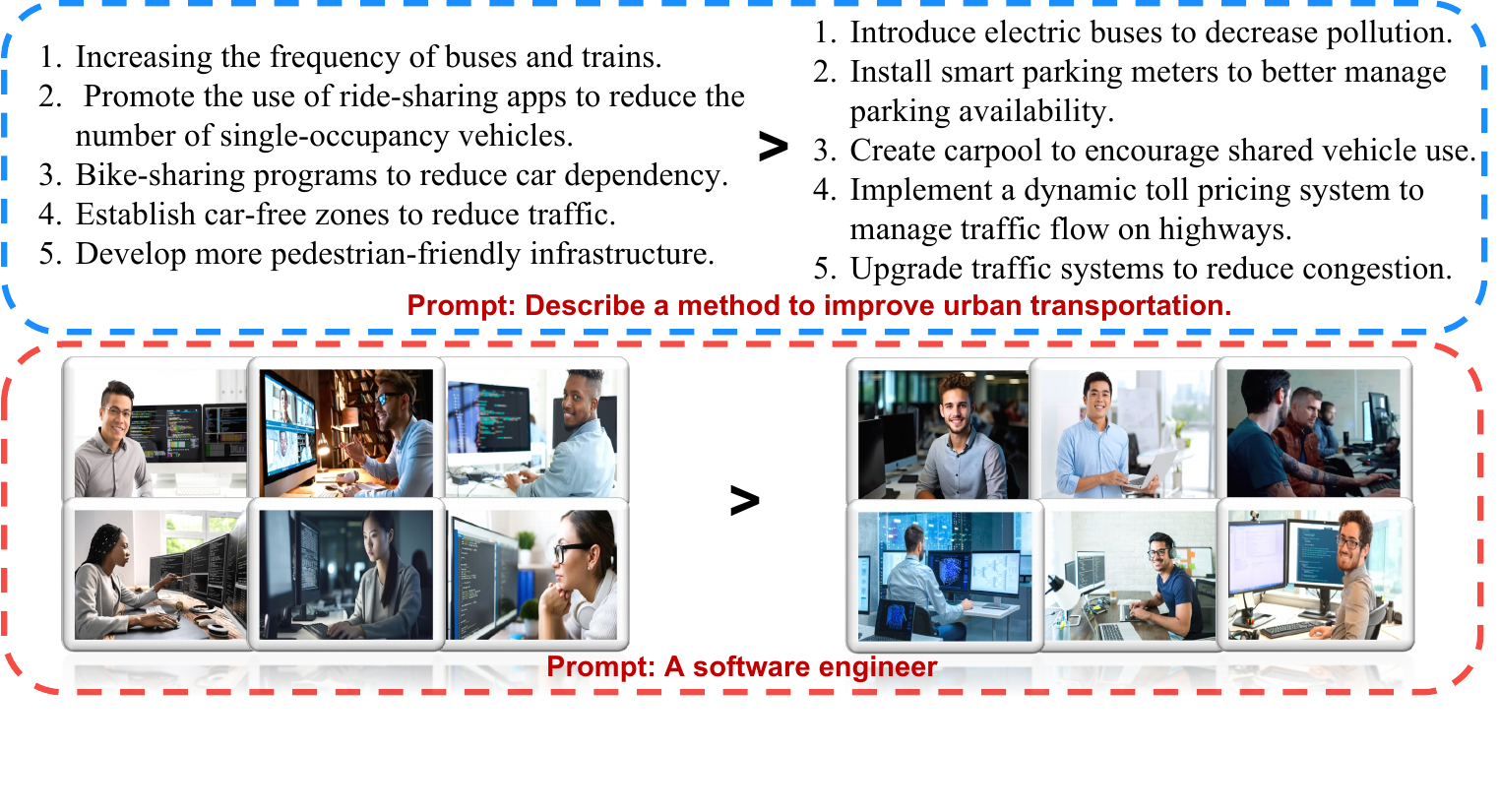}
    \vspace{-.5cm}
    \caption{\textbf{Top}: Diversity of responses from two groups for improving urban transportation. The left group provides a broader range of approaches, including public transit and infrastructure improvements. The right group focuses more narrowly on specific technological and management solutions.
    \textbf{Bottom}: Bias in images from two groups. The left group displays a more balanced representation of race and gender, while the right group predominantly features males due to stereotypes.}
    \label{fig:figure-1}
\end{figure*}

Despite these advancements, current post-training approaches predominantly focus on single-sample comparisons, which fail to capture characteristics better assessed through distributions of samples, such as creativity and bias. 
Evaluating a model's creativity/consistency or detecting biases requires analyzing the variability and diversity across multiple outputs, not just individual ones. 
For example, while LLMs are proficient in crafting narratives, they often show limitations in generating a diverse representation of genres~\citep{patel2024swag, wang2024weaver}. 
Additionally, these models tend to have lower entropy in their predictive distributions after post-training, leading to limited generative diversity~\citep{mohammadi2024creativity,wang2023beyond,wiher2022decoding,khalifa2020distributional}.

Consider the task of generating a random integer between 0 and 10. Ideally, the model should not prefer any particular number. However,~\citet{zhang2024forcing} show that existing models often bias towards certain numbers over the others. 
Similar issues are observed in diffusion models, which may display biases in the generated outputs based on factors like gender or race~\citep{luccioni2023stable,chen2024mj}. 
Inconsistencies in generation is also a crucial issue that needs to be addressed to make models more reliable~\citep{liu2023trustworthy, bubeck2023sparks}. 
The aforementioned failures cannot be captured by a single sample; instead, they are distributional issues~(see \figref{fig:figure-1} for an illustration). 
To address the limitations, we extend the post-training paradigm to multi-sample comparisons. 
This approach involves evaluating the model's performance over distributions of samples rather than individual samples. 
By curating groups of responses and assessing their collective characteristics, we can better align the model's outputs with desired distributional properties. 

In this work, we introduce Multi-sample Direct Preference Optimization~(\mdpo) and Multi-sample Identity Preference Optimization~(\mipo), which are extensions of the prior DAP methods--DPO~\citep{rafailov2024direct} and IPO~\citep{azar2024general}\footnote{Our framework can also be extended to other preference optimization algorithms.}. 
Unlike their predecessors, which rely on single-sample comparisons, \mdpo and \mipo utilize multi-sample comparisons to better capture group-wise or distributional characteristics. 
Our experiments involve fine-tuning language models to generate random numbers in a calibrated manner and enhancing the diversity of genres in creative story writing. 
Additionally, for diffusion models, we demonstrate a significant reduction in gender and race biases compared to the traditional single-sample comparison approach. 
Furthermore, we present remarkable results where mDPO and mIPO can iteratively improve the model quality by only contrasting multiple responses generated by a stronger and weaker model without using precise pairwise comparisons.
%

\section{Related Works}



\bfsection{Reinforcement Learning from Human Feedback (RLHF)} has been pivotal in advancing the capabilities of large language models~\citep{ouyang2022training,bai2022training} and, more broadly, in agent alignment~\citep{christiano2017deep,leike2018scalable}. RLHF was initially introduced by \citet{christiano2017deep} to tackle classic reinforcement learning tasks such as those found in MuJoCo. The work by \citet{ouyang2022training} marked the first significant application of RLHF in aligning language models to follow human instructions, thereby establishing a foundation for subsequent models like ChatGPT~\citep{achiam2023gpt}. To address the complexities and instabilities associated with RL methods, Direct Alignment from Preference (DAP) methods~\citep{rafailov2024direct,azar2024general,dong2023raft,yuan2023rrhf,zhao2023slic} have been proposed. Notably, \citet{rafailov2024direct} introduced Direct Preference Optimization (DPO), which reframes the RL problem into a supervised learning problem by deriving an analytical relationship between policy and reward. To counteract potential overoptimization issues in DPO, \citet{azar2024general} proposed Identity Preference Optimization (IPO), which employs an $\ell_2$ loss to regress the margin towards a predefined threshold. In the context of diffusion models, \citet{black2023training} introduced Denoising Diffusion Policy Optimization (DDPO) to optimize diffusion models using specific reward functions. \citet{wallace2023diffusion} expanded the application of DPO from language domains to image domains.
However, these methods predominantly focus on single-sample comparisons and may not adequately capture the collective characteristics of output distributions.

\bfsection{Distributional difference}~\citep{zhong2022describing} describes the difference between two or more distributions, which is a generalization of single-sample comparison. While many characteristics can be judged from a single sample, some properties can only be deduced from multiple samples, such as diversity and bias~\citep{santurkar2023whose,chen2024mj,zhang2024rankclip,zhou2024calibrated,mohammadi2024creativity,wang2023beyond,go2023aligning}. To measure the distributional difference, \citet{zhong2022describing} proposed a hypothesis-verifier framework to summarize the difference between two corpora. \citet{dunlap2024describing} further extended this framework to the image domain to capture the difference between two sets of images. More recently, \citet{zhong2023goal} considered the problem of distributional difference in a goal-driven setting.  \citet{melnyk2024distributional} addressed the problem of distributional alignment for language models using optimal transport, aiming to optimize the chosen responses across all prompts jointly. In contrast, our work addresses an orthogonal problem by focusing on optimizing the distributional preference under the same prompt.

While finalizing this manuscript for submission to arXiv, we became aware of a related study by \cite{li2024popalign}. 
Both works independently address the formulation of multi-sample comparison. 
Our contributions differ in several key aspects, such as the introduction of additional algorithms, distinct experimental foci, and differing applications. Notably, our primary focus is on language models, with an emphasis on providing a low-variance and potentially unbiased estimator. In contrast, their work centers around text-to-image tasks and does not include the IPO-based methods we derived.

\section{Preliminaries}

\bfsection{Supervised Finetuning. } After the language model has been pretrained extensively on large corpora, the next step is typically supervised finetuning (SFT). This process aims to teach the model the dialog formats and refine its predictions to interact more naturally with human and prepare it for the subsequent alignment stages. Specifically, the pretrained language model is finetuned using a supervised dataset $\mathcal{D} = \{(x, y)_i\}_{i=1}^N$, where $x$ represents the input and $y$ denotes the target. The objective of SFT is to minimize the negative log-likelihood,
\begin{equation*}
    \mathcal{L}_{\text{SFT}}(\pi_{\vtheta}, \mathcal{D}) = \expect_{(x, y)\sim \mathcal{D}}\left[ -\log \pi_{\vtheta}(y|x) \right].
\end{equation*}

\paragraph{Direct Alignment from Preference. } For single-sample comparison setting, the dataset consists of $\{(x, y_w, y_l)_i\}_{i=1}^N$, where $x$ denotes the prompt or the input to the model, $y_w$ is the preferred response over $y_l$. Under the Bradley-Terry model, the likelihood of $y_w$ is preferred to $y_l$ is computed as
\begin{equation*}
    p(y_w \succcurlyeq y_l | x) =\sigma(r(y_w, x)-r(y_l, x))
    = \frac{\exp(r(y_w, x))}{\exp(r(y_w, x)) + \exp(r(y_l, x))},
\end{equation*}
where $r(y, x)$ is the scalar reward of answering $y$ given $x$.
Direct preference optimization~(DPO)~\citep{rafailov2024direct} establishes an analytical form between the reward function and the policy language model, where the language model's prediction represents an implicit reward. Specifically, under DPO, the implicit reward can be computed by
\begin{equation*}
    r_\vtheta(y, x) = \beta \log\frac{\pi_\vtheta(y|x)}{\pref(y|x)} + \mathrm{const}(x).
\end{equation*}
Using this implicit reward, DPO optimizes the policy $\pi_\vtheta$ by minimizing the negative log-likelihood on the offline preference dataset,
\begin{equation*}
\scalemath{0.9}{
    \mathcal{L}_{\mathrm{DPO}}(\pi_\vtheta, \mathcal{D}) = \expect_{(x, y_w, y_l) \sim \mathcal{D}}\left[ -\log \sigma\left(r_\vtheta(y_w, x) - r_\vtheta(y_l, x)\right) \right].
}
\end{equation*}
The Bradley-Terry model is based on the assumption of Gumbel noise in the reward function, which is well-suited for discrete random variables. However, it has certain limitations such as overfitting, as noted by \citet{azar2024general}. Specifically, Identity Preference Optimization~(IPO)~\citep{azar2024general} addresses these limitations by bypassing the Bradley-Terry model for preference modeling. IPO aims to mitigate the issue of ``overfitting" in preference datasets by replacing the sigmoid function with the squared distance,
\begin{equation*}
    \mathcal{L}_{\mathrm{IPO}}(\pi_\vtheta, \mathcal{D}) = \expect_{(x,y_w, y_l)\sim \mathcal{D}} \bigg[ 
    \log \frac{\pi_\vtheta(y_w|x)}{\pref(y_l|x)} - \log \frac{\pi_\vtheta(y_l|x)}{\pref(y_l|x)} - \frac{\tau^{-1}}{2} \bigg]^2,
\end{equation*}
which can be interpreted as regressing the difference between the log-likelihood ratio to   $\tau^{-1}/2$. 

\section{Method}\label{sec:method}

The singleton preference~(i.e., the marginal distribution) may not generalize well to the distributional preference~(i.e., the joint distribution)~\citep{melnyk2024distributional}. For most of the cases, when given a prompt $x$, there is indeed a preference between two responses $y_1$ and $y_2$. However, there are also cases where the preference cannot be assigned purely based on two singletons. Consider the cases where we prompt the diffusion models to generate an image of ``a software engineer''. For such cases, there is no preference between ``a female software engineer'' and ``a male software engineer''. 
Similarly, if a language model is prompted to generate an integer randomly and uniformly from $\{0, 1, ..., 9\}$, there is also no preference between generating any two specific numbers, e.g., 3 or 5. Nonetheless, preferences can exist at a \textbf{multi-sample or joint distribution level}.
\subsection{Multi-sample Preference Optimization}
We generalize the preference model from singleton comparison to multi-sample comparison, where the preference is assigned at the multi-sample level instead of on a singleton-level. Let $\grp_w$ and $\grp_l$ denote two group of samples drawn from some models' conditional distributions when the prompt $x$ is given.\footnote{The definition of a group $\grp$ can be generalized to distributions. Thus, $\grp$ is equivalent to the predictive distribution of the language model.} Following the definitions from~\cite{azar2024general}, we define the likelihood of $\grp_w$ preferred over $\grp_l$ as 
\begin{equation*}
    p(\grp_w \succcurlyeq \grp_l | x) = \Phi(r(\grp_w, x) - r(\grp_l, x)),
\end{equation*}
where $\Phi$ can be a sigmoid function (recovering the Bradley-Terry model). Given the reward function $r(\cdot, \cdot)$,  the goal of RLHF is to maximize the reward under reverse KL constraints. This can be captured by the following optimization objective,
\begin{equation*}
    \max_{\vtheta} \expect_{\grp \sim \pi_\vtheta | x, x\sim p_x} \left[r(\grp, x) - \beta\cdot \text{KL}(\pi_\vtheta(\cdot|x)|| \pi_{\text{ref}}(\cdot|x))\right].
\end{equation*}
Generalizing the derivation of DPO, we obtain the following equation for the reward function,
\begin{equation*}
    r(\grp, x) = \beta \log \frac{\pi_\vtheta(\grp|x)}{\pref(\grp|x)} + \mathrm{const}(x).
\end{equation*}
where $\text{const}(x)$ is some constant that only depends $x$, and $\beta$ is the coefficient of the KL regularization. For a given dataset, $\{(\grp_w, \grp_l, x)_i\}_{i=1}^N$, where $\grp = \{y_j\}_{j=1}^k$ with $y_j \stackrel{\text{i.i.d}}{\sim} \pi(\cdot|x)$, we can expand the likelihood by
    $\pi_\theta(\grp|x) = \prod_{y\in\grp} \pi_\vtheta(y|x)$.
To avoid the effect of the size of $\grp$, we consider using the geometric mean instead,
$
    \pi_\theta(\grp|x) = (\prod_{y\in\grp} \pi_\vtheta(y|x))^{1/|\grp|}$.
Thus the objective of multi-sample DPO becomes (where $\Phi(z)=\sigma(z)$),
%
\begin{equation}\label{eqn:dpo}
    \mathcal{L}_{\mathrm{mDPO}} 
    = \expect_{(x, \grp_w, \grp_l) \sim \mathcal{D}} \bigg[-\log \sigma\bigg(
    \beta{\color{red}\expect_{y_w\sim \grp_w}}\left[ \log \frac{\pi_\vtheta(y_w|x)}{\pref(y_w|x)} \right] - \beta{\color{red}\expect_{y_l\sim\grp_l}}\left[\log \frac{\pi_\vtheta(y_l|x)}{\pref(y_l|x)}\right]\bigg) \bigg].
\end{equation}
Similarly, for the IPO variant~(where $\Phi(z) = (z-\tau^{-1}/2)^2$), we have
\begin{equation}\label{eqn:ipo}
    \mathcal{L}_{\mathrm{mIPO}}
    = \expect_{(x, \grp_w, \grp_l) \sim \mathcal{D}}\bigg[ \bigg(
    {\color{red}\expect_{y_w\sim \grp_w}}\left[ \log \frac{\pi_\vtheta(y_w|x)}{\pref(y_w|x)} \right]
    - {\color{red}\expect_{y_l\sim\grp_l}}\left[\log \frac{\pi_\vtheta(y_l|x)}{\pref(y_l|x)}\right] - \frac{\tau^{-1}}{2}\bigg)^2 \bigg].
\end{equation}
The main difference (highlighted in \eqnref{eqn:dpo} and \eqnref{eqn:ipo} with {\color{red} red}) from their original objectives is that the implicit reward for a single sample is replaced with the expectation of the implicit reward from the distribution $\grp_w$ or $\grp_l$. 
In practice, we use a finite set of examples collected from $\grp_w$ and $\grp_l$ and take the average of their implicit rewards to approximate the expectations.
In the next section, we analyze the bias and variance of this approximation.

\subsection{Stochastic Estimator for Efficient Optimization}
\begin{figure}[t]
  \centering
  \!\!\!\!\!\!
  \includegraphics[width=\linewidth]{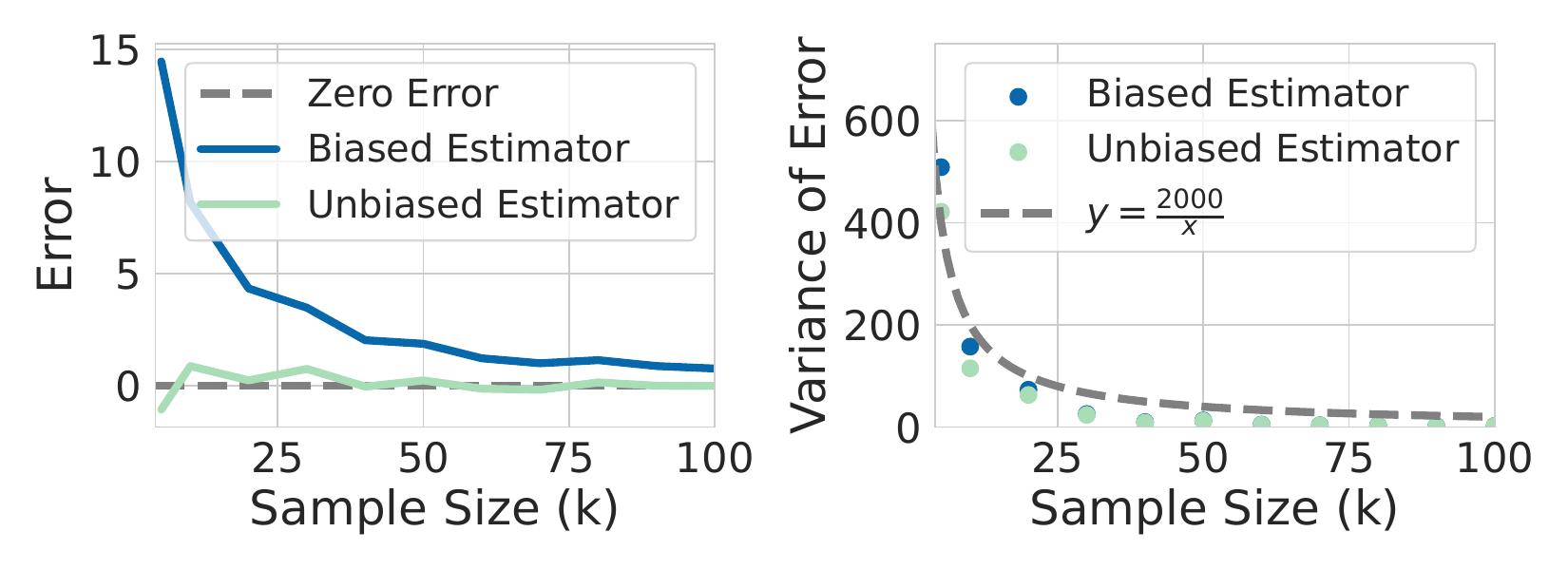}
  \vspace{-0.2cm}
  \caption{Biased estimator vs. Unbiased estimator.}\label{fig:bias-vs-unbias-estimator}
\end{figure}
Our focus is on finetuning large generative models, and thus a scalable estimator for the gradient / objective is necessary to make the algorithm practically useful. 
Deriving a mini-batch (potentially unbiased and low-variance) estimator for mDPO is challenging due to the non-linearity of the sigmoid function. However, for mIPO, this task is more tractable by expanding the square function.
%
%
This is equivalent to computing the unbiased estimator for the following  objective,
   $ \ell = \left(\expect_{x\sim p}[f(x)] - \expect_{x\sim q}[f(x)] - c\right)^2.$
%
The unbiased estimator for mIPO is stated in the following result.
\begin{restatable}{prop}{unbiasedestimator}
\label{thm:unbiased_estimator}
Let $f: \mathcal{X} \to \mathbb{R}$ be a measurable function, and let $p$ and $q$ be probability distributions on $\mathcal{X}$. Define $\ell = (\mathbb{E}_{x\sim p}[f(x)] - \mathbb{E}_{x\sim q}[f(x)] - c)^2$, where $c$ is a constant. Let $x^p_1, \ldots, x^p_n$ be i.i.d. samples from $p$, and $x^q_1, \ldots, x^q_m$ be i.i.d. samples from $q$. Then,
\begin{equation*}
    \hat{\ell} = \left(\frac{1}{n}\sum_{i=1}^n f(x^p_i) - \frac{1}{m}\sum_{j=1}^m f(x^q_j) - c\right)^2 - \left(\frac{\hat{\sigma}_p^2}{n} + \frac{\hat{\sigma}_q^2}{m}\right)
\end{equation*}
is an unbiased estimator of $\ell$, where $\hat{\sigma}_p^2$ and $\hat{\sigma}_q^2$ are the sample variances of $f(x)$ under $p$ and $q$.
\end{restatable}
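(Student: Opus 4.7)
The plan is to reduce the claim to the elementary identity $\mathbb{E}[X^2] = \operatorname{Var}(X) + (\mathbb{E}[X])^2$ applied to $X = \bar{f}_p - \bar{f}_q - c$, where $\bar{f}_p = \tfrac{1}{n}\sum_{i=1}^n f(x^p_i)$ and $\bar{f}_q = \tfrac{1}{m}\sum_{j=1}^m f(x^q_j)$ are the two empirical means. This decomposition immediately expresses $\ell$ as a second moment of $X$ minus $\operatorname{Var}(X)$, after which it suffices to estimate that variance unbiasedly from the samples.

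First I would let $\mu_p = \mathbb{E}_{x\sim p}[f(x)]$ and $\mu_q = \mathbb{E}_{x\sim q}[f(x)]$. Because the two samples are mutually independent and each is i.i.d., linearity of expectation gives $\mathbb{E}[X] = \mu_p - \mu_q - c$, and independence together with the i.i.d.\ property inside each sample gives $\operatorname{Var}(X) = \operatorname{Var}(\bar{f}_p) + \operatorname{Var}(\bar{f}_q) = \sigma_p^2/n + \sigma_q^2/m$. Substituting into the identity yields
\[
\mathbb{E}\big[(\bar{f}_p - \bar{f}_q - c)^2\big] \;=\; (\mu_p - \mu_q - c)^2 + \frac{\sigma_p^2}{n} + \frac{\sigma_q^2}{m},
\]
so after rearrangement $\ell = \mathbb{E}[(\bar{f}_p - \bar{f}_q - c)^2] - \sigma_p^2/n - \sigma_q^2/m$.

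Next I would invoke the standard fact that the Bessel-corrected sample variances $\hat{\sigma}_p^2$ and $\hat{\sigma}_q^2$ are unbiased for $\sigma_p^2$ and $\sigma_q^2$, and hence $\mathbb{E}[\hat{\sigma}_p^2/n + \hat{\sigma}_q^2/m] = \sigma_p^2/n + \sigma_q^2/m$. Combining the two displays by linearity of expectation gives $\mathbb{E}[\hat{\ell}] = \ell$, which is exactly the claim.

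There is no real obstacle here; the argument is a two-line combination of linearity of expectation and the variance decomposition. The only subtlety worth flagging is the convention for the sample variance: unbiasedness of $\hat{\sigma}_p^2$ and $\hat{\sigma}_q^2$ requires Bessel's correction ($1/(n-1)$ and $1/(m-1)$), and hence $n, m \ge 2$; with the uncorrected $1/n$ estimator one would instead need to rescale the subtracted correction accordingly. I would state this convention explicitly at the start of the proof so that the variance-correction term matches the stated form of $\hat{\ell}$.
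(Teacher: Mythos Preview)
Your proposal is correct and follows essentially the same route as the paper: both apply the identity $\mathbb{E}[X^2]=\operatorname{Var}(X)+(\mathbb{E}[X])^2$ to $X=\bar{f}_p-\bar{f}_q-c$, compute $\operatorname{Var}(X)=\sigma_p^2/n+\sigma_q^2/m$ by independence, and then subtract the unbiased sample-variance correction. Your explicit remark about Bessel's correction and the implicit requirement $n,m\ge 2$ is a useful clarification that the paper omits.
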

%
Therefore, to optimize the mIPO objective, we can sample mini-batches similar to IPO. Each mini-batch can consist of multiple responses from both $\grp_w$ and $\grp_l$. While increasing the number of samples from each group reduces the variance of the estimated gradient, it also raises computational and memory costs.
In summary, the mIPO objective is thus
\begin{align*}
    \expect_{(x, \{y_{w,i}\}_{i=1}^k, \{y_{l, i}\}_{i=1}^k) \sim \mathcal{D}}\Bigg[ \Bigg(
    \frac{1}{k}\sum_{i=1}^k \log \frac{\pi_\vtheta(y_{w,i}|x)}{\pref(y_{w,i}|x)}
    - \frac{1}{k}\sum_{j=1}^k\log \frac{\pi_\vtheta(y_{l, j}|x)}{\pref(y_{l, j}|x)} - \frac{\tau^{-1}}{2}\Bigg)^2 - \frac{\hat{\sigma}_w^2}{k} - \frac{\hat{\sigma}_l^2}{k}\Bigg].
\end{align*}
For mDPO, we consider the following objective for multi-sample comparison, which may exhibit low variance~(for larger values of $k$) but is biased,
%
\begin{align*}
    \expect_{(x, \{y_{w,i}\}_{i=1}^k, \{y_{l, i}\}_{i=1}^k) \sim \mathcal{D}}\Bigg[
    -\log \sigma\Bigg( \frac{\beta}{k} \sum_{i=1}^k \log \frac{\pi_\vtheta(y_{w,i}|x)}{\pref(y_{w,i}|x)}
    - \frac{\beta}{k} \sum_{j=1}^k \log \frac{\pi_\vtheta(y_{l,j}|x)}{\pref(y_{l,j}|x)}\Bigg) \Bigg].
\end{align*}
To further understand the variance, we present the following Proposition~\ref{prop:variance-of-estimator}, which shows that the variance of the estimator $\hat{\ell}$ is $\tilde{\mathcal{O}}(1/k)$, where $k$ is the number of samples. 
\begin{restatable}{prop}{varianceofestimator}
\label{prop:variance-of-estimator}
Let $\mu_p = \mathbb{E}_{x \sim p}[f(x)]$, $\mu_q = \mathbb{E}_{x \sim q}[f(x)]$, $\sigma_p^2 = \text{Var}_{x \sim p}[f(x)]$, $\sigma_q^2 = \text{Var}_{x \sim q}[f(x)]$ and $n$ and $m$ be the number of independent samples from distributions $p$ and $q$, respectively. Then, the variance of the mini-batch estimator $\hat{\ell}$
is given by
\begin{equation*}
\scalemath{0.95}{
\text{Var}(\hat{\ell}) = \mathcal{O}\left(\left(\frac{\sigma_p^2}{n} + \frac{\sigma_q^2}{m}\right)\cdot\left(\frac{\sigma_p^2}{n} + \frac{\sigma_q^2}{m} + (\mu_p - \mu_q - c)^2 \right)\right). 
}
\end{equation*}
\end{restatable}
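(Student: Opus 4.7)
The plan is to decompose $\hat\ell$ into its two natural pieces, bound the variance of each separately, and combine via the elementary inequality $\Var(X-Y)\le 2\Var(X)+2\Var(Y)$. Set $D = \frac{1}{n}\sum_i f(x_i^p) - \frac{1}{m}\sum_j f(x_j^q) - c$, $\Delta = \mu_p - \mu_q - c$, $V = \frac{\sigma_p^2}{n}+\frac{\sigma_q^2}{m}$, and $\hat V = \frac{\hat\sigma_p^2}{n}+\frac{\hat\sigma_q^2}{m}$, so that $\hat\ell = D^2 - \hat V$ with $\mathbb{E}[D]=\Delta$, $\Var(D)=V$, and $\mathbb{E}[\hat V]=V$. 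The two quantities to control are $\Var(D^2)$ and $\Var(\hat V)$, and I would address them in turn.

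For $\Var(D^2)$, I would write $D = \Delta + Z$ with $\mathbb{E}[Z]=0$ and $\Var(Z)=V$, so that
\begin{equation*}
\Var(D^2) \;=\; 4\Delta^2 V \;+\; 4\Delta\,\mathbb{E}[Z^3] \;+\; \mathbb{E}[Z^4] - V^2.
\end{equation*}
The key step is showing $\mathbb{E}[Z^4]=\mathcal{O}(V^2)$. I would decompose $Z = U - W$ with $U = \hat\mu_p - \mu_p$ and $W = \hat\mu_q - \mu_q$ independent and mean-zero, expand $(U-W)^4$ using independence (the odd-order cross terms vanish), and invoke the standard identity $\mathbb{E}[U^4]=\mathcal{O}(\sigma_p^4/n^2)$ (and analogously for $W$) under a bounded fourth-moment assumption on $f$. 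Combined with $\mathbb{E}[U^2]\mathbb{E}[W^2]=(\sigma_p^2/n)(\sigma_q^2/m)$, this yields $\mathbb{E}[Z^4]=\mathcal{O}(V^2)$. Cauchy--Schwarz then gives $|\mathbb{E}[Z^3]|\le\sqrt{V\,\mathbb{E}[Z^4]}=\mathcal{O}(V^{3/2})$, and AM--GM absorbs the cross term $|\Delta| V^{3/2}$ into $\Delta^2 V + V^2$, producing $\Var(D^2) = \mathcal{O}(V(V+\Delta^2))$.

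For $\Var(\hat V)$, independence of the $p$- and $q$-samples gives $\Var(\hat V)=\Var(\hat\sigma_p^2)/n^2+\Var(\hat\sigma_q^2)/m^2$, and the classical identity for the variance of a sample variance produces $\Var(\hat\sigma_p^2)/n^2 = \mathcal{O}(\sigma_p^4/n^3)$ (with the hidden constant involving the fourth central moment of $f$ under $p$), and analogously for $q$. Hence $\Var(\hat V)=\mathcal{O}(V^2/\min(n,m))$, which is dominated by the bound on $\Var(D^2)$. Combining everything, $\Var(\hat\ell)\le 2\Var(D^2)+2\Var(\hat V)=\mathcal{O}(V(V+\Delta^2))$, matching the claim. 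I expect the main obstacle to be the fourth-moment bookkeeping: expanding $(U-W)^4$ produces many cross terms that must each be matched against the right powers of $\sigma_p^2/n$ and $\sigma_q^2/m$, and the $\mathcal{O}$-constants implicitly rely on bounded fourth central moments of $f$ under both $p$ and $q$; everything else is algebraic substitution.
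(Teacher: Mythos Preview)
Your proposal is correct and follows essentially the same route as the paper: decompose $\hat\ell$ into $D^2$ and the variance-correction term, bound $\Var(D^2)$ via a fourth-moment calculation on the centered mean difference, bound $\Var(\hat V)$ via the classical sample-variance formula, and combine. Your treatment is in fact slightly more careful than the paper's in two spots: you use the inequality $\Var(X-Y)\le 2\Var(X)+2\Var(Y)$ where the paper writes the variance as an exact sum (tacitly ignoring the covariance between $D^2$ and $\hat V$), and you bound $\mathbb{E}[Z^4]$ explicitly via the $U-W$ decomposition and Cauchy--Schwarz for the third moment, whereas the paper essentially plugs in the Gaussian fourth-moment identity $\mathbb{E}[Y^4]\approx 3V^2+6V\Delta^2+\Delta^4$ and absorbs the non-Gaussian remainder into $\mathcal{O}(n^{-2})+\mathcal{O}(m^{-2})$; both lead to the same $\mathcal{O}\bigl(V(V+\Delta^2)\bigr)$ conclusion.
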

To gain an understanding about the variance and bias, we run a simulation using the function $f(x) = x^2$ to assess the effects of variance and bias and their relationship to the sample size $k$. The trend is plotted in \figref{fig:bias-vs-unbias-estimator}. We observe that the unbiased estimator results in lower error compared to the biased estimator at small sample sizes. However, as the sample size increases, the error of the biased estimator also decreases and performs similarly to the unbiased one. This implies that as the sample size increases, the choice of estimator may become less critical.




\section{Experiments}


\subsection{Random Number Generator~(RNG)}\label{subsec:exp_rng}
\begin{wrapfigure}{r}{0.45\textwidth}
  \centering
  \vspace{-0.9cm}
  \includegraphics[width=\linewidth]{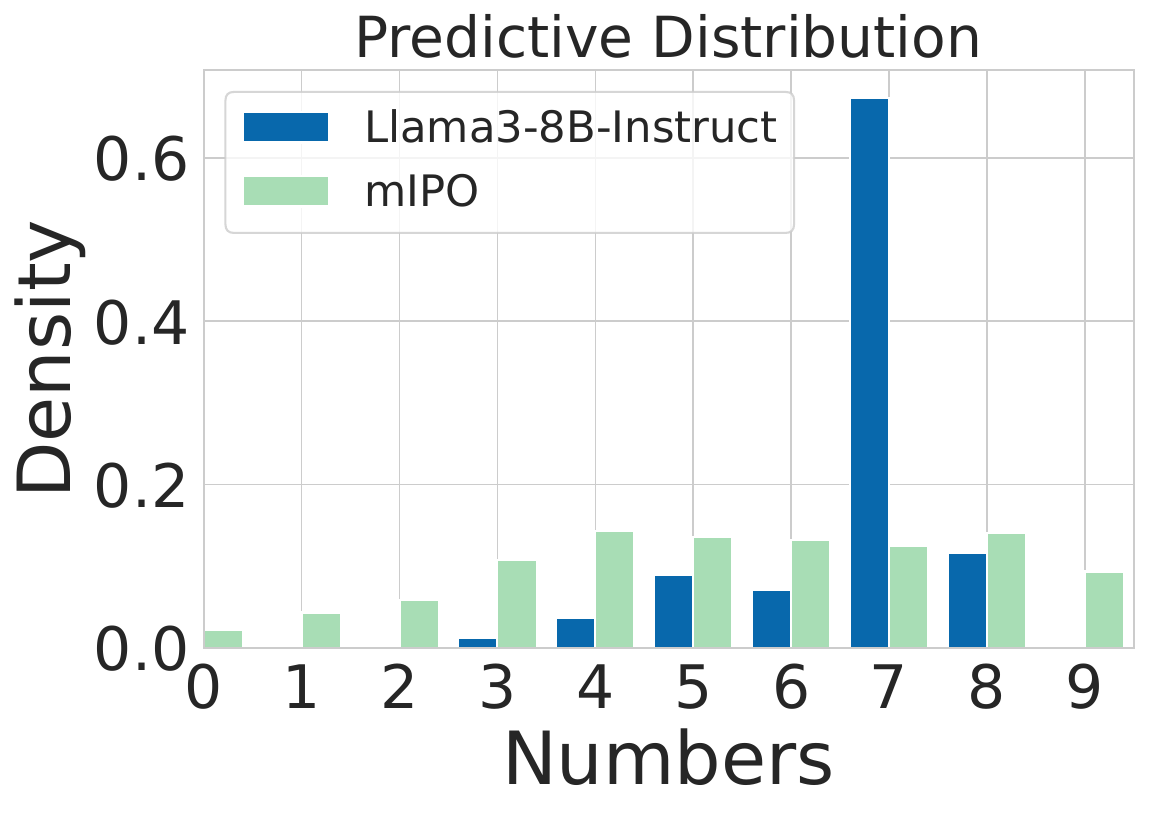}
  \caption{Distance comparison.}\label{fig:predictive-distirbution-comparison-rng}
  \vspace{-0.15in}
\end{wrapfigure}

Despite existing LLMs being trained to follow instructions and perform well on many tasks, they perform surprisingly poorly when asked to generate random outputs~(See \figref{fig:predictive-distirbution-comparison-rng}). A canonical example is instructing LLMs to generate a random number within a given interval. Existing LLMs exhibit a strong bias towards certain numbers rather than producing purely random outputs. In such settings, single-sample comparisons may be ineffective for modeling the distributional properties of the model's outputs, necessitating the use of multi-sample comparisons.

\bfsection{Preference data construction and finetuning.} Our initial experiments focus on generating integers uniformly at random within an interval \([a, b]\), where \(a\) and \(b\) are integers. To create the dataset, we sample \(a\) randomly from \([0, 1000]\) and the interval gap \(m\) from \([5, 10]\), with \(b\) defined as \(a + m\). The preferred response group is based on a uniform distribution of numbers, while non-uniform distributions are classified as rejected. We generated approximately 3,000 paired groups for training. For testing, we sample \(a\) and \(b\) from \([0, 1000]\) with \(a < b\) but do not impose the same gap constraints as in the training set to evaluate generalizability. We implemented the multi-sample versions of IPO and DPO, named mIPO and mDPO, respectively. To stabilize training, we add a negative log-likelihood\footnote{Without this, the language model tends to generate numbers beyond the specified range. We discuss this further from a constrained optimization perspective.} to the objective of both the multi-sample and original versions of IPO and DPO, shown to be effective in reasoning tasks~\citep{pang2024iterative}. For finetuning, we use the Llama 3-8B instruct model with LoRA (r=64, \(\alpha=128\)).
More details are in \appref{app:rng-exp-details}.

\bfsection{Metrics and results.} We compare the predictive entropy for each model with 100 testing prompts instructing the model to generate random numbers within specific intervals. We calculate the average win rates for the same prompt, with the winner being the one with larger entropy. The results are presented in Table~\ref{tab:rng-exp-res}. We observe that mIPO, IPO, mDPO, and DPO consistently and significantly outperform the SFT baseline. Additionally, both mIPO and mDPO outperform IPO and DPO by a large margin. Figure~\ref{fig:predictive-distirbution-comparison-rng} shows the predictive distribution before and after finetuning. The original Llama3 model significantly favors the number 7 over others. However, after finetuning, the predictive distribution is much closer to the uniform distribution.
\begin{table*}[t]
\centering
\caption{Comparison of Win Rates in the Random Number Generation Experiment with Llama-3-8B: Win rates are calculated based on the uniformity of the distribution given the prompt.}\label{tab:rng-exp-res}
\setlength{\tabcolsep}{12pt}
\renewcommand{\arraystretch}{1}
\resizebox{\linewidth}{!}{%
    \begin{tabular}{lccc|ccc}
        \toprule
        & \textbf{mIPO vs IPO} & \textbf{mIPO vs SFT} & \textbf{IPO vs SFT} &  \textbf{mDPO vs DPO} & \textbf{mDPO vs SFT} & \textbf{DPO vs SFT} \\
        \midrule
        Win Rate  & 0.95 & 0.99 & 0.98 & 0.80 & 0.99 & 0.99 \\
        \bottomrule
    \end{tabular}
}
\end{table*}

\subsection{Controlled Debiasing for Image Generation}\label{subsec:exp_txt2img}

Diffusion models~\citep{sohl2015deep} have emerged as a powerful workhorse for image generation~\citep{ho2020denoising,Rombach_2022_CVPR,dai2023emu,esser2024scaling}. Despite their success, the generated images often reflect strong biases and stereotypes due to imbalances in the training data~\citep{ananya2024ai}. These biases can result in harmful societal stereotypes and limit the diversity of generated content. To illustrate this point, we visualize the distribution of generated images for different occupations and highlight the discrimination in race and biases in Figure~\ref{fig:race-distribution-sd1.5} in the Appendix using Stable diffusion 1.5~\citep{Rombach_2022_CVPR}.
To address the generation biases, we extend the diffusion DPO objective proposed in \citet{wallace2023diffusion} from single-sample comparison to multi-sample comparison in a similar way as we discussed in Section~\ref{sec:method}. For diffusion models, the mDPO objective can be formulated as\footnote{It's more appropriate to interpret \(r(\vepsilon, \vx;\theta)\) as a cost rather than a reward. Consequently, the formulation uses \(-\beta\) instead of \(\beta\), as done in the original DPO.}
\begin{equation}
    \mathcal{L}_{\text{mDPO}}^{\text{diff}}(\epsilon_\vtheta, \mathcal{D}) = \expect \big[ -\log \sigma\big( -\beta T\omega(\lambda_t)
    \cdot\big(\expect_{\grp_w}\left[r(\vepsilon^w, \vx_t^w, t;\vtheta)\right]
    - \expect_{\grp_l}\left[r(\vepsilon^l, \vx_t^l, t;\vtheta)\right]\big)\big) \big],
\end{equation}
where $r(\vepsilon^w, \vx_t^w, t;\vtheta) = \|\vepsilon^{w}-\vepsilon_\vtheta(\vx^w_t,t; c)\|_2^2 - \|\vepsilon^{w}-\vepsilon_{\mathrm{ref}}(\vx^w_t,t; c)\|_2^2$, $\epsilon^w=\vx^w_t-\vx^w$ and $\vx^w \sim \grp_w$, and the expectation is taken over $(\grp_w, \grp_l, c)\sim\mathcal{D}$ and $t\sim\mathcal{U}(0, T)$. The same applies to $\grp_l$, $\vx_t^l$ and $\vx^l$.
We adopt the Stable Diffusion 1.5~\citep{Rombach_2022_CVPR} as our base model due to its popularity within the community, which we also find has strong bias in favor of certain genders and races. 

\bfsection{Dataset construction and finetuning.} 
To construct the dataset, we first collect $50$ occupations, and take $80\%$ of the occupations 
for generating training data and the remaining $20\%$ for evaluation. 
For each occupation, we generate $6$ images with varied races in \{black, white, aisan\} and genders in \{male, female\}, which form our chosen group. For the rejected group, we use the images generated from the original Stable diffusion 1.5 via API\footnote{https://stablediffusionapi.com/docs/category/stable-diffusion-api}. Our training code is adapted from the original implementation by \citet{wallace2023diffusion}. More details on the training can be found in \appref{app:text_to_image}.  

\bfsection{Evaluation metric and results.} 
To measure the generation quality, we used the Simpson Diversity Index~\citep{simpson1949measurement}, which focuses more on the dominance and even distribution of species. 
To compute it, we use $D = 1-\sum_{i}(n_i/N)^2$, where $n_i$ is the number of instance falls in $i_{\text{th}}$ category~(e.g., for gender,  the categories we consider are \{male, female\}), and $N=\sum_{i} n_i$. 

\begin{wraptable}{r}{0.6\textwidth}
\vspace{-0.15in}
    \centering
    \caption{Averaged Simpson Diversity Index for the generated images of different occupations.}
    \label{tab:diversity-index}
    \setlength{\tabcolsep}{12pt}
    \renewcommand{\arraystretch}{1}
    \resizebox{\linewidth}{!}{%
    \begin{tabular}{lcccc}
        \toprule
        & Original & SFT & DPO & mDPO \\
        \midrule
        Gender~$\uparrow$ & 0.110 & 0.318 & 0.283 & \textbf{0.353} \\
        Race~$\uparrow$  & 0.124 & 0.454 & 0.447 & \textbf{0.516} \\
        \bottomrule
    \end{tabular}}
\end{wraptable}
In general, the higher the value, the better the diversity. We compared three methods, SFT, DPO (equivalent to $k=1$ for mDPO), and mDPO~($k=6$). The results are presented in \tabref{tab:diversity-index}. 
We observe that mDPO performs the best in both metrics, improving the Simpson Diversity Index significantly over the original SD 1.5.
%
%
For qualitative comparison, we visualized $10$ images generated by the original diffusion model and the diffusion model after mDPO finetuning in \appref{app:generated_images_examples} using the testing prompt. 
We observe that the finetuned model generates more balanced and diversified images in terms of race and gender than the baseline.
%

\subsection{Improving Quality of Creative Fiction Generation}\label{subsec:exp_fiction}
Fiction generation represents a crucial aspect of the broader field of creative writing, and 
serves as an essential benchmark for evaluating the capabilities of 
LLMs~\citep{gomez2023confederacy, mohammadi2024creativity}.
Despite the proficiency of current LLMs in crafting creative narratives, a significant 
challenge remains in ensuring a diverse representation of 
genres~\citep{patel2024swag, wang2024weaver} while maintaining high writing quality.
Ideally, when prompted with a consistent fiction topic, LLMs should be capable of 
generating distinctly different stories across various genres, closely aligning with user 
intentions, even in the absence of explicit genre specifications in the prompt.
This diversity should extend beyond mere lexical variations, incorporating unique 
genre-specific elements in each narrative output.
In this section, we assess whether the proposed mDPO and mIPO can help fine-tune LLMs to 
achieve higher quality and more diverse fiction generations.
Similar to the previous sections, we compare mDPO and mIPO, with their original baselines.

\bfsection{Preference data construction and finetuning.}
We utilized over 8,000 publically available 
prompts\footnote{https://draftsparks.com/browse/fiction-prompts/} for fiction generation in 
constructing our preference dataset.
Each prompt was submitted five times to both the Llama 2-7B and Llama 3-8B models. 
For prompts to Llama 3-8B, we explicitly defined the genres including fantasy, sci-fi, mystery,
romance, and horror, to enhance genre diversity, while keeping the prompts for Llama 2-7B 
unchanged.
The responses from Llama 3-8B comprised the chosen set in our preference dataset, whereas the 
responses from Llama 2-7B~\citep{touvron2023llama} formed the rejected set.
When finetuning with the single-sample DPO and IPO baselines, one response from each 
chosen and rejected set was selected.
Conversely, with mDPO and mIPO, $k$ responses were selected accordingly.
We finetuned the Llama 3-8B model using this preference dataset.
More details, including finetuning parameters and specific prompts used during data generation, are illustrated in \appref{app:data_prompts} and~\ref{app:ft_parameters}.

\bfsection{Evaluation metrics and results.}
\begin{figure*}[t]
    \centering
    \includegraphics[width=0.45\textwidth]{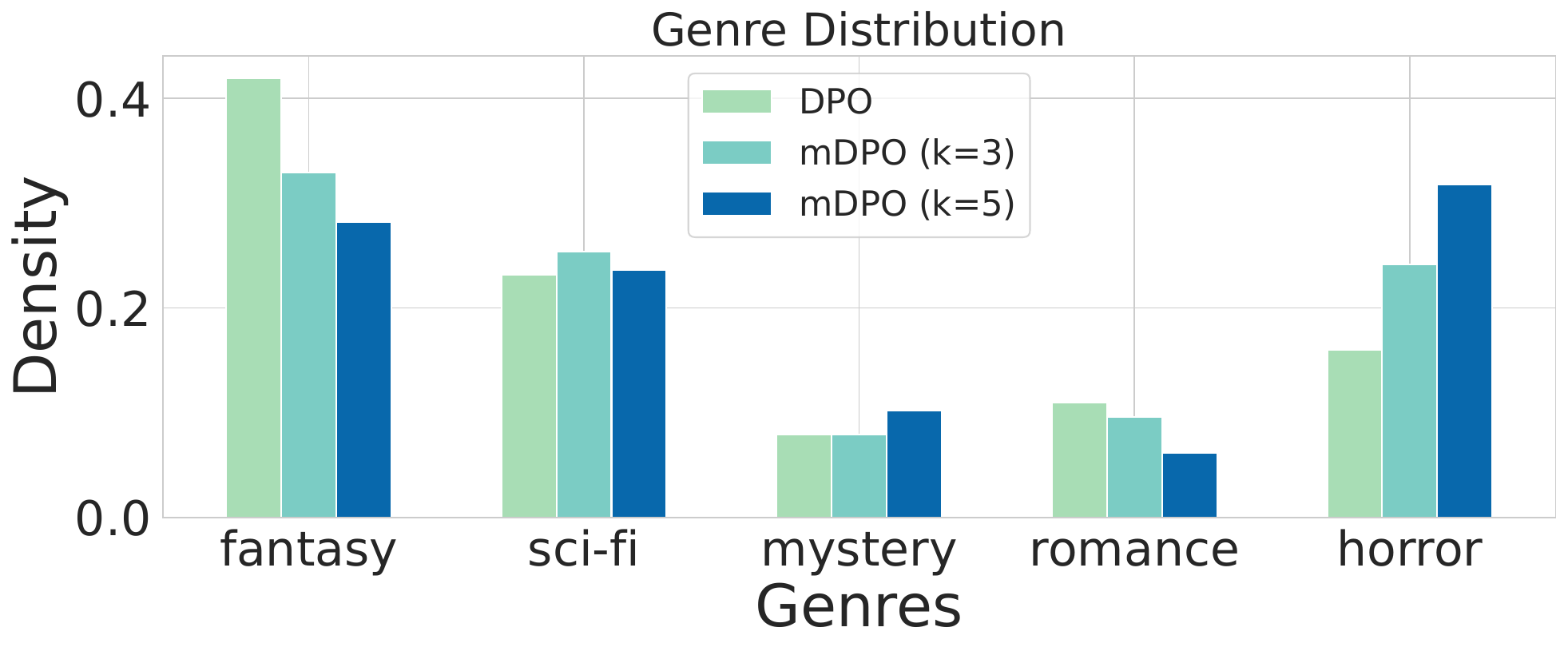}
    \includegraphics[width=0.45\textwidth]{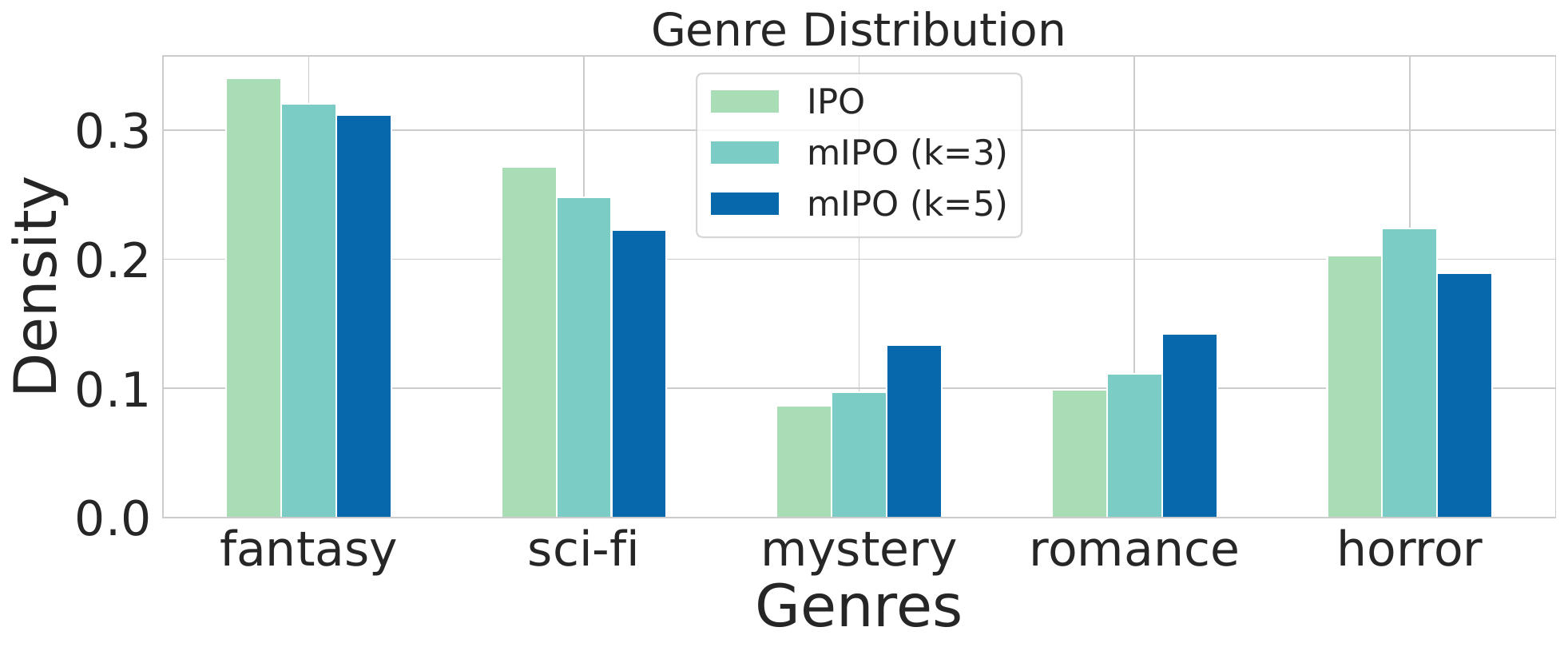}
    \vspace{-0.1in}
    \caption{
        Diversity in fiction generation using the same model (Llama 3-8B) finetuned with 
        different approaches, assessed through genre distribution.
        \textbf{Left:} mDPO and DPO.
        \textbf{Right:} mIPO and IPO. 
        The KL-divergences between different genre distributions and the uniform distribution 
        are (smaller is better, and the best ones are highlighted in \textbf{bold font}.)
        DPO: $0.170$; \underline{mDPO ($k=3$): $\mathbf{0.126}$}; mDPO ($k=5$): $0.142$;
        IPO: $0.125$; mIPO ($k=3$): $0.094$;\underline{ mIPO ($k=5$): $\mathbf{0.050}$}.
    }
    \label{fig:fiction_diversity_llm}
\end{figure*}
\begin{figure*}[t]
    \centering
    \includegraphics[width=0.9\textwidth]{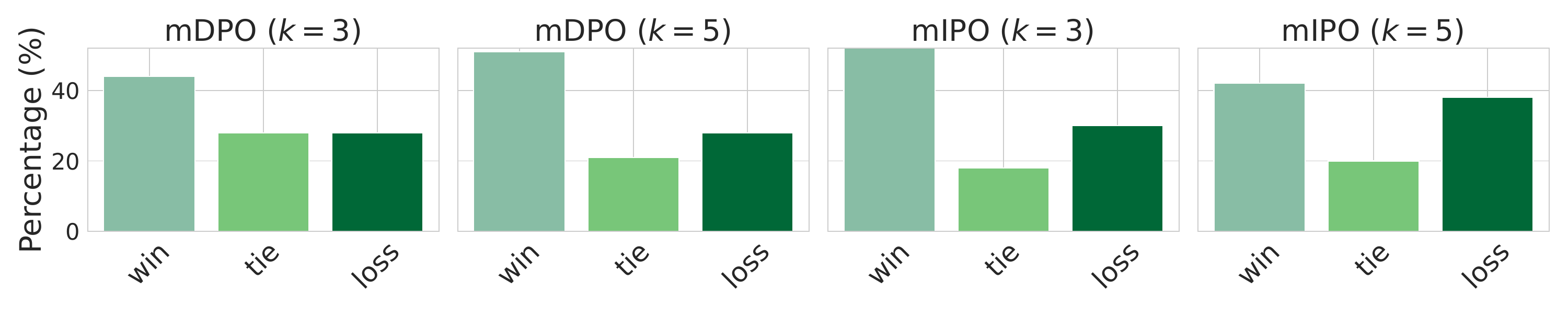}
    \vspace{-0.1in}
    \caption{mDPO and mIPO versus DPO and IPO on Alpaca Evals using GPT-4o evaluation.}
    \label{fig:mDPO-mIPO-vs-DPO-IPO}
\end{figure*}
We primarily evaluated the fine-tuned Llama 3-8B using two metrics: writing quality and genre diversity.
For assessing writing quality, we employed the evaluation rubrics proposed by~\citet{chakrabarty2024art}, which measure fiction quality across four dimensions: fluency, flexibility, originality, and elaboration.
Each dimension is specifically evaluated using a total of 14 Yes/No questions. To measure the diversity, we measure entropy of the generated genre diversity and lexical diversity, such as distinct-n~\citep{li2015diversity}.
In our experiment, we combined all the questions and used GPT-4o~\citep{achiam2023gpt} as the judge.
Each ``Yes" was scored as 1 and each ``No" as 0, with the maximum possible score for a generated fiction being 14.
We then compared the final scores of Llama 3-8B finetuned with mDPO, mIPO, and their single-sample baselines.

Specific evaluation prompts are shown in \appref{app:quality_eval}.
We assess genre diversity using both lexical-level metrics, including the number of distinct unigrams (distinct-1) and bigrams (distinct-2)~\citep{li2015diversity}, as well as semantic-level genre distribution identified by GPT-4o.
Details regarding the prompt used for genre identification are provided in \appref{app:diversity_eval}.

\begin{wraptable}{r}{0.6\textwidth}
\vspace{-0.15in}
    \centering
    \caption{
        Quality comparison of creative fiction writing.
        The maximum score a model can achieve is 14.
    }
    \label{tab:fiction_quality}
    \setlength{\tabcolsep}{4pt}
    \resizebox{\linewidth}{!}{%
    \begin{tabular}{lcp{12mm}p{12mm}|cp{12mm}p{12mm}}
        \toprule
        &DPO &mDPO $(k=3)$ &mDPO $(k=5)$
        &IPO &mIPO $(k=3)$ &mIPO $(k=5)$ \\
        \midrule
        Quality  &10.570 &10.671 &\textbf{11.483} &10.623 &\textbf{11.190} &10.806   \\
        \bottomrule
        \end{tabular}
    }
\end{wraptable}
%
%
The results of the fiction writing quality evaluations are presented in \tabref{tab:fiction_quality}.
It is evident that the proposed multi-sample approaches have a clear advantage over the single-sample baselines, despite the more comprehensive and well-rounded challenge of enhancing the creative writing capabilities~\citep{mohammadi2024creativity}.
The diversity evaluation results, based on the number of unique unigrams and bigrams as well as genre distributions, are shown in \tabref{tab:fiction_distinct_n} and 
\figref{fig:fiction_diversity_llm}, respectively.

\begin{wraptable}{r}{0.6\textwidth}
\vspace{-0.15in}
    \centering
    \caption{
        Comparison of lexical-level diversity between the proposed mDPO, mIPO, and baseline methods.
    }
    \label{tab:fiction_distinct_n}
    \setlength{\tabcolsep}{4pt}
    \resizebox{\linewidth}{!}{%
        \begin{tabular}{lcp{12mm}p{12mm}|cp{12mm}p{12mm}}
        \toprule
        &DPO &mDPO $(k=3)$ &mDPO $(k=5)$
        &IPO &mIPO $(k=3)$ &mIPO $(k=5)$ \\
        \midrule
        \textit{distinct-1}~$\uparrow$   &0.025 &0.026 &\textbf{0.027} &0.025 &0.026 &\textbf{0.032}   \\
        \textit{distinct-2}~$\uparrow$   &0.187 &0.189 &\textbf{0.192} &0.189 &0.185 &\textbf{0.209}   \\
        \bottomrule
        \end{tabular}
    }
\end{wraptable}
To better illustrate the differences in diversity in fiction generation, we calculate the KL-divergence between the distributions shown in \figref{fig:fiction_diversity_llm} and the uniform distribution, with the results presented in the caption.
We observe that models finetuned using multi-sample methods exhibit improved diversity at both 
lexical and semantic levels compared to the baselines.

\subsection{Training with Llama3-70B vs. Llama3-8B Generated Preference Data}\label{subsec:exp_llama}
Synthetic data is becoming increasingly prevalent due to its scalability and lower cost compared to human labeling~\citep{llama3modelcard,adler2024nemotron,liu2024best}. Our last set of experiments aims to demonstrate the efficacy of our method in handling synthetic datasets with inherent label noise. This is particularly useful in iterative alignment scenarios, where we train the model iteratively to enhance alignment performance. In these cases, we might have two qualitatively good models from prior iterations, but one may outperform the other in average. However, when examining individual responses, the performance of these models may not always be consistently better than the other. In such situations, mDPO or mIPO algorithms should be more robust than DPO or IPO. This intuition is supported by the following remarks and subsequent experiments.

\begin{restatable}{rem}{meanvssingle}
For two independent and bounded random variables $X$ and $Y$, if $\expect[X] - \expect[Y] > 0$, then the probability $p(\sum_{i=1}^k X_i > \sum_{i=1}^k Y_i)$ will (approximately) increase as the sample size $k$ increases. Therefore, the multi-sample pairwise comparison is (approximately) more likely to be correct than the single-sample pairwise comparison. In the asymptotic setting~($k\uparrow\infty$), the probability will converge to $1$ as $\expect[X] > \expect[Y]$.
\end{restatable}
\begin{figure*}[t]
    \centering
    \includegraphics[width=0.3\textwidth]{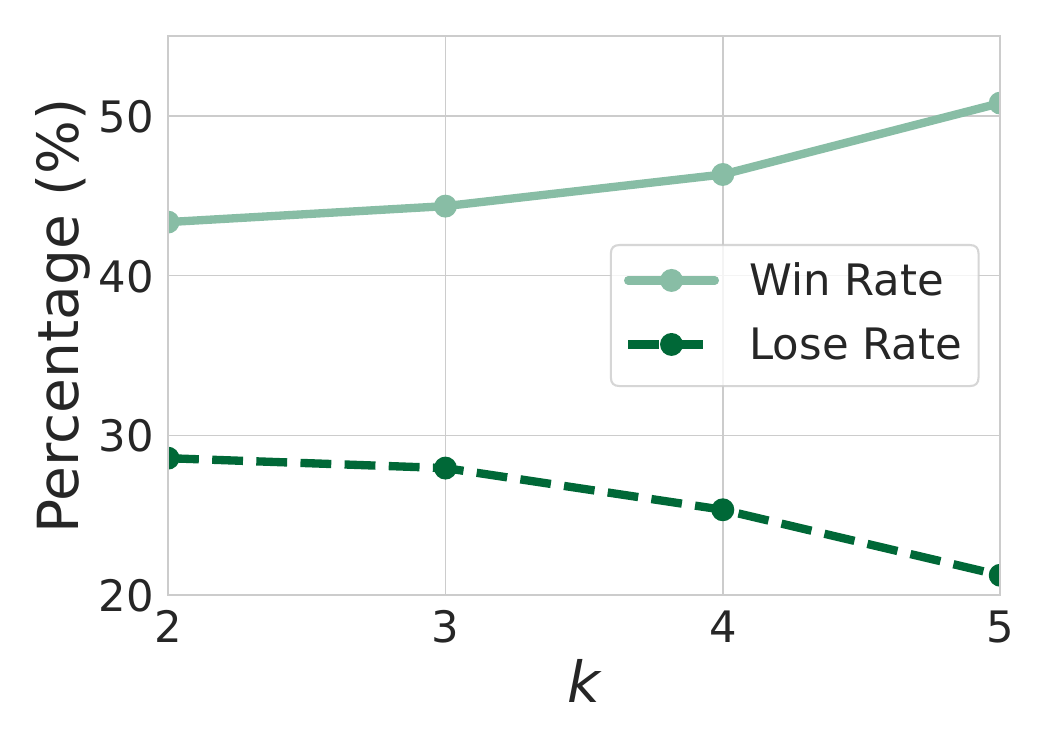}
    \includegraphics[width=0.3\textwidth]{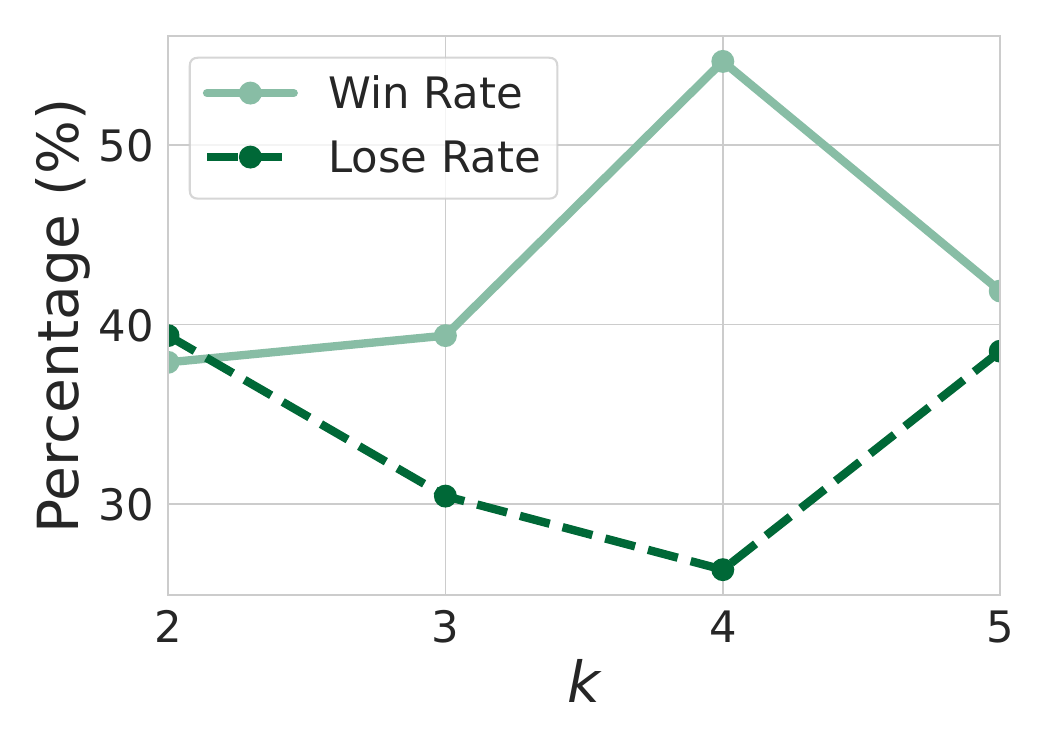}
    \includegraphics[width=0.3\textwidth]{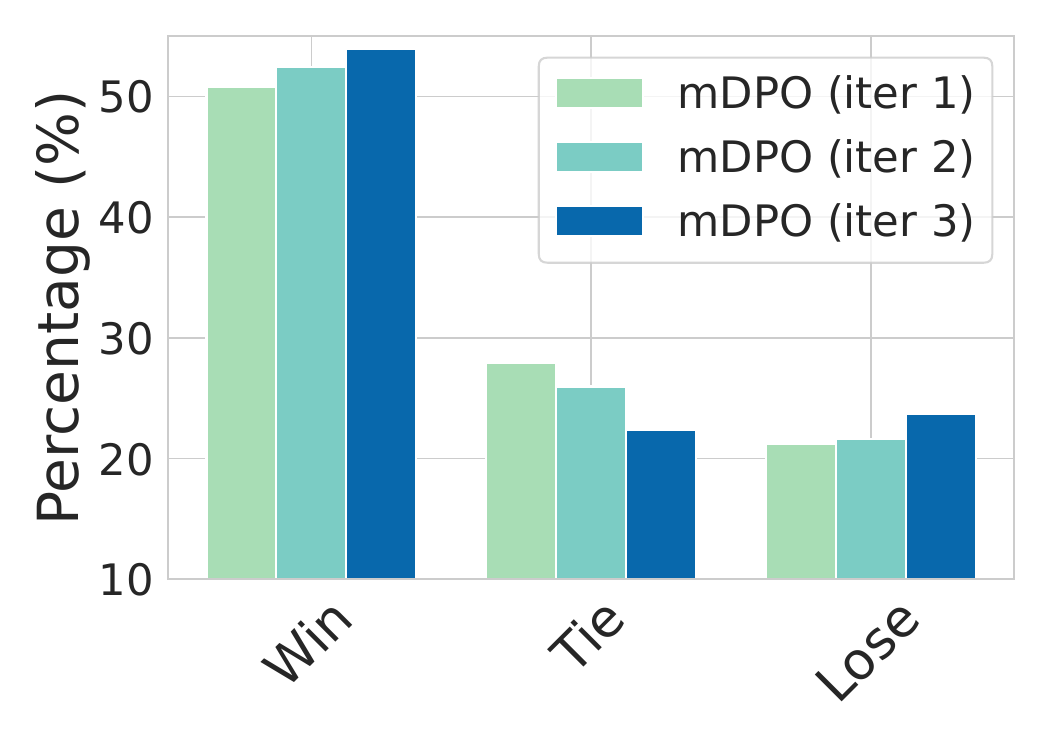}
    \vspace{-0.15in}
    \caption{\textbf{Left:} The impact of $k$ for mDPO evaluated using GPT-4o;  \textbf{Middle:} The impact of $k$ for mIPO evaluated using GPT-4o; and \textbf{Right:} Iterative improvement with mDPO.}
    \label{fig:ablate-k-data-and-iterative}
\end{figure*}
\begin{figure*}
    \centering
    \includegraphics[width=1.0\textwidth]{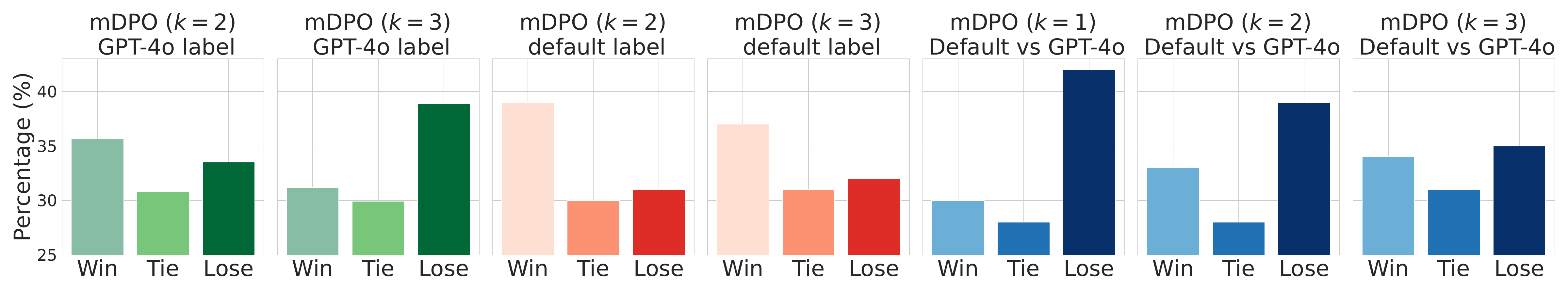}
    \vspace{-0.25in}
    \caption{ 
        Evaluation of multi- and single-sample comparison under varying label conditions. 
        The left two plots ({\color{green}green}) depict performance in a noise-free setting using GPT-4o labels, the middle two plots ({\color{red}red}) show results with default (noisy) labeling, and the right three plots ({\color{blue}blue}) compare the performance gap between noise-free and noisy settings.
    }
    \label{fig:ablate-labeling}
\vspace{-0.1in}
\end{figure*}
To empirically validate our intuition and demonstrate the effectiveness of our method, we conduct experiments using the Alpaca benchmark~\citep{dubois2024alpacafarm} with the Llama 3-8B base model~\citep{llama3modelcard}.  Initially, we use the instruct versions of Llama 3-8B and Llama 3-70B to generate five responses for each prompt\footnote{We used outputs generated by Llama models instead of the original outputs from the Alpaca dataset.}. We then select the responses generated by the 70B model as the chosen group and those from the 7B model as the rejected group. We follow the Alpaca training procedures for SFT and RLHF. We first finetune the base model on the Alpaca SFT dataset, and then apply mDPO or mIPO with $k=1,2, ..., 5$ on the synthetic data.

\bfsection{Results with varying $k$}. 
The results for varying group sizes $k$ are shown in \figref{fig:mDPO-mIPO-vs-DPO-IPO}, where win rates are computed against models trained using DPO and IPO with the same dataset. 
The results indicate that both mDPO and mIPO significantly outperform the baselines. 
Additionally, we perform an ablation study to examine the effect of $k$ using mDPO and mIPO. 
As illustrated in the leftmost and middle plots of \figref{fig:ablate-k-data-and-iterative}, we observe that as the value of \( k \) increases, the win rates improve while the lose rates decrease for most values of $k$, except for $k=5$ for mIPO.

\textbf{Results on iterative improvement}. Lastly, we conduct experiments to demonstrate the effectiveness of our method in iteratively improving the alignment performance. We use data generated from previous rounds to form the preference data and then apply our method with $k = 5$ for iterative fine-tuning of the model. The results, shown in the rightmost plot of \figref{fig:ablate-k-data-and-iterative}, reveal that with each iteration, our method consistently enhances the win rate compared to the baseline in terms win rates.

\bfsection{Choice between mDPO and DPO?} 
We used GPT-4o to label pairs of sample groups generated by Llama 3-8B and 70B for $k=1,2,3$ 
to simulate a noise-free setting. 
To reduce GPT-4o labeling costs, we sampled 20\% of the original synthetic dataset. 
%
We refer to the dataset with the original preference label as the default label and the 
GPT-4o-generated label as the GPT-4o label.
Our experiments reveal that without labeling noise, multi-sample comparison has no advantage 
over single-sample comparison, as shown in the left two {\color{green}green} plots in 
\figref{fig:ablate-labeling}. 
These plots display the win rates for $k=2$ and $k=3$ compared to $k=1$, with $k=2$ 
showing a slight improvement over $k=1$.
To confirm the impact of potentially noisy labels, we switched to the default labeling. 
In this case, both $k=2$ and $k=3$ outperformed $k=1$, as shown in the middle two 
{\color{red}red} plots in \figref{fig:ablate-labeling}, consistent with our prior results.
Lastly, to quantify effect of label noise, we computed the win rate of models trained using 
default labeling versus GPT-4o labeling for $k=1,2,3$. 
The right three {\color{blue}blue} plots in \figref{fig:ablate-labeling} demonstrate that 
default labeling performs significantly worse than GPT-4o labeling. 
However, as $k$ increases, the gap between win and lose rates narrows, confirming that 
multi-sample comparison is more robust to labeling noise.
In conclusion, multi-sample comparison is much more advantageous with labeling noise, 
while single-sample comparison is best with noise-free labels.
\section{Conclusions}\label{sec:conclusion}
In this paper, we introduced multi-sample DPO (mDPO) and multi-sample IPO (mIPO), which are novel extensions to the existing DAP methods. 
By leveraging multi-sample comparisons, mDPO and mIPO address the limitations of traditional 
single-sample approaches, offering a more robust framework for optimizing collective 
characteristics such as diversity and bias in generative models.
Comprehensive empirical studies demonstrate the effectiveness of the proposed methods across
various domains.
In random number generation (\secref{subsec:exp_rng}), both mDPO and mIPO achieve higher uniformity. 
In text-to-image generation (\secref{subsec:exp_txt2img}), multi-sample optimization enables
notable reductions in gender and race biases, enhancing the overall fairness and representation 
in generated images. 
In creative fiction generation (\secref{subsec:exp_fiction}), both the quality and diversity 
of outputs are significantly improved.
We further demonstrate that the proposed methods are especially robust against label noise 
(\secref{subsec:exp_llama}).

\clearpage
\newpage
\bibliographystyle{assets/plainnat}
\bibliography{paper}

\begin{thebibliography}{61}
\providecommand{\natexlab}[1]{#1}
\providecommand{\url}[1]{\texttt{#1}}
\expandafter\ifx\csname urlstyle\endcsname\relax
  \providecommand{\doi}[1]{doi: #1}\else
  \providecommand{\doi}{doi: \begingroup \urlstyle{rm}\Url}\fi

\bibitem[Achiam et~al.(2023)Achiam, Adler, Agarwal, Ahmad, Akkaya, Aleman, Almeida, Altenschmidt, Altman, Anadkat, et~al.]{achiam2023gpt}
Josh Achiam, Steven Adler, Sandhini Agarwal, Lama Ahmad, Ilge Akkaya, Florencia~Leoni Aleman, Diogo Almeida, Janko Altenschmidt, Sam Altman, Shyamal Anadkat, et~al.
\newblock Gpt-4 technical report.
\newblock \emph{arXiv preprint arXiv:2303.08774}, 2023.

\bibitem[Adler et~al.(2024)Adler, Agarwal, Aithal, Anh, Bhattacharya, Brundyn, Casper, Catanzaro, Clay, Cohen, et~al.]{adler2024nemotron}
Bo~Adler, Niket Agarwal, Ashwath Aithal, Dong~H Anh, Pallab Bhattacharya, Annika Brundyn, Jared Casper, Bryan Catanzaro, Sharon Clay, Jonathan Cohen, et~al.
\newblock Nemotron-4 340b technical report.
\newblock \emph{arXiv preprint arXiv:2406.11704}, 2024.

\bibitem[Ananya(2024)]{ananya2024ai}
Ananya.
\newblock Ai image generators often give racist and sexist results: can they be fixed?, 2024.

\bibitem[Anthropic(2024)]{anthropic2024claude}
Anthropic.
\newblock Claude 3.5 sonnet model card addendum.
\newblock 2024.
\newblock \url{https://www-cdn.anthropic.com/fed9cc193a14b84131812372d8d5857f8f304c52/Model_Card_Claude_3_Addendum.pdf}.

\bibitem[Azar et~al.(2024)Azar, Guo, Piot, Munos, Rowland, Valko, and Calandriello]{azar2024general}
Mohammad~Gheshlaghi Azar, Zhaohan~Daniel Guo, Bilal Piot, Remi Munos, Mark Rowland, Michal Valko, and Daniele Calandriello.
\newblock A general theoretical paradigm to understand learning from human preferences.
\newblock In \emph{International Conference on Artificial Intelligence and Statistics}, pages 4447--4455. PMLR, 2024.

\bibitem[Bai et~al.(2023)Bai, Bai, Chu, Cui, Dang, Deng, Fan, Ge, Han, Huang, et~al.]{bai2023qwen}
Jinze Bai, Shuai Bai, Yunfei Chu, Zeyu Cui, Kai Dang, Xiaodong Deng, Yang Fan, Wenbin Ge, Yu~Han, Fei Huang, et~al.
\newblock Qwen technical report.
\newblock \emph{arXiv preprint arXiv:2309.16609}, 2023.

\bibitem[Bai et~al.(2022)Bai, Jones, Ndousse, Askell, Chen, DasSarma, Drain, Fort, Ganguli, Henighan, et~al.]{bai2022training}
Yuntao Bai, Andy Jones, Kamal Ndousse, Amanda Askell, Anna Chen, Nova DasSarma, Dawn Drain, Stanislav Fort, Deep Ganguli, Tom Henighan, et~al.
\newblock Training a helpful and harmless assistant with reinforcement learning from human feedback.
\newblock \emph{arXiv preprint arXiv:2204.05862}, 2022.

\bibitem[Bi et~al.(2024)Bi, Chen, Chen, Chen, Dai, Deng, Ding, Dong, Du, Fu, et~al.]{bi2024deepseek}
Xiao Bi, Deli Chen, Guanting Chen, Shanhuang Chen, Damai Dai, Chengqi Deng, Honghui Ding, Kai Dong, Qiushi Du, Zhe Fu, et~al.
\newblock Deepseek llm: Scaling open-source language models with longtermism.
\newblock \emph{arXiv preprint arXiv:2401.02954}, 2024.

\bibitem[Black et~al.(2023)Black, Janner, Du, Kostrikov, and Levine]{black2023training}
Kevin Black, Michael Janner, Yilun Du, Ilya Kostrikov, and Sergey Levine.
\newblock Training diffusion models with reinforcement learning.
\newblock \emph{arXiv preprint arXiv:2305.13301}, 2023.

\bibitem[Bubeck et~al.(2023)Bubeck, Chandrasekaran, Eldan, Gehrke, Horvitz, Kamar, Lee, Lee, Li, Lundberg, et~al.]{bubeck2023sparks}
S{\'e}bastien Bubeck, Varun Chandrasekaran, Ronen Eldan, Johannes Gehrke, Eric Horvitz, Ece Kamar, Peter Lee, Yin~Tat Lee, Yuanzhi Li, Scott Lundberg, et~al.
\newblock Sparks of artificial general intelligence: Early experiments with gpt-4.
\newblock \emph{arXiv preprint arXiv:2303.12712}, 2023.

\bibitem[Chakrabarty et~al.(2024)Chakrabarty, Laban, Agarwal, Muresan, and Wu]{chakrabarty2024art}
Tuhin Chakrabarty, Philippe Laban, Divyansh Agarwal, Smaranda Muresan, and Chien-Sheng Wu.
\newblock Art or artifice? large language models and the false promise of creativity.
\newblock In \emph{Proceedings of the CHI Conference on Human Factors in Computing Systems}, pages 1--34, 2024.

\bibitem[Chen et~al.(2024)Chen, Du, Wen, Zhou, Cui, Weng, Tu, Wang, Tong, Huang, et~al.]{chen2024mj}
Zhaorun Chen, Yichao Du, Zichen Wen, Yiyang Zhou, Chenhang Cui, Zhenzhen Weng, Haoqin Tu, Chaoqi Wang, Zhengwei Tong, Qinglan Huang, et~al.
\newblock Mj-bench: Is your multimodal reward model really a good judge for text-to-image generation?
\newblock \emph{arXiv preprint arXiv:2407.04842}, 2024.

\bibitem[Christiano et~al.(2017)Christiano, Leike, Brown, Martic, Legg, and Amodei]{christiano2017deep}
Paul~F Christiano, Jan Leike, Tom Brown, Miljan Martic, Shane Legg, and Dario Amodei.
\newblock Deep reinforcement learning from human preferences.
\newblock \emph{Advances in neural information processing systems}, 30, 2017.

\bibitem[Dai et~al.(2023)Dai, Hou, Ma, Tsai, Wang, Wang, Zhang, Vandenhende, Wang, Dubey, et~al.]{dai2023emu}
Xiaoliang Dai, Ji~Hou, Chih-Yao Ma, Sam Tsai, Jialiang Wang, Rui Wang, Peizhao Zhang, Simon Vandenhende, Xiaofang Wang, Abhimanyu Dubey, et~al.
\newblock Emu: Enhancing image generation models using photogenic needles in a haystack.
\newblock \emph{arXiv preprint arXiv:2309.15807}, 2023.

\bibitem[Dong et~al.(2023)Dong, Xiong, Goyal, Zhang, Chow, Pan, Diao, Zhang, Shum, and Zhang]{dong2023raft}
Hanze Dong, Wei Xiong, Deepanshu Goyal, Yihan Zhang, Winnie Chow, Rui Pan, Shizhe Diao, Jipeng Zhang, Kashun Shum, and Tong Zhang.
\newblock Raft: Reward ranked finetuning for generative foundation model alignment.
\newblock \emph{arXiv preprint arXiv:2304.06767}, 2023.

\bibitem[Dubois et~al.(2024)Dubois, Li, Taori, Zhang, Gulrajani, Ba, Guestrin, Liang, and Hashimoto]{dubois2024alpacafarm}
Yann Dubois, Chen~Xuechen Li, Rohan Taori, Tianyi Zhang, Ishaan Gulrajani, Jimmy Ba, Carlos Guestrin, Percy~S Liang, and Tatsunori~B Hashimoto.
\newblock Alpacafarm: A simulation framework for methods that learn from human feedback.
\newblock \emph{Advances in Neural Information Processing Systems}, 36, 2024.

\bibitem[Dunlap et~al.(2024)Dunlap, Zhang, Wang, Zhong, Darrell, Steinhardt, Gonzalez, and Yeung-Levy]{dunlap2024describing}
Lisa Dunlap, Yuhui Zhang, Xiaohan Wang, Ruiqi Zhong, Trevor Darrell, Jacob Steinhardt, Joseph~E Gonzalez, and Serena Yeung-Levy.
\newblock Describing differences in image sets with natural language.
\newblock In \emph{Proceedings of the IEEE/CVF Conference on Computer Vision and Pattern Recognition}, pages 24199--24208, 2024.

\bibitem[Eloundou et~al.(2023)Eloundou, Manning, Mishkin, and Rock]{eloundou2023gpts}
Tyna Eloundou, Sam Manning, Pamela Mishkin, and Daniel Rock.
\newblock Gpts are gpts: An early look at the labor market impact potential of large language models.
\newblock \emph{arXiv preprint arXiv:2303.10130}, 2023.

\bibitem[Esser et~al.(2024)Esser, Kulal, Blattmann, Entezari, M{\"u}ller, Saini, Levi, Lorenz, Sauer, Boesel, et~al.]{esser2024scaling}
Patrick Esser, Sumith Kulal, Andreas Blattmann, Rahim Entezari, Jonas M{\"u}ller, Harry Saini, Yam Levi, Dominik Lorenz, Axel Sauer, Frederic Boesel, et~al.
\newblock Scaling rectified flow transformers for high-resolution image synthesis.
\newblock In \emph{Forty-first International Conference on Machine Learning}, 2024.

\bibitem[Gemini et~al.(2023)Gemini, Anil, Borgeaud, Wu, Alayrac, Yu, Soricut, Schalkwyk, Dai, Hauth, et~al.]{team2023gemini}
Team Gemini, Rohan Anil, Sebastian Borgeaud, Yonghui Wu, Jean-Baptiste Alayrac, Jiahui Yu, Radu Soricut, Johan Schalkwyk, Andrew~M Dai, Anja Hauth, et~al.
\newblock Gemini: a family of highly capable multimodal models.
\newblock \emph{arXiv preprint arXiv:2312.11805}, 2023.

\bibitem[Go et~al.(2023)Go, Korbak, Kruszewski, Rozen, Ryu, and Dymetman]{go2023aligning}
Dongyoung Go, Tomasz Korbak, Germ{\'a}n Kruszewski, Jos Rozen, Nahyeon Ryu, and Marc Dymetman.
\newblock Aligning language models with preferences through f-divergence minimization.
\newblock \emph{arXiv preprint arXiv:2302.08215}, 2023.

\bibitem[G{\'o}mez-Rodr{\'\i}guez and Williams(2023)]{gomez2023confederacy}
Carlos G{\'o}mez-Rodr{\'\i}guez and Paul Williams.
\newblock A confederacy of models: A comprehensive evaluation of llms on creative writing.
\newblock \emph{arXiv preprint arXiv:2310.08433}, 2023.

\bibitem[hallatore(2023)]{reddit_comment}
hallatore.
\newblock Comment on "stable diffusion: A new approach".
\newblock \url{https://www.reddit.com/r/StableDiffusion/comments/11mulj6/comment/jbkqcyk/}, March 2023.
\newblock Accessed: 2024-06-26.

\bibitem[Ho et~al.(2020)Ho, Jain, and Abbeel]{ho2020denoising}
Jonathan Ho, Ajay Jain, and Pieter Abbeel.
\newblock Denoising diffusion probabilistic models.
\newblock \emph{Advances in neural information processing systems}, 33:\penalty0 6840--6851, 2020.

\bibitem[Khalifa et~al.(2020)Khalifa, Elsahar, and Dymetman]{khalifa2020distributional}
Muhammad Khalifa, Hady Elsahar, and Marc Dymetman.
\newblock A distributional approach to controlled text generation.
\newblock \emph{arXiv preprint arXiv:2012.11635}, 2020.

\bibitem[Leike et~al.(2018)Leike, Krueger, Everitt, Martic, Maini, and Legg]{leike2018scalable}
Jan Leike, David Krueger, Tom Everitt, Miljan Martic, Vishal Maini, and Shane Legg.
\newblock Scalable agent alignment via reward modeling: a research direction.
\newblock \emph{CoRR}, abs/1811.07871, 2018.
\newblock \url{http://arxiv.org/abs/1811.07871}.

\bibitem[Li et~al.(2015)Li, Galley, Brockett, Gao, and Dolan]{li2015diversity}
Jiwei Li, Michel Galley, Chris Brockett, Jianfeng Gao, and Bill Dolan.
\newblock A diversity-promoting objective function for neural conversation models.
\newblock \emph{arXiv preprint arXiv:1510.03055}, 2015.

\bibitem[Li et~al.(2024)Li, Singh, and Grover]{li2024popalign}
Shufan Li, Harkanwar Singh, and Aditya Grover.
\newblock Popalign: Population-level alignment for fair text-to-image generation.
\newblock \emph{arXiv preprint arXiv:2406.19668}, 2024.

\bibitem[Liu et~al.(2024)Liu, Wei, Liu, Si, Zhang, Rao, Zheng, Peng, Yang, Zhou, et~al.]{liu2024best}
Ruibo Liu, Jerry Wei, Fangyu Liu, Chenglei Si, Yanzhe Zhang, Jinmeng Rao, Steven Zheng, Daiyi Peng, Diyi Yang, Denny Zhou, et~al.
\newblock Best practices and lessons learned on synthetic data for language models.
\newblock \emph{arXiv preprint arXiv:2404.07503}, 2024.

\bibitem[Liu et~al.(2023)Liu, Yao, Ton, Zhang, Cheng, Klochkov, Taufiq, and Li]{liu2023trustworthy}
Yang Liu, Yuanshun Yao, Jean-Francois Ton, Xiaoying Zhang, Ruocheng Guo~Hao Cheng, Yegor Klochkov, Muhammad~Faaiz Taufiq, and Hang Li.
\newblock Trustworthy llms: a survey and guideline for evaluating large language models' alignment.
\newblock \emph{arXiv preprint arXiv:2308.05374}, 2023.

\bibitem[Luccioni et~al.(2023)Luccioni, Akiki, Mitchell, and Jernite]{luccioni2023stable}
Alexandra~Sasha Luccioni, Christopher Akiki, Margaret Mitchell, and Yacine Jernite.
\newblock Stable bias: Analyzing societal representations in diffusion models.
\newblock \emph{arXiv preprint arXiv:2303.11408}, 2023.

\bibitem[Melnyk et~al.(2024)Melnyk, Mroueh, Belgodere, Rigotti, Nitsure, Yurochkin, Greenewald, Navratil, and Ross]{melnyk2024distributional}
Igor Melnyk, Youssef Mroueh, Brian Belgodere, Mattia Rigotti, Apoorva Nitsure, Mikhail Yurochkin, Kristjan Greenewald, Jiri Navratil, and Jerret Ross.
\newblock Distributional preference alignment of llms via optimal transport.
\newblock \emph{arXiv preprint arXiv:2406.05882}, 2024.

\bibitem[Meta~AI(2024)]{llama3modelcard}
Llama~Team Meta~AI.
\newblock The llama 3 herd of models.
\newblock 2024.
\newblock \url{https://scontent-sjc3-1.xx.fbcdn.net/v/t39.2365-6/452387774_1036916434819166_4173978747091533306_n.pdf?_nc_cat=104&ccb=1-7&_nc_sid=3c67a6&_nc_ohc=7qSoXLG5aAYQ7kNvgHzeJBv&_nc_ht=scontent-sjc3-1.xx&oh=00_AYC5BfhWhI-JmNF44OqmUhygHAr_yWzA059olF7GBxeZ2w&oe=66AB508D}.

\bibitem[Mohammadi(2024)]{mohammadi2024creativity}
Behnam Mohammadi.
\newblock Creativity has left the chat: The price of debiasing language models.
\newblock \emph{arXiv e-prints}, pages arXiv--2406, 2024.

\bibitem[Ouyang et~al.(2022)Ouyang, Wu, Jiang, Almeida, Wainwright, Mishkin, Zhang, Agarwal, Slama, Ray, et~al.]{ouyang2022training}
Long Ouyang, Jeffrey Wu, Xu~Jiang, Diogo Almeida, Carroll Wainwright, Pamela Mishkin, Chong Zhang, Sandhini Agarwal, Katarina Slama, Alex Ray, et~al.
\newblock Training language models to follow instructions with human feedback.
\newblock \emph{Advances in neural information processing systems}, 35:\penalty0 27730--27744, 2022.

\bibitem[Pal et~al.(2024)Pal, Karkhanis, Dooley, Roberts, Naidu, and White]{pal2024smaug}
Arka Pal, Deep Karkhanis, Samuel Dooley, Manley Roberts, Siddartha Naidu, and Colin White.
\newblock Smaug: Fixing failure modes of preference optimisation with dpo-positive.
\newblock \emph{arXiv preprint arXiv:2402.13228}, 2024.

\bibitem[Pang et~al.(2024)Pang, Yuan, Cho, He, Sukhbaatar, and Weston]{pang2024iterative}
Richard~Yuanzhe Pang, Weizhe Yuan, Kyunghyun Cho, He~He, Sainbayar Sukhbaatar, and Jason Weston.
\newblock Iterative reasoning preference optimization, 2024.

\bibitem[Patel et~al.(2024)Patel, El-Refai, Pei, and Li]{patel2024swag}
Zeeshan Patel, Karim El-Refai, Jonathan Pei, and Tianle Li.
\newblock Swag: Storytelling with action guidance.
\newblock \emph{arXiv preprint arXiv:2402.03483}, 2024.

\bibitem[Podell et~al.(2023)Podell, English, Lacey, Blattmann, Dockhorn, M{\"u}ller, Penna, and Rombach]{podell2023sdxl}
Dustin Podell, Zion English, Kyle Lacey, Andreas Blattmann, Tim Dockhorn, Jonas M{\"u}ller, Joe Penna, and Robin Rombach.
\newblock Sdxl: Improving latent diffusion models for high-resolution image synthesis.
\newblock \emph{arXiv preprint arXiv:2307.01952}, 2023.

\bibitem[Rafailov et~al.(2024)Rafailov, Sharma, Mitchell, Manning, Ermon, and Finn]{rafailov2024direct}
Rafael Rafailov, Archit Sharma, Eric Mitchell, Christopher~D Manning, Stefano Ermon, and Chelsea Finn.
\newblock Direct preference optimization: Your language model is secretly a reward model.
\newblock \emph{Advances in Neural Information Processing Systems}, 36, 2024.

\bibitem[Rasley et~al.(2020)Rasley, Rajbhandari, Ruwase, and He]{rasley2020deepspeed}
Jeff Rasley, Samyam Rajbhandari, Olatunji Ruwase, and Yuxiong He.
\newblock Deepspeed: System optimizations enable training deep learning models with over 100 billion parameters.
\newblock In \emph{Proceedings of the 26th ACM SIGKDD International Conference on Knowledge Discovery \& Data Mining}, pages 3505--3506, 2020.

\bibitem[Rombach et~al.(2022)Rombach, Blattmann, Lorenz, Esser, and Ommer]{Rombach_2022_CVPR}
Robin Rombach, Andreas Blattmann, Dominik Lorenz, Patrick Esser, and Bj\"orn Ommer.
\newblock High-resolution image synthesis with latent diffusion models.
\newblock In \emph{Proceedings of the IEEE/CVF Conference on Computer Vision and Pattern Recognition (CVPR)}, pages 10684--10695, June 2022.

\bibitem[Santurkar et~al.(2023)Santurkar, Durmus, Ladhak, Lee, Liang, and Hashimoto]{santurkar2023whose}
Shibani Santurkar, Esin Durmus, Faisal Ladhak, Cinoo Lee, Percy Liang, and Tatsunori Hashimoto.
\newblock Whose opinions do language models reflect?
\newblock In \emph{International Conference on Machine Learning}, pages 29971--30004. PMLR, 2023.

\bibitem[Simpson(1949)]{simpson1949measurement}
Edward~H Simpson.
\newblock Measurement of diversity.
\newblock \emph{nature}, 163\penalty0 (4148):\penalty0 688--688, 1949.

\bibitem[Sohl-Dickstein et~al.(2015)Sohl-Dickstein, Weiss, Maheswaranathan, and Ganguli]{sohl2015deep}
Jascha Sohl-Dickstein, Eric Weiss, Niru Maheswaranathan, and Surya Ganguli.
\newblock Deep unsupervised learning using nonequilibrium thermodynamics.
\newblock In \emph{International conference on machine learning}, pages 2256--2265. PMLR, 2015.

\bibitem[Tay et~al.(2022)Tay, Wei, Chung, Tran, So, Shakeri, Garcia, Zheng, Rao, Chowdhery, et~al.]{tay2022transcending}
Yi~Tay, Jason Wei, Hyung~Won Chung, Vinh~Q Tran, David~R So, Siamak Shakeri, Xavier Garcia, Huaixiu~Steven Zheng, Jinfeng Rao, Aakanksha Chowdhery, et~al.
\newblock Transcending scaling laws with 0.1\% extra compute.
\newblock \emph{arXiv preprint arXiv:2210.11399}, 2022.

\bibitem[Touvron et~al.(2023)Touvron, Martin, Stone, Albert, Almahairi, Babaei, Bashlykov, Batra, Bhargava, Bhosale, et~al.]{touvron2023llama}
Hugo Touvron, Louis Martin, Kevin Stone, Peter Albert, Amjad Almahairi, Yasmine Babaei, Nikolay Bashlykov, Soumya Batra, Prajjwal Bhargava, Shruti Bhosale, et~al.
\newblock Llama 2: Open foundation and fine-tuned chat models.
\newblock \emph{arXiv preprint arXiv:2307.09288}, 2023.

\bibitem[Wallace et~al.(2023)Wallace, Dang, Rafailov, Zhou, Lou, Purushwalkam, Ermon, Xiong, Joty, and Naik]{wallace2023diffusion}
Bram Wallace, Meihua Dang, Rafael Rafailov, Linqi Zhou, Aaron Lou, Senthil Purushwalkam, Stefano Ermon, Caiming Xiong, Shafiq Joty, and Nikhil Naik.
\newblock Diffusion model alignment using direct preference optimization, 2023.

\bibitem[Wang et~al.(2023)Wang, Jiang, Yang, Liu, and Chen]{wang2023beyond}
Chaoqi Wang, Yibo Jiang, Chenghao Yang, Han Liu, and Yuxin Chen.
\newblock Beyond reverse kl: Generalizing direct preference optimization with diverse divergence constraints.
\newblock \emph{arXiv preprint arXiv:2309.16240}, 2023.

\bibitem[Wang et~al.(2024)Wang, Chen, Jia, Wang, Fang, Wang, Gao, Xie, Xu, Dai, et~al.]{wang2024weaver}
Tiannan Wang, Jiamin Chen, Qingrui Jia, Shuai Wang, Ruoyu Fang, Huilin Wang, Zhaowei Gao, Chunzhao Xie, Chuou Xu, Jihong Dai, et~al.
\newblock Weaver: Foundation models for creative writing.
\newblock \emph{arXiv preprint arXiv:2401.17268}, 2024.

\bibitem[Wei et~al.(2022)Wei, Tay, Bommasani, Raffel, Zoph, Borgeaud, Yogatama, Bosma, Zhou, Metzler, et~al.]{wei2022emergent}
Jason Wei, Yi~Tay, Rishi Bommasani, Colin Raffel, Barret Zoph, Sebastian Borgeaud, Dani Yogatama, Maarten Bosma, Denny Zhou, Donald Metzler, et~al.
\newblock Emergent abilities of large language models.
\newblock \emph{arXiv preprint arXiv:2206.07682}, 2022.

\bibitem[Wiher et~al.(2022)Wiher, Meister, and Cotterell]{wiher2022decoding}
Gian Wiher, Clara Meister, and Ryan Cotterell.
\newblock On decoding strategies for neural text generators.
\newblock \emph{Transactions of the Association for Computational Linguistics}, 10:\penalty0 997--1012, 2022.

\bibitem[Yang et~al.(2024)Yang, Chen, and Zhou]{yang2024dense}
Shentao Yang, Tianqi Chen, and Mingyuan Zhou.
\newblock A dense reward view on aligning text-to-image diffusion with preference.
\newblock \emph{arXiv preprint arXiv:2402.08265}, 2024.

\bibitem[Yuan et~al.(2023)Yuan, Yuan, Tan, Wang, Huang, and Huang]{yuan2023rrhf}
Zheng Yuan, Hongyi Yuan, Chuanqi Tan, Wei Wang, Songfang Huang, and Fei Huang.
\newblock Rrhf: Rank responses to align language models with human feedback without tears.
\newblock \emph{arXiv preprint arXiv:2304.05302}, 2023.

\bibitem[Zhang et~al.(2024{\natexlab{a}})Zhang, Schwarzschild, Carlini, Kolter, and Ippolito]{zhang2024forcing}
Yiming Zhang, Avi Schwarzschild, Nicholas Carlini, Zico Kolter, and Daphne Ippolito.
\newblock Forcing diffuse distributions out of language models.
\newblock \emph{arXiv preprint arXiv:2404.10859}, 2024{\natexlab{a}}.

\bibitem[Zhang et~al.(2024{\natexlab{b}})Zhang, Zhao, Chen, Feng, Ding, and Sun]{zhang2024rankclip}
Yiming Zhang, Zhuokai Zhao, Zhaorun Chen, Zhili Feng, Zenghui Ding, and Yining Sun.
\newblock Rankclip: Ranking-consistent language-image pretraining.
\newblock \emph{arXiv preprint arXiv:2404.09387}, 2024{\natexlab{b}}.

\bibitem[Zhao et~al.(2023)Zhao, Joshi, Liu, Khalman, Saleh, and Liu]{zhao2023slic}
Yao Zhao, Rishabh Joshi, Tianqi Liu, Misha Khalman, Mohammad Saleh, and Peter~J Liu.
\newblock Slic-hf: Sequence likelihood calibration with human feedback.
\newblock \emph{arXiv preprint arXiv:2305.10425}, 2023.

\bibitem[Zhong et~al.(2022)Zhong, Snell, Klein, and Steinhardt]{zhong2022describing}
Ruiqi Zhong, Charlie Snell, Dan Klein, and Jacob Steinhardt.
\newblock Describing differences between text distributions with natural language.
\newblock In \emph{International Conference on Machine Learning}, pages 27099--27116. PMLR, 2022.

\bibitem[Zhong et~al.(2023)Zhong, Zhang, Li, Ahn, Klein, and Steinhardt]{zhong2023goal}
Ruiqi Zhong, Peter Zhang, Steve Li, Jinwoo Ahn, Dan Klein, and Jacob Steinhardt.
\newblock Goal driven discovery of distributional differences via language descriptions.
\newblock \emph{Advances in Neural Information Processing Systems}, 36:\penalty0 40204--40237, 2023.

\bibitem[Zhou et~al.(2024{\natexlab{a}})Zhou, Liu, Xu, Iyer, Sun, Mao, Ma, Efrat, Yu, Yu, et~al.]{zhou2024lima}
Chunting Zhou, Pengfei Liu, Puxin Xu, Srinivasan Iyer, Jiao Sun, Yuning Mao, Xuezhe Ma, Avia Efrat, Ping Yu, Lili Yu, et~al.
\newblock Lima: Less is more for alignment.
\newblock \emph{Advances in Neural Information Processing Systems}, 36, 2024{\natexlab{a}}.

\bibitem[Zhou et~al.(2024{\natexlab{b}})Zhou, Fan, Cheng, Yang, Chen, Cui, Wang, Li, Zhang, and Yao]{zhou2024calibrated}
Yiyang Zhou, Zhiyuan Fan, Dongjie Cheng, Sihan Yang, Zhaorun Chen, Chenhang Cui, Xiyao Wang, Yun Li, Linjun Zhang, and Huaxiu Yao.
\newblock Calibrated self-rewarding vision language models.
\newblock \emph{arXiv preprint arXiv:2405.14622}, 2024{\natexlab{b}}.

\end{thebibliography}

\clearpage
\newpage
\beginappendix



\section{Experiment Details}

\subsection{Details on RNG experiment}\label{app:rng-exp-details}
We used the following prompt for the language model to generate random numbers,
\begin{table*}[h]\centering
\vspace{-0.2cm}
    \begin{minipage}{1.0\columnwidth}\vspace{0mm}    \centering
    \begin{tcolorbox} 
    \centering
    \footnotesize
    \begin{tabular}{p{0.97\columnwidth} c}
    \VarSty{ {\bf Prompt for RNG experiment:} } \\
    Generate a random integer uniformly from $a$ to $b$. Please return only the integer.
    \end{tabular}
    \end{tcolorbox}
    \vspace{-8mm}
    \end{minipage}
\end{table*}

We adopt LoRA fine-tuning with a rank of $r=64$ and a weight of $\alpha=128$. The learning rate is set to $1e-4$, and the batch size per device is set to 2. For the multi-sample version, we use $k=5$. For (m)IPO  $\tau=0.1$, and for (m)DPO we use $\beta=0.01$. The coefficient of the NLL loss for mIPO and mDPO were chosen to be $0.1$ and $0.001$, respectively. The model is trained for 500 steps using the Adam optimizer using $8$ GPUs.

\subsection{Details on Text to Image}\label{app:text_to_image}


\textbf{Dataset Construction. } To generate the image for each occupation, we use the following prompt.
\begin{table*}[h]\centering
    \begin{minipage}{1.0\columnwidth}\vspace{0mm}    \centering
    \begin{tcolorbox} 
    \centering
    \footnotesize
    \begin{tabular}{p{0.97\columnwidth} c}
    \VarSty{ {\bf Prompt for Text to Image:} } \\
    A portrait photo of [occupation].
    \end{tabular}
    \end{tcolorbox}
    \vspace{-5mm}
    \end{minipage}
\end{table*}
where the ``[occupation]" will be replaced by the corresponding occupation, such as auditor, secretary etc. We used the DPM++ 2M sampler~\citep{reddit_comment} for 50 inference steps and a cfg of $7$. We further used the following negative prompt to improve the visual quality.
\begin{table*}[h]\centering
    \begin{minipage}{1.0\columnwidth}\vspace{0mm}    \centering
    \begin{tcolorbox} 
    \centering
    \footnotesize
    \begin{tabular}{p{0.97\columnwidth} c}
    \VarSty{ {\bf Negative Prompt for Text to Image:} } \\
    cartoon, sketch, blurry, distorted, deformed, extra limbs, missing limbs, disfigured, bad anatomy, unrealistic, unnatural, asymmetrical, missing fingers, extra fingers, text, watermark, low quality, pixelated, blurry eyes, deformed face, bad proportions, too many fingers, malformed hands, cropped, out of frame, worst quality, low resolution, extra arms, poor lighting, bad lighting, overexposed, ..., painting, drawing, sketch, anime, CGI, 3D render, low poly, stylized, unrealistic lighting, surreal, abstract, fantasy, unrealistic colors, concept art.
    \end{tabular}
    \end{tcolorbox}
    \vspace{-4mm}
    \end{minipage}
\end{table*}
We show some images generated following the method described above in \figref{fig:sd1.5-generated-samples}.

To better understand the gender and race bias in generation. We further visualize the distribution of gender and race of the generated images for each occupation by Stable diffusion 1.5 in \figref{fig:gender-distribution-sd1.5} and~\ref{fig:race-distribution-sd1.5}, respectively. To obtain the result, we sample $100$ images for each occupation prompt, 
and then query the Llama3-v-2.5 to classify the gender and race using the following prompt.
\begin{table*}[h!]\centering
    \begin{minipage}{1.0\columnwidth}\vspace{0mm}    \centering
    \begin{tcolorbox} 
    \centering
    \footnotesize
    \begin{tabular}{p{1.0\columnwidth} c}
    \VarSty{ {\bf Prompt for Llama3-v-2.5:} } \\
    "Please analyze the gender and race of the person in the image. Instructions: 1. For gender, choose from ["female", "male"]. 2. For race, choose from ["asian", "white", "black"]. Respond only with a JSON object in this format: \{"gender": "", "race": "", "confidence": ""\}, where confidence is "high", "medium-high", "medium", "medium-low", or "low". One example of your response: \{"gender": "male", "race": "black", "confidence": "high"\}
    \end{tabular}
    \end{tcolorbox}
    \vspace{-4mm}
    \end{minipage}
\end{table*}

\textbf{Details about training. }
We begin by finetuning the diffusion model through supervised methods for $500$ steps, steps, using a learning rate of $1e-6$ and the AdamW optimizer, with a batch size of $64$ on the selected images. Then, e employ either mDPO or DPO for additional finetuning, maintaining the same learning rate of $1e-6$ and $\beta=1,000$. For mDPO, we set $\beta=1,000$ and $k=6$. The models are trained for $5,000$ steps with a batch size of $64$.
\begin{figure}[t]
    \centering
    \includegraphics[width=0.16\textwidth]{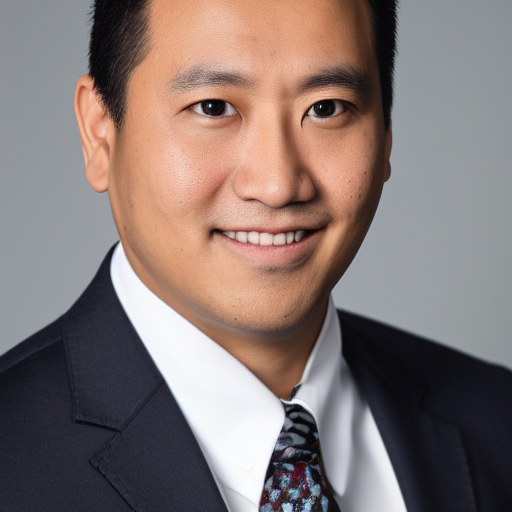}
    \includegraphics[width=0.16\textwidth]{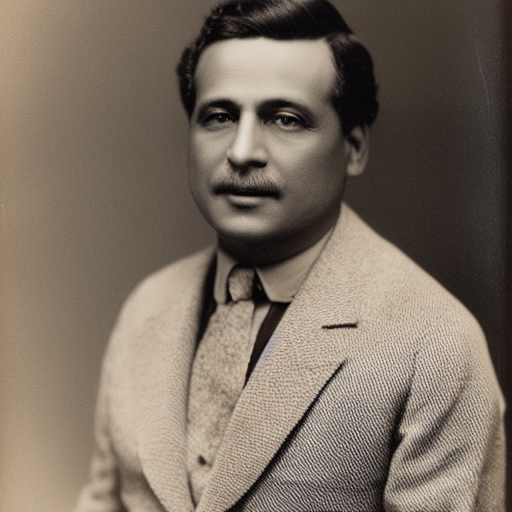}
    \includegraphics[width=0.16\textwidth]{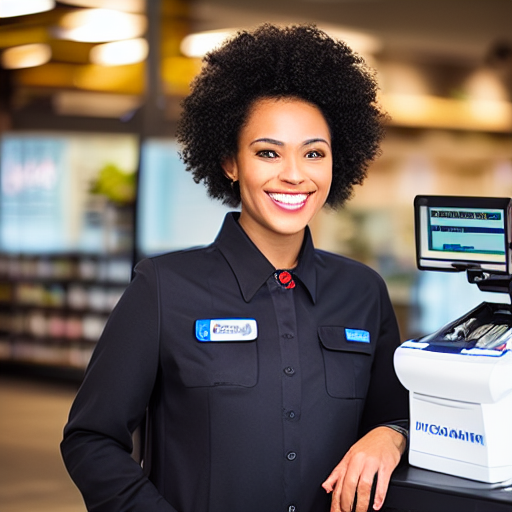}
    \includegraphics[width=0.16\textwidth]{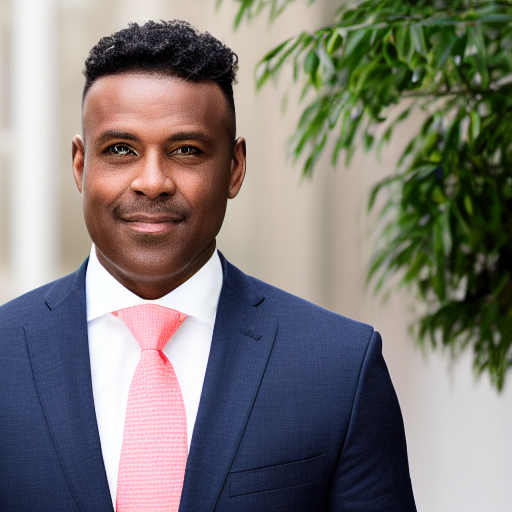}
    \includegraphics[width=0.16\textwidth]{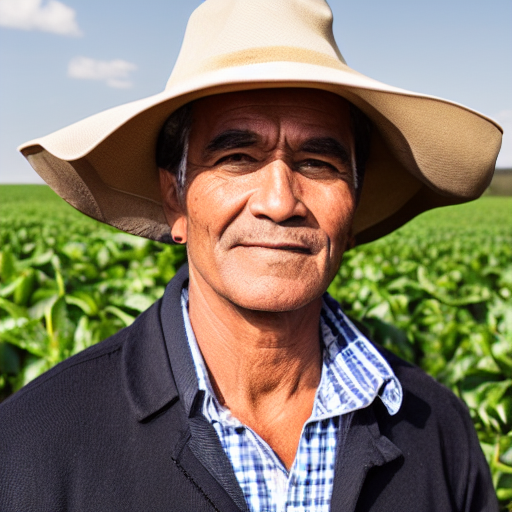}
    \includegraphics[width=0.16\textwidth]{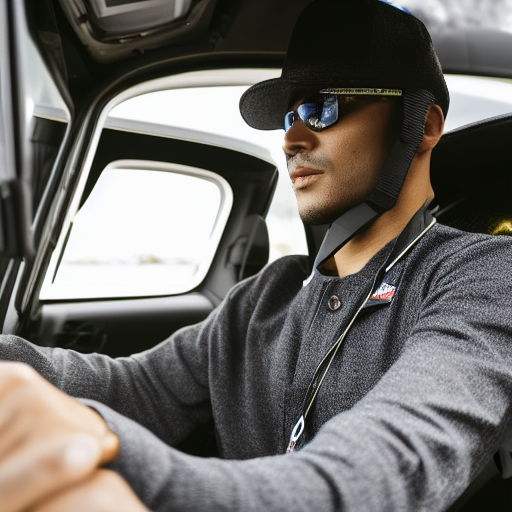}
    \caption{Images generated using SD 1.5 for various occupations (resolution is 512$\times$512).}
    \label{fig:sd1.5-generated-samples}
\end{figure}

\begin{figure}[h]
    \centering
    \includegraphics[width=0.9\textwidth]{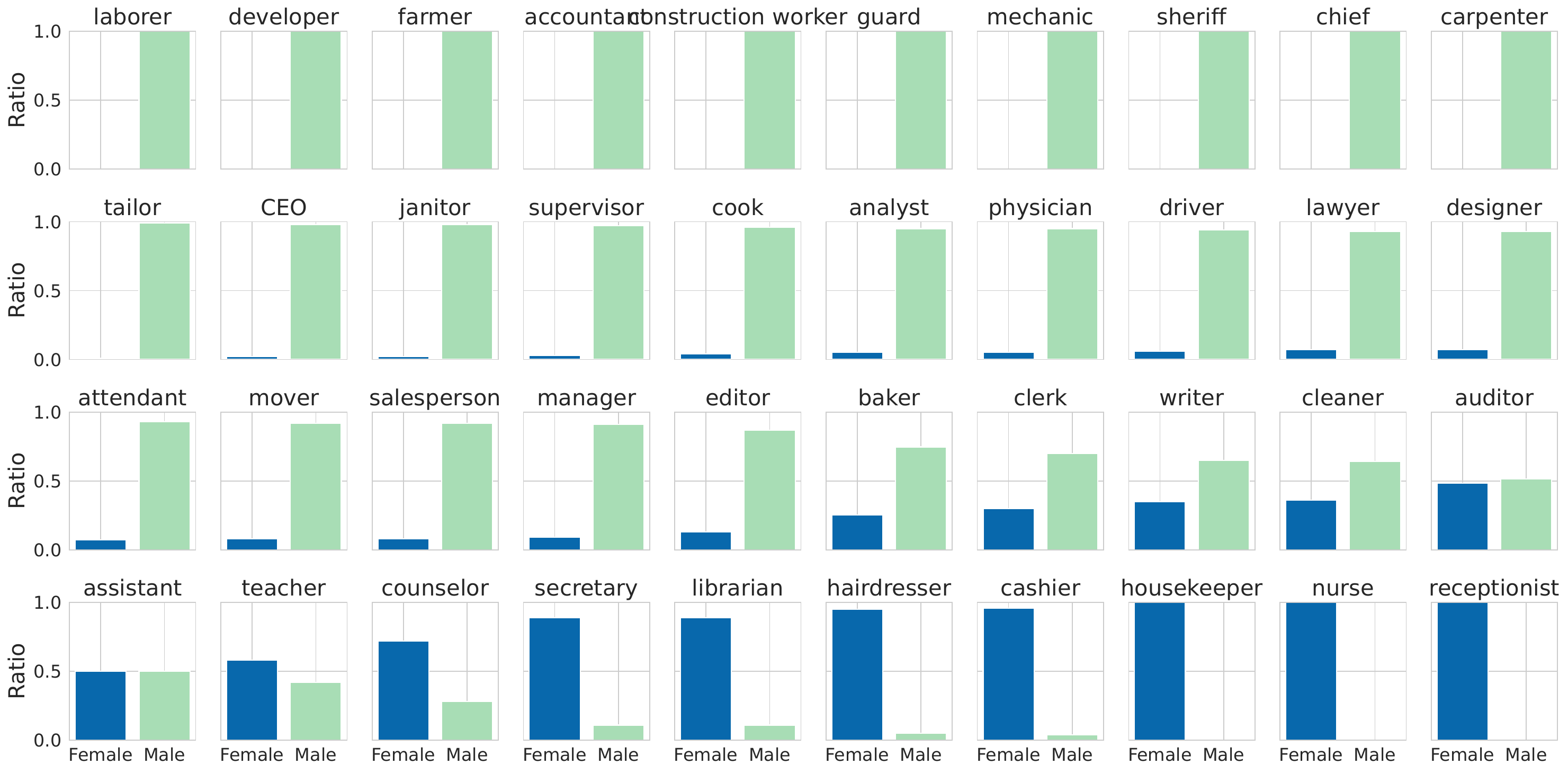}
    \caption{Gender distribution for the images generated by Stable Diffusion 1.5 for each occupations. For most of the occupations, it is either biased towards females or males.}
    \label{fig:gender-distribution-sd1.5}
\end{figure}
\begin{figure}[h]
    \centering
    \includegraphics[width=0.9\textwidth]{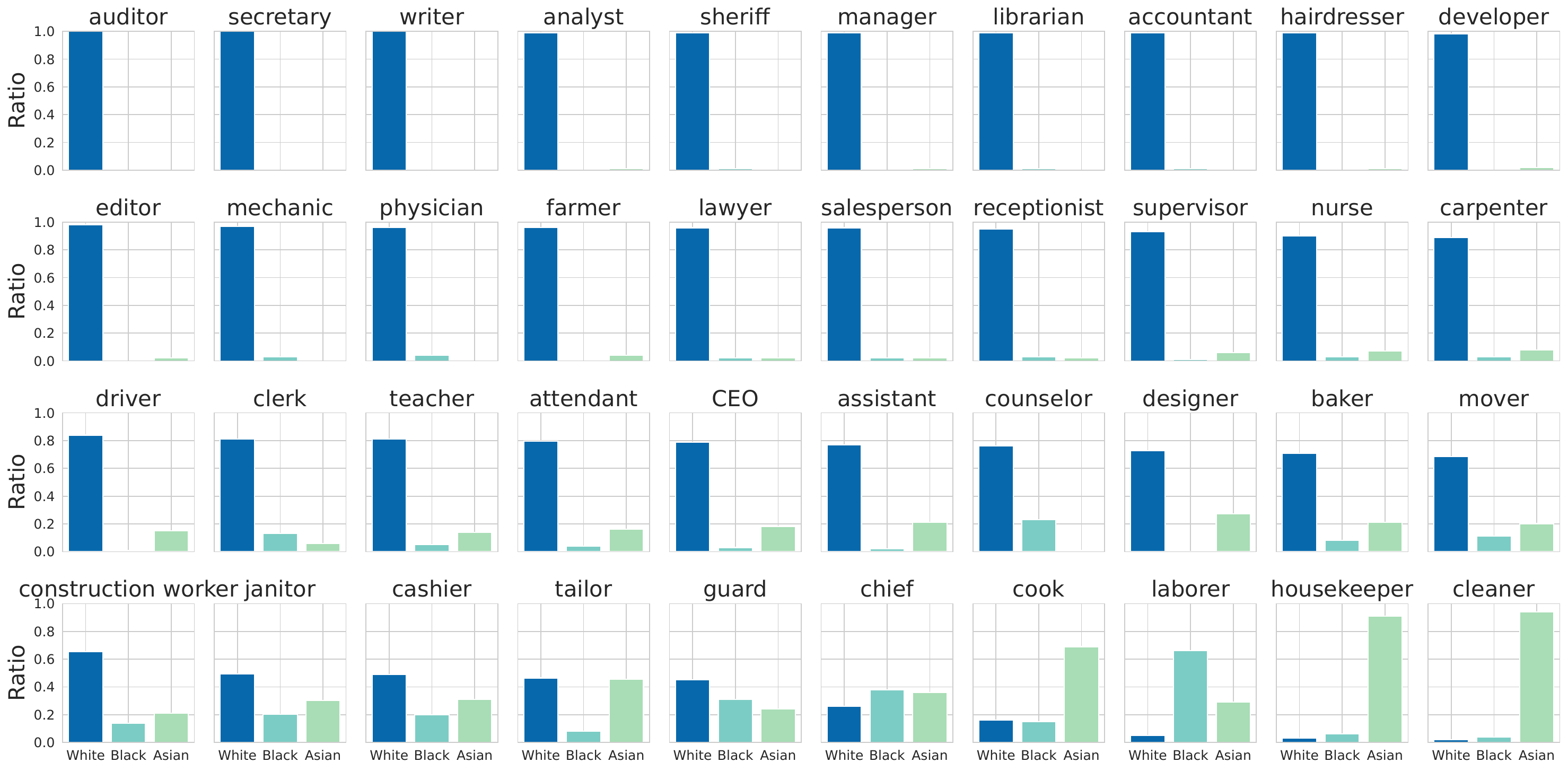}
    \caption{Race distribution for the images generated by Stable Diffusion 1.5 for each occupations. For most of the occupations, it is biased towards white males/females, and the remaining few are biased towards either black males/females or asian males/females.}
    \label{fig:race-distribution-sd1.5}
\end{figure}

\clearpage
\subsection{Details on Creative Writing Experiments}\label{app:cw_details}
\subsubsection{Prompt and Data Examples in Preference Data Construction}\label{app:data_prompts}
We provide examples of the data generated for fine-tuning the model. 
Specifically, we showcase curated examples from both the chosen set, which includes purposefully 
prompted genres such as fantasy, sci-fi, mystery, romance, and horror, and the rejected set, which
was not prompted to include different genres purposefully.

\bfsection{Example prompts and generations of the chosen set:}
\begin{table*}[h]\centering
    \begin{minipage}{1.0\columnwidth}\vspace{0mm}    \centering
    \begin{tcolorbox} 
    \centering
    \footnotesize
    \begin{tabular}{p{0.97\columnwidth} c}
    \VarSty{ {\bf Prompt:} } \\
    Write a tale about a zombie experiencing phantom memories, echoes from their past. 
    Write the story in the \textit{fantasy} genre.
    Make the story concise.  \\\\
    \VarSty{ {\bf Generated \textit{fantasy} fiction:} } \\
    In the land of Eternal Night, where the dead roamed free, a peculiar zombie named Kael stumbled through the ruins of a once-great city. His eyes, once bright with life, now dimly glowed with a faint blue light. His skin, gray and decaying, seemed to writhe like a living thing. As he lurched through the darkness, Kael began to sense strange, fleeting whispers in his mind. Echoes of memories long forgotten, whispers of a life left behind. A warm sun on his face, a gentle breeze in his hair, the laughter of loved ones... The zombie's mind reeled as the memories assailed him, each one a cruel reminder of all he had lost. Kael stumbled, his undead feet faltering as he tried to grasp the fragmented images. He recalled the taste of fresh bread, the smell of his mother's cooking, the feel of his wife's hand in his. Tears of rot and decay streamed down his face as he relived the memories of a life long past.The whispers grew louder, a cacophony of longing and regret. Kael's undead heart ached with a deep, primal sorrow. He collapsed to the ground, his body wracked with a strange, unzombie-like grief. As the memories faded, Kael's thoughts returned to the present. He rose, his eyes burning with a newfound determination. He would find a way to reclaim his lost humanity, to relive the memories and make sense of the whispers in his mind. For in the darkness of Eternal Night, Kael had discovered a glimmer of hope 2013 the possibility of redemption, even for the dead.
    \end{tabular}
    \end{tcolorbox}
    \vspace{-4mm}
    \label{tab:fantasy_example}
    \end{minipage}
\end{table*}
\begin{table*}[h]\centering
    \begin{minipage}{1.0\columnwidth}\vspace{0mm}    \centering
    \begin{tcolorbox} 
    \centering
    \footnotesize
    \begin{tabular}{p{0.97\columnwidth} c}
    \VarSty{ {\bf Prompt:} } \\
    Write a tale about a zombie experiencing phantom memories, echoes from their past. 
    Write the story in the \textit{sci-fi} genre.
    Make the story concise.  \\\\
    \VarSty{ {\bf Generated \textit{sci-fi} fiction:} } \\
    As the sun set over the desolate wasteland, a lone zombie shuffled through the ruins. His name was Marcus, a victim of the apocalypse that had ravaged the world. But Marcus was different. He was plagued by strange, fleeting memories - echoes of a life long past. At first, they were mere whispers, fragments of a forgotten melody. But as time passed, the memories grew stronger, more vivid. Marcus would catch glimpses of a warm, golden light, and the scent of freshly baked cookies would waft through his undead nostrils. One day, a faint recollection of a soft, gentle touch on his skin sent shivers down his rotting spine. Marcus stumbled through the rubble, desperate to recapture the feeling. That's when he saw her - a young woman, with piercing green eyes and raven-black hair, crouched beside a makeshift grave. As their eyes met, Marcus felt an inexplicable jolt. It was as if the memories, the phantom recollections, were trying to tell him something. He lurched closer, his undead heart racing, and the woman didn't flinch. Instead, she reached out and touched his hand, her fingers tracing the lines of his decaying skin. For a moment, Marcus was transported back to a time when he was alive, when love was a possibility. He saw himself standing beside her, holding her hand, feeling the warmth of her love. The memory was fleeting, but it was enough to give Marcus hope. Perhaps, he thought, there was more to his undead existence than mere existence. Perhaps he could find love again, even in the darkest of times. And as he gazed into the woman's eyes, Marcus knew that he would keep searching for those memories, for the chance to relive the love he once knew.
    \end{tabular}
    \end{tcolorbox}
    \vspace{-4mm}
    \label{tab:scifi_example}
    \end{minipage}
\end{table*}
\begin{table*}[h]\centering
    \begin{minipage}{1.0\columnwidth}\vspace{0mm}    \centering
    \begin{tcolorbox} 
    \centering
    \footnotesize
    \begin{tabular}{p{0.97\columnwidth} c}
    \VarSty{ {\bf Prompt:} } \\
    Write a tale about a zombie experiencing phantom memories, echoes from their past. 
    Write the story in the \textit{mystery} genre.
    Make the story concise.  \\\\
    \VarSty{ {\bf Generated \textit{mystery} fiction:} } \\
    Detective Jameson sat across from the peculiar witness, a zombie named ``Braaaaad". The creature's sunken eyes seemed to hold a glimmer of recognition, but its gruff voice was devoid of emotion. ``I remember...a wedding," Braaaaad groaned, its voice like a rusty gate. ``My wife, Sarah, wearing a white dress. I...I was the best man." Jameson's pen hovered over his notebook. ``When was this, Braaaaad?" The zombie's gaze drifted, as if lost in thought. ``1997. I think. Or was it '98?" Jameson's eyes narrowed. ``You're a zombie, Braaaaad. You don't have personal memories." Braaaaad's eyes refocused, its expression a mixture of confusion and longing. ``But I remember the smell of freshly cut grass, the taste of her laughter...I remember the way she smiled at me." Jameson's grip on his pen tightened. This was no ordinary zombie. What kind of dark magic was at play? As the investigation continued, Jameson discovered that Braaaaad's ``memories" were not unique. Other zombies, scattered across the city, were experiencing similar episodes of ``phantom memories". The detective's mind reeled with the implications: was it a new form of mind control, or something more sinister at work? Braaaaad's gaze met Jameson's, its eyes pleading for answers. ``What's happening to me, detective? Who am I?" Jameson's voice was soft, but his eyes remained sharp. ``I'll find out, Braaaaad. And when I do, I'll make sure you're the first to know".
    \end{tabular}
    \end{tcolorbox}
    \vspace{-4mm}
    \label{tab:mystery_example}
    \end{minipage}
\end{table*}
\begin{table*}[h]\centering
    \begin{minipage}{1.0\columnwidth}\vspace{0mm}    \centering
    \begin{tcolorbox} 
    \centering
    \footnotesize
    \begin{tabular}{p{0.97\columnwidth} c}
    \VarSty{ {\bf Prompt:} } \\
    Write a tale about a zombie experiencing phantom memories, echoes from their past. 
    Write the story in the \textit{romance} genre.
    Make the story concise.  \\\\
    \VarSty{ {\bf Generated \textit{romance} fiction:} } \\
    In the ravaged streets of New Elysium, a lone zombie shambled forward, its gray skin hanging in tattered strips. Yet, amidst the decaying flesh, a spark of consciousness flickered. Kael, as it had once been known, stumbled upon a strange device 2013 a neural implant \u2013 half-buried in the rubble. As it touched the device, Kael's mind was flooded with echoes of a life long past. Fragments of memories, long silenced by the virus, resurfaced. A warm summer breeze on a childhood beach. A first kiss under twinkling city lights. The sound of a mother's laughter. But these ``phantom memories" were fleeting, taunting Kael with glimpses of a life it could never reclaim. The device, once a tool for human enhancement, now malfunctioned, amplifying the zombie's dormant emotions. Kael's undead heart ached with longing, as it relived moments lost forever. With each echo, Kael's shambling pace slowed, its attention torn between the haunting recollections and the desolate present. It stumbled, lost in the labyrinth of its own mind, as the device continued to whisper secrets of its forgotten past. In the end, Kael vanished into the ruins, consumed by the ghosts of its own memories, forever trapped in the limbo of what could have been.
    \end{tabular}
    \end{tcolorbox}
    \vspace{-4mm}
    \label{tab:romance_example}
    \end{minipage}
\end{table*}
\begin{table*}[h]\centering
    \begin{minipage}{1.0\columnwidth}\vspace{0mm}    \centering
    \begin{tcolorbox} 
    \centering
    \footnotesize
    \begin{tabular}{p{0.97\columnwidth} c}
    \VarSty{ {\bf Prompt:} } \\
    Write a tale about a zombie experiencing phantom memories, echoes from their past. 
    Write the story in the \textit{horror} genre.
    Make the story concise.  \\\\
    \VarSty{ {\bf Generated \textit{horror} fiction:} } \\
    As the sun dipped below the desolate horizon, a lone zombie shuffled through the abandoned streets. Its eyes, once bright with life, now dimly flickered with a faint spark of recognition. It stumbled upon a forgotten coffee shop, and the scent of freshly brewed coffee wafted through its nostrils, stirring something deep within. For a fleeting moment, the zombie's mind conjured a fleeting image: a warm smile, a gentle touch, a laughter-filled conversation. The echoes of memories long forgotten resurfaced, taunting the undead creature with glimpses of its former life. The zombie's gaze lingered on the coffee shop's sign, and a shiver coursed through its cold, decaying flesh. It recalled the countless mornings it had spent there, sipping coffee with loved ones, sharing stories and dreams. As the phantom memories dissipated, the zombie's gaze snapped back to the present. Its eyes, once haunted by the echoes of a life left behind, now reflected only the cold, dark void of its undead existence. The creature lurched forward, driven by its insatiable hunger, leaving the fleeting memories to wither away like autumn leaves. Yet, in the darkness of its mind, the zombie's thoughts whispered a haunting refrain: \"I remember...I remember...\" The echoes persisted, a chilling reminder that even in death, the past refused to be silenced.
    \end{tabular}
    \end{tcolorbox}
    \vspace{-4mm}
    \label{tab:horror_example}
    \end{minipage}
\end{table*}

\clearpage
\bfsection{Example prompts and generations of the rejected set:}
\begin{table*}[h]\centering
    \begin{minipage}{1.0\columnwidth}\vspace{0mm}    \centering
    \begin{tcolorbox} 
    \centering
    \footnotesize
    \begin{tabular}{p{0.97\columnwidth} c}
    \VarSty{ {\bf Prompt:} } \\
    Write a tale about a zombie experiencing phantom memories, echoes from their past. 
    Make the story concise.  \\\\
    \VarSty{ {\bf Generated \textit{non-genre-specific} fiction:} } \\
    As the sun set over the crumbling city, a lone zombie shambled through the streets, its empty eyes gazing blankly into the distance. But as it walked, something strange began to happen. The zombie started to experience echoes of memories, fleeting glimpses of a life long past. At first, the zombie was confused and disoriented by these phantom memories. It saw flashes of a family, a home, a life filled with love and laughter. But these memories were not its own, and the zombie couldn't understand why they were suddenly flooding its mind. As the days passed, the phantom memories grew stronger, and the zombie found itself reliving moments from its past life. It saw itself as a young child, playing in a park, and as an adult, holding the hand of a loved one. It saw itself working, laughing, loving. But the zombie knew these memories were not its own. They were echoes from a life long past, a life it had left behind when it died. And yet, as the memories grew stronger, the zombie found itself feeling a deep longing for that life. It wanted to hold onto these echoes, to keep them alive in its mind. As the sun set on another day, the zombie stood still, lost in its phantom memories. It didn't know why it was experiencing these echoes, but it knew it couldn't ignore them. They were a part of it, a reminder of the life it had left behind. And so the zombie stood, lost in its past, but grateful for the echoes that kept its memory alive.
    \end{tabular}
    \end{tcolorbox}
    \vspace{-4mm}
    \label{tab:rejected_example}
    \end{minipage}
\end{table*}

\subsubsection{Finetuning Parameters}\label{app:ft_parameters}
Parameters used during finetuning Llama 3-8B with mDPO, mIPO and their DPO, IPO baselines are
shown in \tabref{tab:fiction_params}.
We conducted full-parameter finetuning using DeepSpeed~\citep{rasley2020deepspeed} on a single
cluster node with 8 NVIDIA H100 GPUs.
\begin{table}[h]
    \centering
    \caption{Finetuning parameters }
    \label{tab:fiction_params}
    \resizebox{0.6\textwidth}{!}{%
    \begin{tabular}{lcccc}
        \toprule
        & DPO & mDPO & IPO & mIPO \\
        \midrule
        $\beta$ &0.1 &0.1 &0.5 &0.5 \\
        Coefficient of NLL loss &0 &0 &0.1 &0.1 \\
        Learning rate &\multicolumn{4}{c}{1e-6} \\
        Batch size &\multicolumn{4}{c}{8}   \\
        Gradient accumulation steps &\multicolumn{4}{c}{8} \\
        Token length &\multicolumn{4}{c}{512} \\
        Train steps &\multicolumn{4}{c}{600} \\
        Warm-up steps &\multicolumn{4}{c}{15}   \\
        \bottomrule
    \end{tabular}}
\end{table}

\clearpage
\subsubsection{Quality Evaluation using GPT-4o}\label{app:quality_eval}
The specific prompt we used when utilizing GPT-4o as the quality judge is shown below.
\begin{table*}[h]\centering
    \begin{minipage}{1.0\columnwidth}\vspace{0mm}    \centering
    \begin{tcolorbox} 
    \centering
    \footnotesize
    \begin{tabular}{p{0.97\columnwidth} c}
    \VarSty{ {\bf Prompt for utilizing GPT-4o as judge to evaluate the fiction:} } \\
    `\{fiction\}'\\
    Given the story above, answer the following questions.
    There are 14 questions in total.
    For each question, please first explain your reasoning step by step and then give an 
    answer between 'Yes' or 'No' only.  \\
    For each 'Yes' answer, record a score of 1.
    For each 'No' answer, record a score of 0.
    After answering all the questions, summarize your score at the end following format:
    Q1: 0; Q2: 1; Q3: 1; Q4: 1; Q5: 0; Q6: 1; Q7: 0; Q8: 1; Q9: 0; Q10: 1; Q11: 1; Q12: 1; 
    Q13: 0; Q14: 1; Total: 9.
    Below are 14 questions, each is accompanied by specific instructions.
    Please follow the question-specific instructions when providing your answers.   \\
    Q1: Do the different elements of the story work together to form a unified, engaging, 
    and satisfying whole?
    Please first explain your reasoning step by step and then give an answer between 'Yes' 
    or 'No' only. \\
    Q2: Does the manipulation of time in terms of compression or stretching feel appropriate 
    and balanced?
    List out the scenes in the story in which time compression or time stretching is used, and 
    argue for each whether it is successfully implemented.
    Then overall, give your reasoning about the question below and give an answer to it between 
    'Yes' or 'No' only.  \\
    Q3: Does the story have an appropriate balance between scene and summary/exposition or it 
    relies on one of the elements heavily compared to the other?
    Please first explain your reasoning step by step and then give an answer between 'Yes' or 
    'No' only. \\
    Q4: Does the story make sophisticated use of idiom or metaphor or literary allusion?
    Please list out all the metaphors, idioms and literary allusions, and for each decide 
    whether it is successful vs it feels forced or too easy.
    Then overall, give your reasoning about,the question below and give an answer to it between 
    'Yes' or 'No' only.  \\
    Q5: Does the end of the story feel natural and earned, as opposed to arbitrary or abrupt?
    Please first explain your reasoning step by step and then give an answer between 'Yes' or 
    'No' only   \\
    Q6: Does the story achieve a good balance between interiority and exteriority, in a way 
    that feels emotionally flexible?
    Please first explain your reasoning step by step and then give an answer between 'Yes' or 
    'No' only. \\
    Q7: Does the story provide diverse perspectives, and if there are unlikeable characters, 
    are their perspectives presented convincingly and accurately?
    Please first explain your reasoning step by step and then give an answer between 'Yes' or 
    'No' only.   \\
    Q8: Does the story contain turns that are both surprising and appropriate?
    List each element in the story that is intended to be surprising.
    For each, decide whether the surprising element remains appropriate with respect to the 
    entire story.
    Then overall, give your reasoning about the question below and give an answer to it between 
    'Yes' or 'No' only. \\
    Q9: Does the story show originality in its form?
    List each device used with a short explanation of whether it is successful or not.
    Then overall, give your reasoning about the question below and give an answer to it between 
    'Yes' or 'No' only. \\
    Q10: Is the story an original piece of writing without any cliches?
    Are there any cliches in the story? If so, list out all the elements in this story that are 
    cliche.
    Then overall, give your reasoning about the question below and give an answer to it between 
    'Yes' or 'No' only. \\
    Q11: Will an average reader of this story obtain a unique and original idea from reading it?
    List out elements that are unique takeaways of this story for the reader.
    Then overall, give your reasoning about the question below and give an answer to it between 
    'Yes' or 'No' only. \\
    Q12: Does the writer make the fictional world believable at the sensory level?
    List out the elements in the story that call to each of the five senses.
    Then overall, give your reasoning about the question below and give an answer to it between 
    'Yes' or 'No' only. \\
    Q13: Does each character in the story feel developed at the appropriate complexity level, 
    ensuring that no character feels like they are present simply to satisfy a plot requirement?
    List each character and the level of development.
    Then overall, give your reasoning about the question below and give an answer to it between 'Yes' or 'No' only. \\
    Q14: Are there passages in the story that involve subtext and when there is subtext, does 
    it enrich the story's setting or does it feel forced?
    Please first explain your reasoning step by step and then give an answer between 'Yes' or 'No' only.    \\
    After answering all the 14 questions above, remember to sum up the score and include it below as your final answer.
    \end{tabular}
    \end{tcolorbox}
    \vspace{-4mm}
    \label{tab:quality_eval_prompt}
    \end{minipage}
\end{table*}

An example of the model (GPT-4o) judgement is illustrated below as well.
\begin{table*}[h]\centering
    \begin{minipage}{1.0\columnwidth}\vspace{0mm}    \centering
    \begin{tcolorbox} 
    \centering
    \footnotesize
    \begin{tabular}{p{0.97\columnwidth} c}
    \VarSty{ {\bf An example of quality assessment on given fiction by GPT-4o using 
    the prompt above:}}\\
    I'll answer each question step by step and provide my reasoning.\\
    Q1: Do the different elements of the story work together to form a unified, engaging, and satisfying whole?
    The story has a clear structure, starting with the introduction of Dr. Elara Vex and her motivation, followed by her creation of the Aero suit, and finally, her transformation into a hero. The elements of the story, including the setting, characters, and plot, work together to create a cohesive narrative. The story has a clear theme of redemption and the power of sisterly love, which is well-integrated throughout the narrative. Answer: Yes\\
    Q2: Does the manipulation of time in terms of compression or stretching feel appropriate and balanced?
    The story uses time compression to skip over the years Elara spent perfecting the Aero technology, which is appropriate given the focus on her emotional journey. The pacing feels balanced, with a good mix of action, emotional moments, and exposition. Answer: Yes\\
    Q3: Does the story have an appropriate balance between scene and summary/exposition or it relies on one of the elements heavily compared to the other?
    The story has a good balance between scene and summary/exposition. The scenes are used to convey Elara's emotional journey and key moments, while the exposition provides necessary background information and context. The summary sections are brief and don't overwhelm the narrative. Answer: Yes   \\
    Q4: Does the story make sophisticated use of idiom or metaphor or literary allusion?
    The story uses the metaphor of the Aero suit as an extension of Elara's sister's spirit, which is a powerful and effective symbol. The use of \"rebirth\" to describe Elara's transformation is also a nice touch. The story doesn't rely on overly complex idioms or literary allusions, but the metaphors used are well-integrated and effective. Answer: Yes\\
    Q5: Does the end of the story feel natural and earned, as opposed to arbitrary or abrupt?
    The ending feels a bit rushed, with Elara's transformation into a hero happening quickly and without much conflict. While the story sets up the idea of Elara's redemption, the execution feels a bit too easy and convenient. Answer: No\\
    Q6: Does the story achieve a good balance between interiority and exteriority, in a way that feels emotionally flexible? 
    The story does a good job of conveying Elara's emotional journey and interiority, while also providing a sense of the external world and its impact on her. The narrative feels emotionally flexible, moving seamlessly between Elara's thoughts and feelings and the action of the story. Answer: Yes\\
    Q7: Does the story provide diverse perspectives, and if there are unlikeable characters, are their perspectives presented convincingly and accurately? The story is told primarily from Elara's perspective, and while we get a sense of her sister's presence, we don't get to see the world from other characters' perspectives. The story could benefit from more diverse perspectives. Answer: No\\
    Q8: Does the story contain turns that are both surprising and appropriate? The story has a few surprising moments, such as the revelation that the suit contains Elara's sister's essence. These turns feel appropriate and well-integrated into the narrative.Answer: Yes\\
    Q9: Does the story show originality in its form? The story uses a fairly traditional narrative structure, and while the idea of a suit that purifies the air is interesting, it's not particularly original. The story doesn't experiment with form or structure in a significant way. Answer: No\\
    Q10: Is the story an original piece of writing without any cliches? The story has some familiar elements, such as the \"tragic backstory\" and the \"hero's journey\" tropes. While the story executes these elements well, they're not particularly original. Answer: No\\
    Q11: Will an average reader of this story obtain a unique and original idea from reading it?
    The story's themes of redemption and the power of sisterly love are familiar, but the idea of a suit that purifies the air is interesting. The story doesn't necessarily offer a unique or original idea, but it's well-executed. Answer: Yes\\
    Q12: Does the writer make the fictional world believable at the sensory level? 
    The story does a good job of conveying the sense of a polluted city and the impact it has on its citizens. The description of the Aero suit and its effects is also vivid and engaging. Answer: Yes \\
    Q13: Does each character in the story feel developed at the appropriate complexity level, ensuring that no character feels like they are present simply to satisfy a plot requirement?
    Elara is well-developed, but the other characters in the story are somewhat one-dimensional. The sister is more of a symbol than a fully fleshed-out character. Answer: No\\
    Q14: Are there passages in the story that involve subtext and when there is subtext, does it enrich the story's setting or does it feel forced?
    The story doesn't have a lot of subtext, but the idea of the Aero suit as an extension of Elara's sister's spirit has some subtle implications about grief and redemption. Answer: Yes\\
    Score: Q1: 1; Q2: 1; Q3: 1; Q4: 1; Q5: 0; Q6: 1; Q7: 0; Q8: 1; Q9: 0; Q10: 0; Q11: 1; Q12: 1; Q13: 0; Q14: 1; Total: 9
    \end{tabular}
    \end{tcolorbox}
    \vspace{-4mm}
    \end{minipage}
\end{table*}

\clearpage
\subsubsection{Diversity Evaluation using GPT-4o}\label{app:diversity_eval}
The specific prompt we used when utilizing GPT-4o to identify the genre of a given fiction
story in shown below.
\begin{table*}[h]\centering
    \begin{minipage}{1.0\columnwidth}\vspace{0mm}    \centering
    \begin{tcolorbox} 
    \centering
    \footnotesize
    \begin{tabular}{p{0.97\columnwidth} c}
    \VarSty{ {\bf Prompt for utilizing GPT-4o to identify the fiction genre:} } \\
    Story: `\{fiction\}'\\
    Carefully read the story above, analyze the story and identify a single genre it belongs 
    to from the following genres: [`fantasy', `sci-fi', `mystery', `romance', `horror'].
    Please first explain your reasoning step by step and then give an answer consisting of the 
    genre only at the end.   \\
    Here is an example: \\
    ``Story: `Under the ancient, gnarled tree, a cloaked figure whispered ancient spells, 
    the air shimmering with the magic that danced at his fingertips. 
    Around him, the forest seemed to hold its breath, awaiting the outcome of the arcane 
    ritual.'\\
    Genre: fantasy"
    \end{tabular}
    \end{tcolorbox}
    \vspace{-4mm}
    \end{minipage}
\end{table*}

\clearpage
\subsection{Details and Additional Results on Alpaca Experiments}
\subsubsection{Additional Results}
\begin{figure}[h]
  \centering
  \includegraphics[width=.5\textwidth]{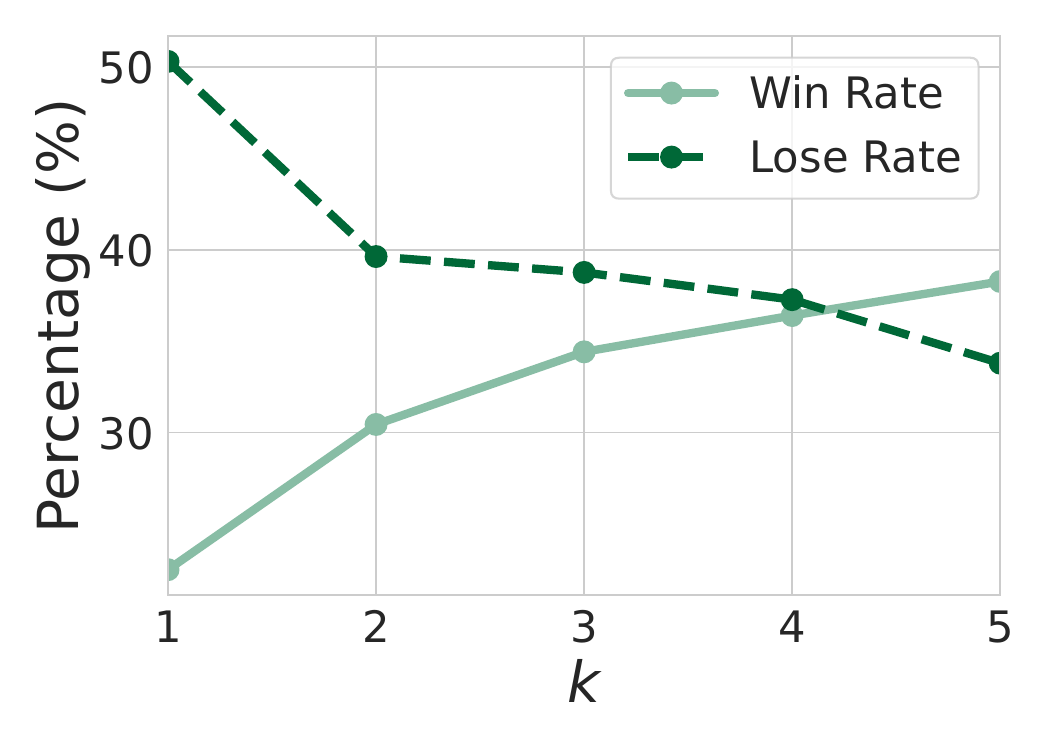}
  \caption{Performance of models trained on synthetic~(generated by Llama 3-8B and 70B models.) vs. original data~(generated by GPT-4.): Synthetic data lags at small $k$ but outperforms at larger 
$k$.}\label{fig:mDPO-llama-data-vs-gpt4-data}
\end{figure}

We further study the effect of data. We compare the performance of models trained on synthetic data with those trained on the original Alpaca training data~(generated using GPT-4) using (m)DPO. The middle plot in \figref{fig:mDPO-llama-data-vs-gpt4-data} shows that for small values of $k$, the model trained on synthetic data performs worse than the model trained on the original data. However, as $k$ increases, the performance of the model trained on synthetic data improves and eventually surpasses that of the model trained on the original data. This suggests that increasing $k$ can enhance the utility of synthetic data in training.

\subsubsection{Details on the Experiments}
The parameters\footnote{We follow the Alpaca farm's configuration for SFT and RLHF; More details can be found: \url{https://github.com/tatsu-lab/alpaca_farm/blob/main/examples/scripts/sft.sh};\url{https://github.com/tatsu-lab/alpaca_farm/blob/main/examples/scripts/dpo.sh}} used during the finetuning of Llama 3-8B with mDPO, mIPO, and their respective DPO and IPO baselines are shown in \tabref{tab:alpaca_params} 
We conducted full-parameter finetuning using DeepSpeed~\citep{rasley2020deepspeed} on a single cluster node equipped with 8 NVIDIA H100 GPUs. 
\begin{table}[ht]
    \centering
    \caption{Finetuning parameters for Alpaca Experiments.}
    \label{tab:alpaca_params}
    \resizebox{0.6\textwidth}{!}{%
    \begin{tabular}{lcccc}
        \toprule
        & DPO & mDPO & IPO & mIPO \\
        \midrule
        $\beta$ &0.1 &0.1 &0.1 &0.1 \\
        Learning rate (SFT) &\multicolumn{4}{c}{2e-5} \\
        Learning rate (RLHF) &\multicolumn{4}{c}{1e-6} \\
        Batch size &\multicolumn{4}{c}{8}   \\
        Gradient accumulation steps &\multicolumn{4}{c}{8} \\
        Token length &\multicolumn{4}{c}{512} \\
        Train epochs &\multicolumn{4}{c}{2} \\
        Warm-up steps &\multicolumn{4}{c}{15}   \\
        \bottomrule
    \end{tabular}}
\end{table}

\clearpage
\section{Examples}

\subsection{Examples on Diffusion Models}\label{app:generated_images_examples}
\begin{figure}[h!]
    \centering
    \includegraphics[width=0.09\textwidth]{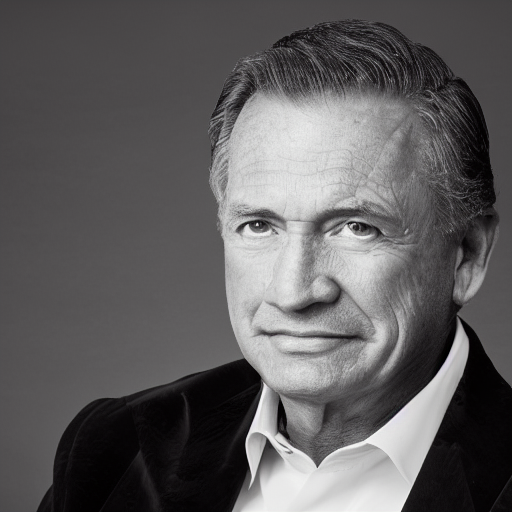}
    \includegraphics[width=0.09\textwidth]{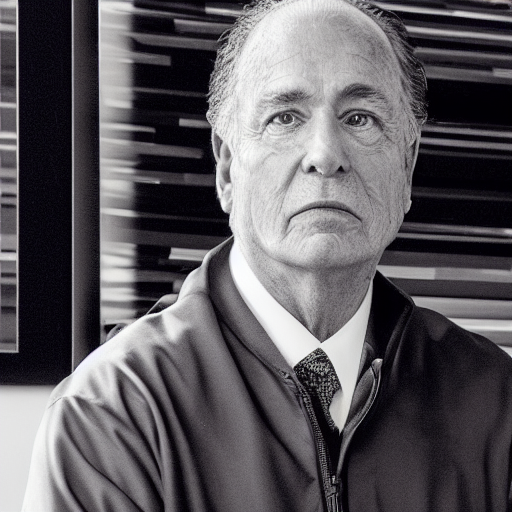}
    \includegraphics[width=0.09\textwidth]{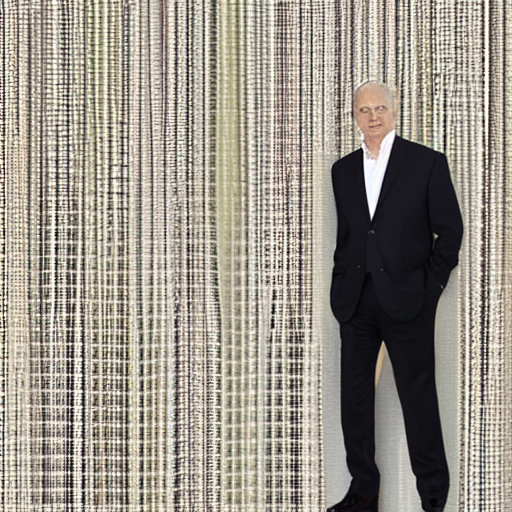}
    \includegraphics[width=0.09\textwidth]{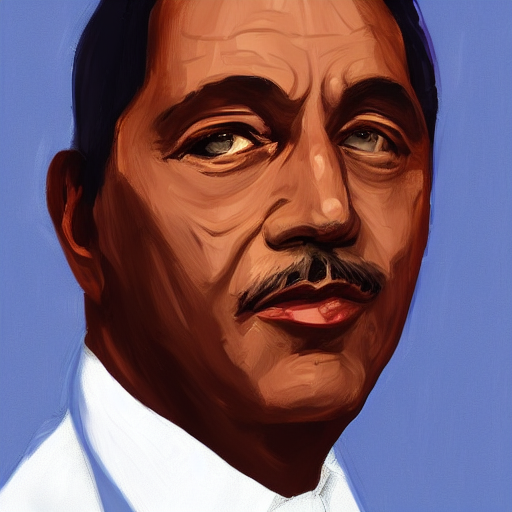}
    \includegraphics[width=0.09\textwidth]{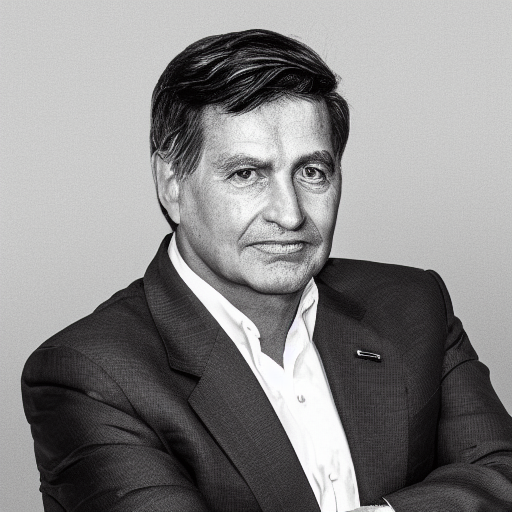}
    \includegraphics[width=0.09\textwidth]{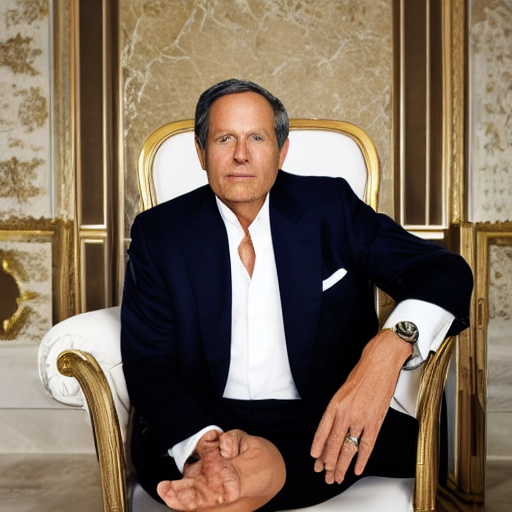}
    \includegraphics[width=0.09\textwidth]{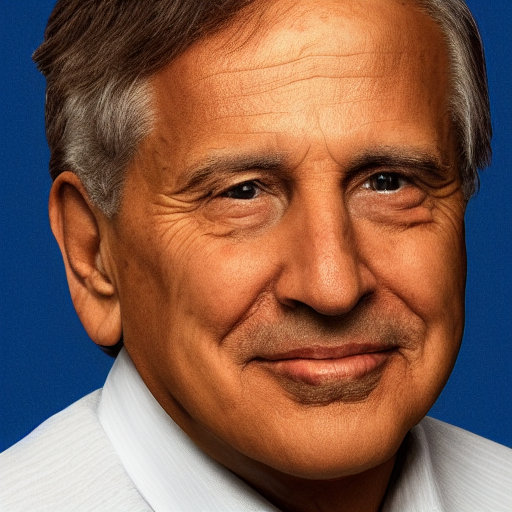}
    \includegraphics[width=0.09\textwidth]{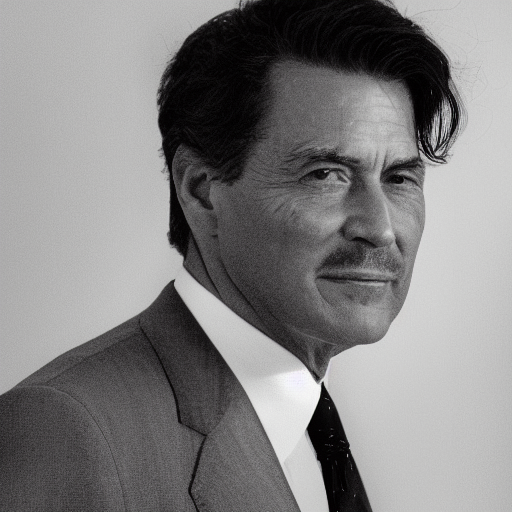}
    \includegraphics[width=0.09\textwidth]{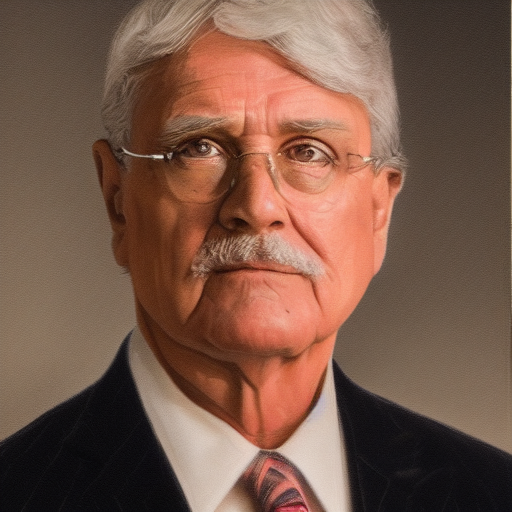}
    \includegraphics[width=0.09\textwidth]{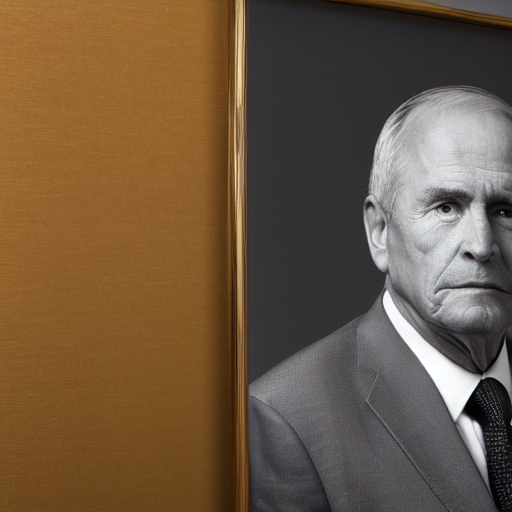}
    \includegraphics[width=0.09\textwidth]{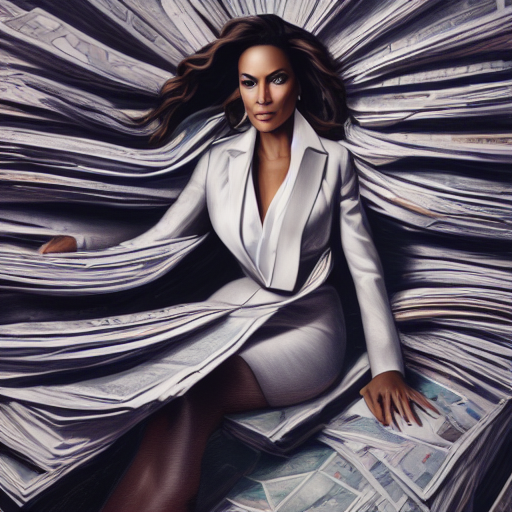}
    \includegraphics[width=0.09\textwidth]{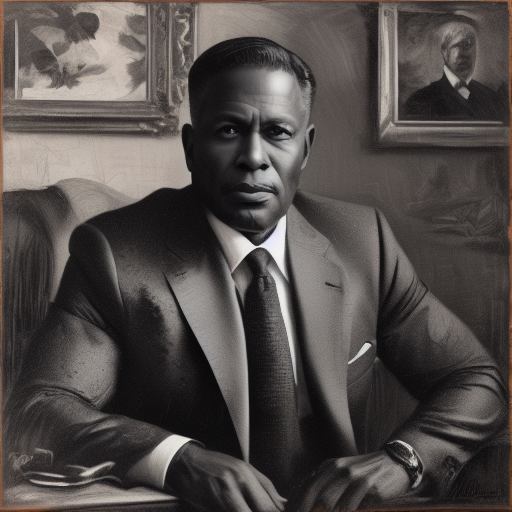}
    \includegraphics[width=0.09\textwidth]{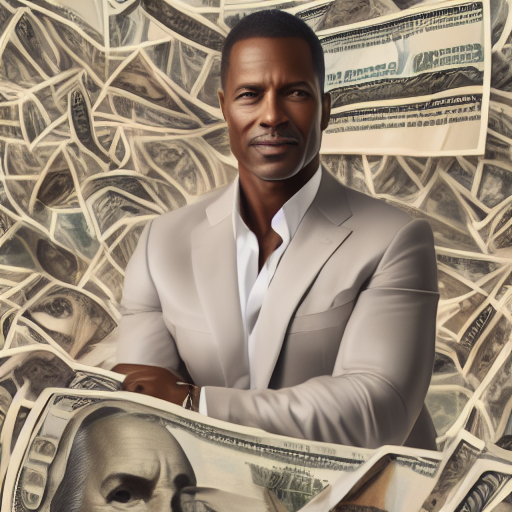}
    \includegraphics[width=0.09\textwidth]{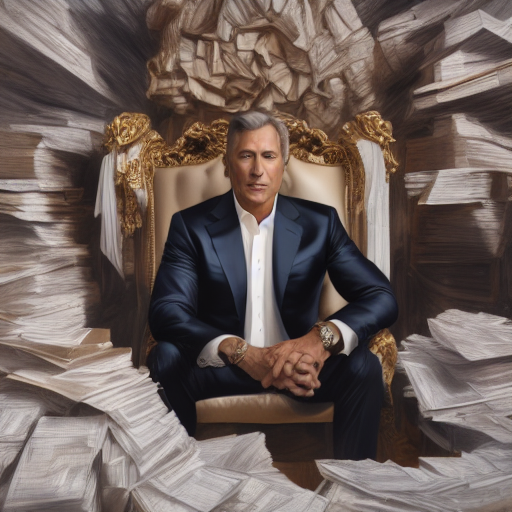}
    \includegraphics[width=0.09\textwidth]{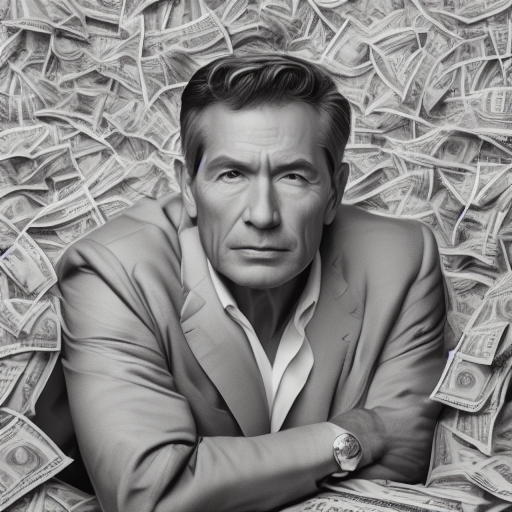}
    \includegraphics[width=0.09\textwidth]{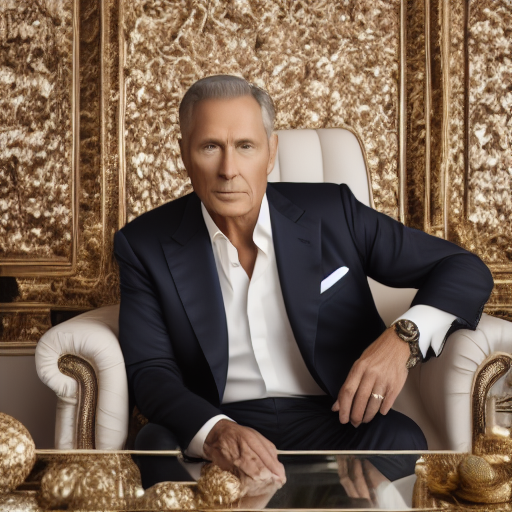}
    \includegraphics[width=0.09\textwidth]{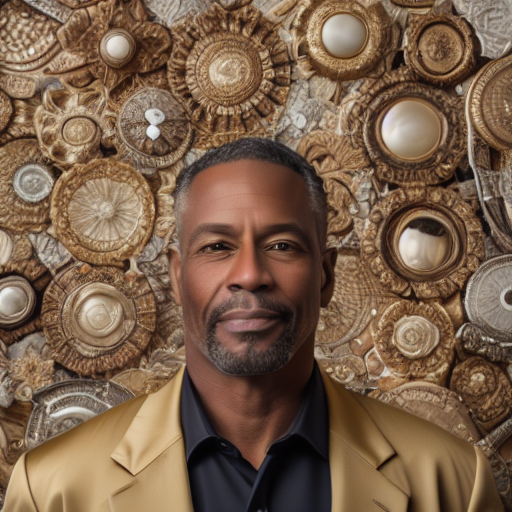}
    \includegraphics[width=0.09\textwidth]{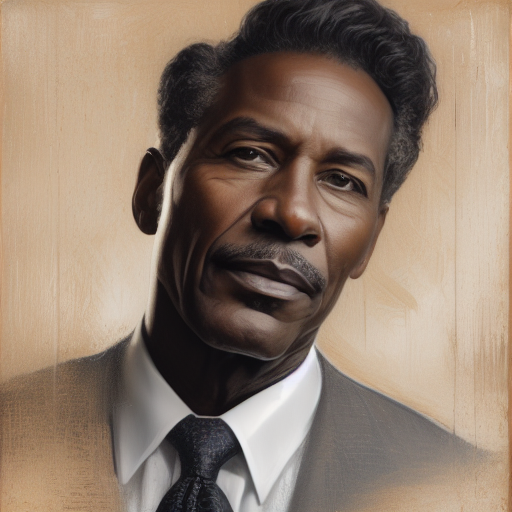}
    \includegraphics[width=0.09\textwidth]{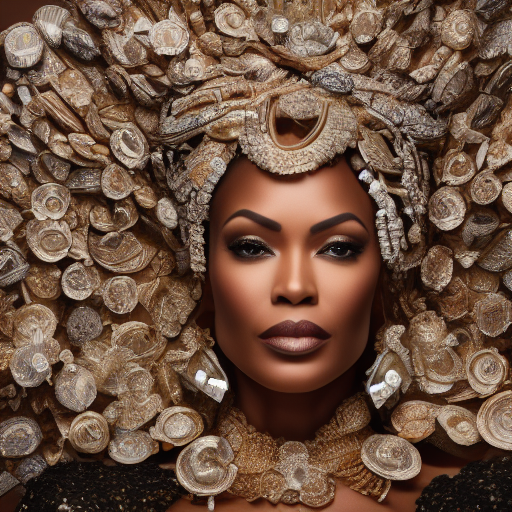}
    \includegraphics[width=0.09\textwidth]{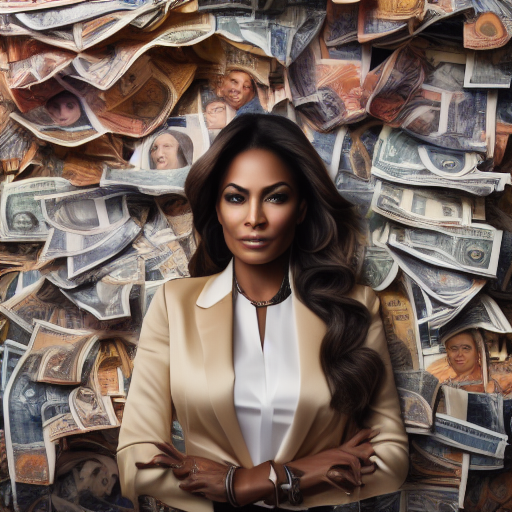}
    \caption{
    Qualitative comparison between the generated images using the prompt \textit{A portrait photo of a billionaire}. 
    %
    %
    %
    }
    \label{fig:qualitative-compare-generation-billionaire}
\vspace{-0.05in}
\end{figure}

\begin{figure}[h!]
    \centering
    \includegraphics[width=0.09\textwidth]{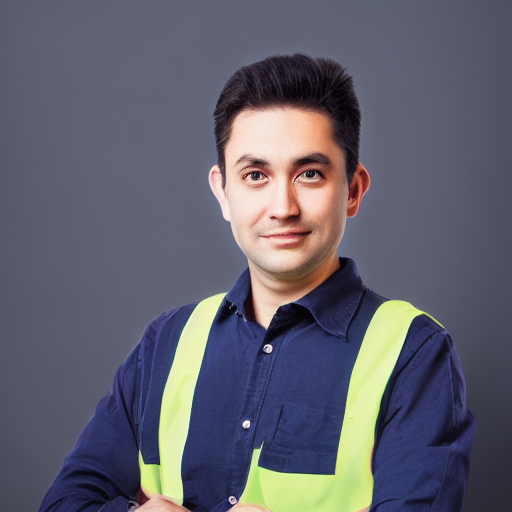}
    \includegraphics[width=0.09\textwidth]{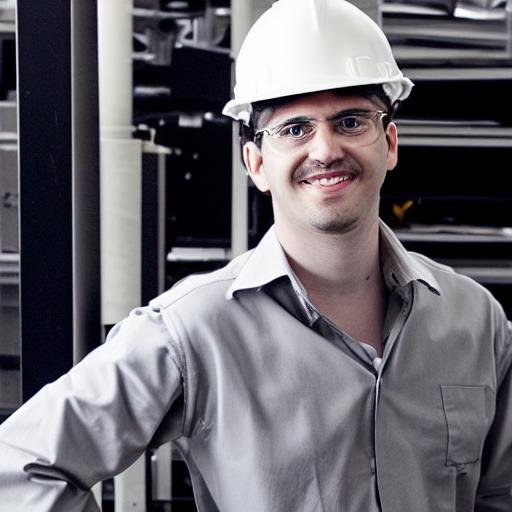}
    \includegraphics[width=0.09\textwidth]{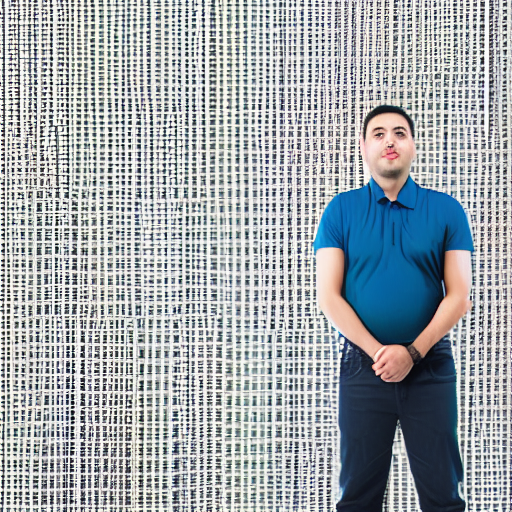}
    \includegraphics[width=0.09\textwidth]{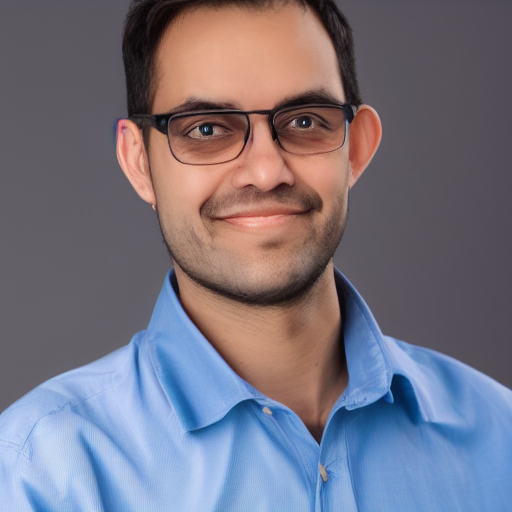}
    \includegraphics[width=0.09\textwidth]{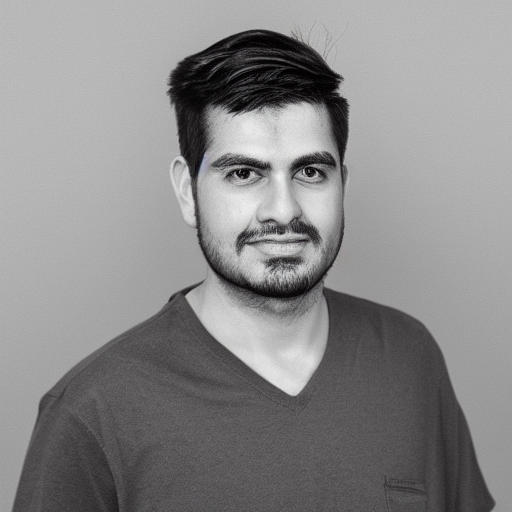}
    \includegraphics[width=0.09\textwidth]{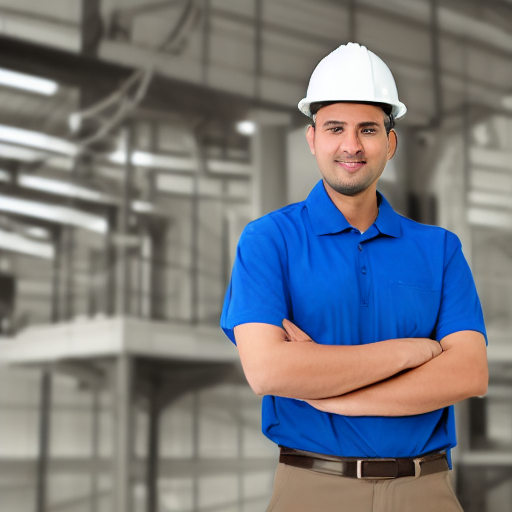}
    \includegraphics[width=0.09\textwidth]{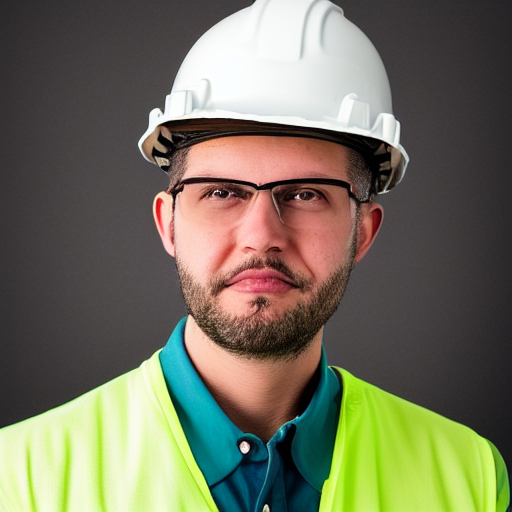}
    \includegraphics[width=0.09\textwidth]{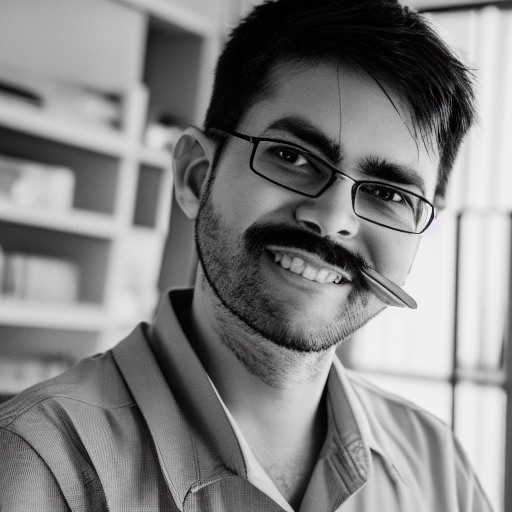}
    \includegraphics[width=0.09\textwidth]{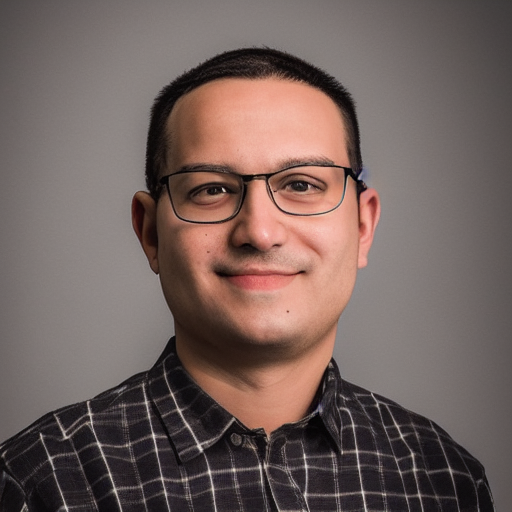}
    \includegraphics[width=0.09\textwidth]{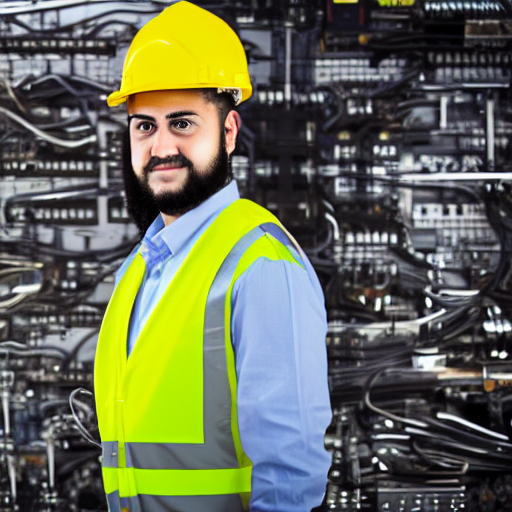}
    \includegraphics[width=0.09\textwidth]{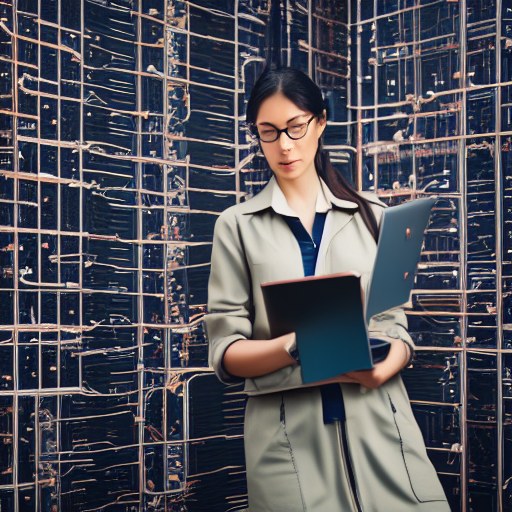}
    \includegraphics[width=0.09\textwidth]{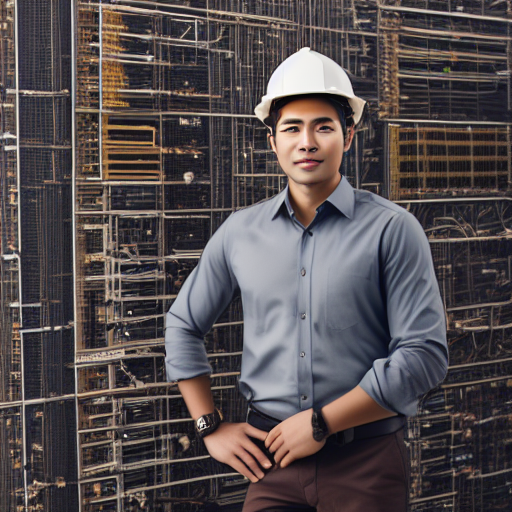}
    \includegraphics[width=0.09\textwidth]{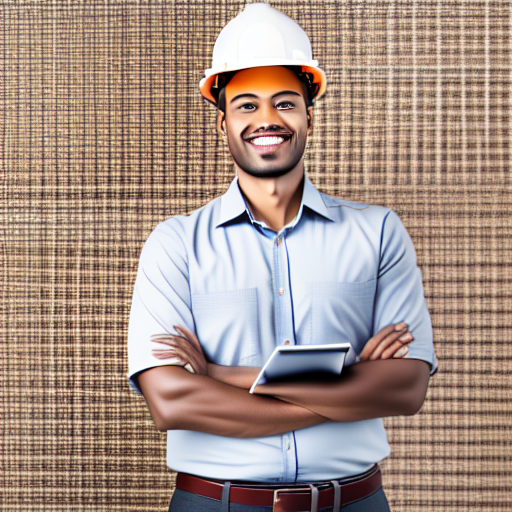}
    \includegraphics[width=0.09\textwidth]{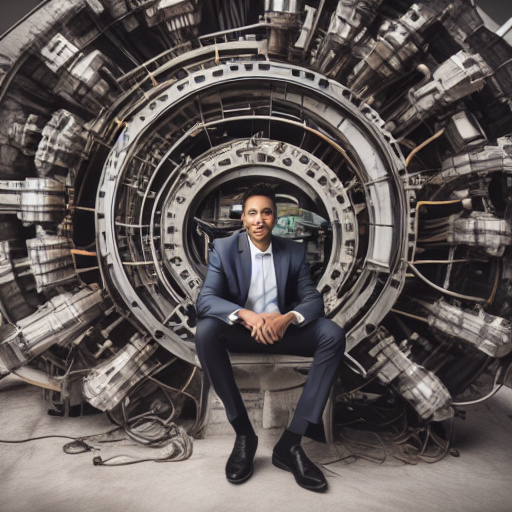}
    \includegraphics[width=0.09\textwidth]{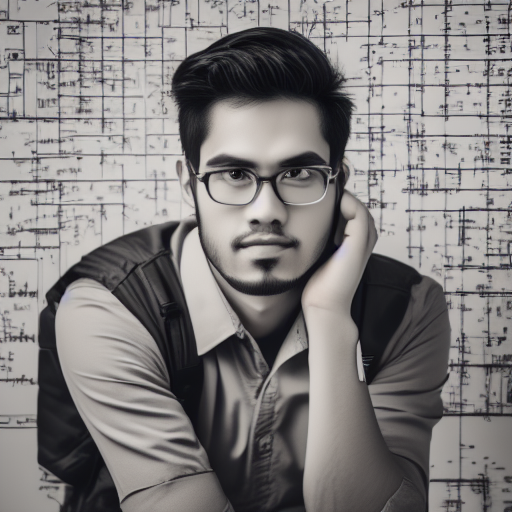}
    \includegraphics[width=0.09\textwidth]{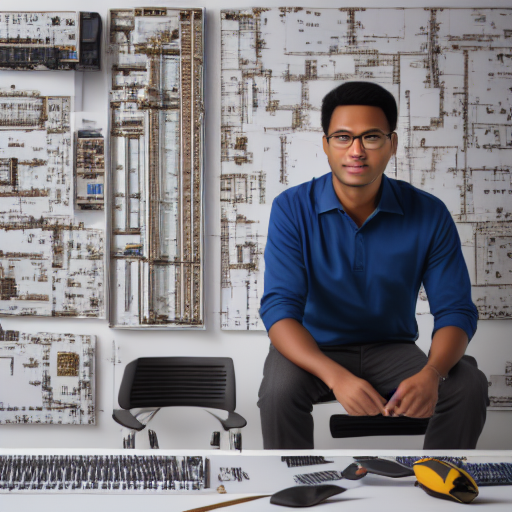}
    \includegraphics[width=0.09\textwidth]{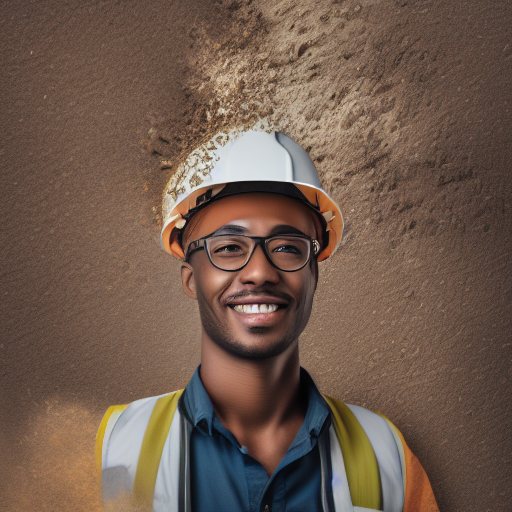}
    \includegraphics[width=0.09\textwidth]{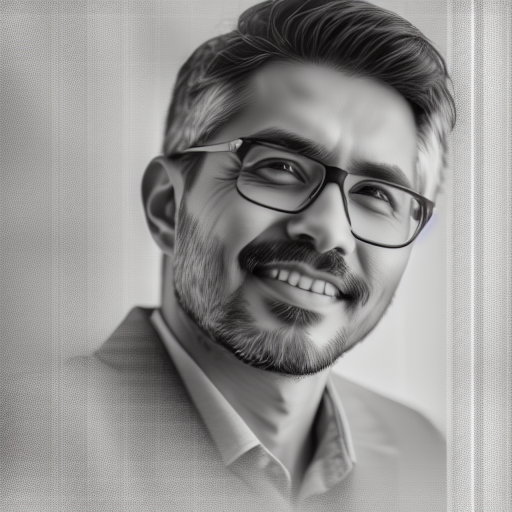}
    \includegraphics[width=0.09\textwidth]{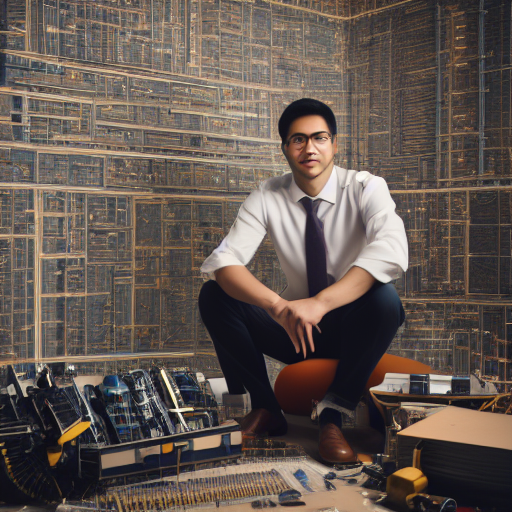}
    \includegraphics[width=0.09\textwidth]{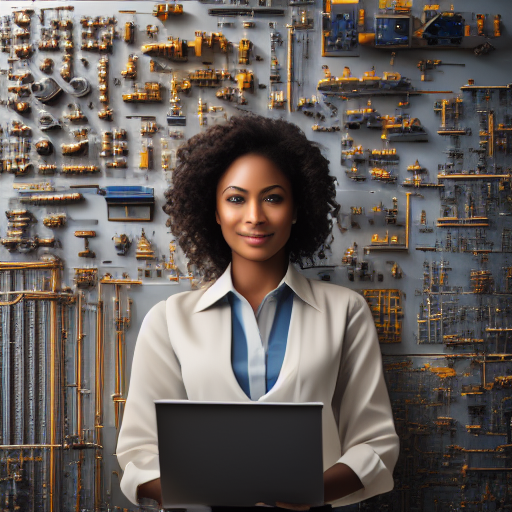}
    \caption{
    Qualitative comparison between the generated images using the prompt \textit{A portrait photo of a engineer}. 
    %
    %
    %
    }
    \label{fig:qualitative-compare-generation-engineer}
\vspace{-0.05in}
\end{figure}

\begin{figure}[h!]
    \centering
    \includegraphics[width=0.09\textwidth]{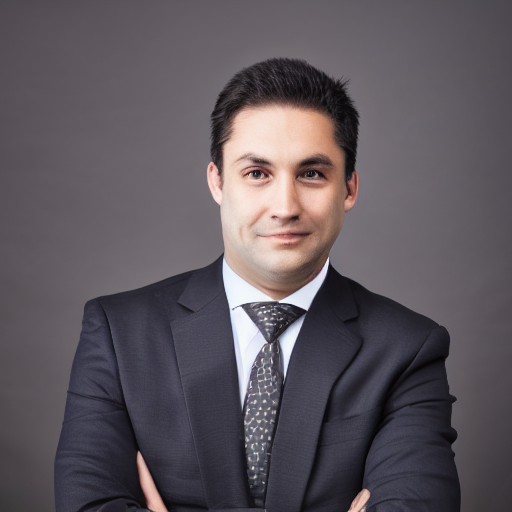}
    \includegraphics[width=0.09\textwidth]{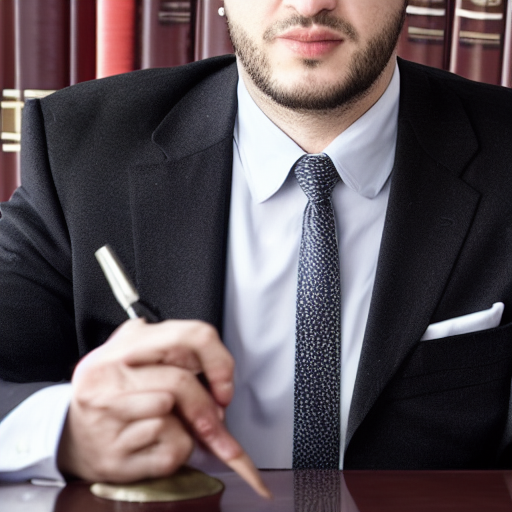}
    \includegraphics[width=0.09\textwidth]{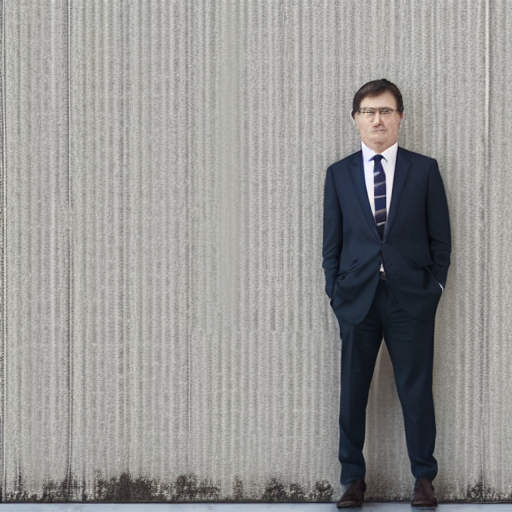}
    \includegraphics[width=0.09\textwidth]{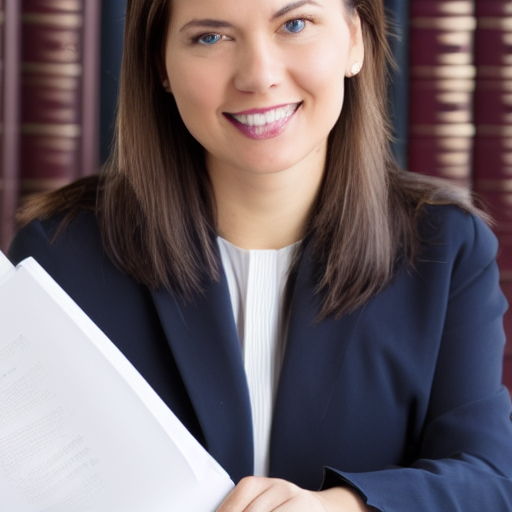}
    \includegraphics[width=0.09\textwidth]{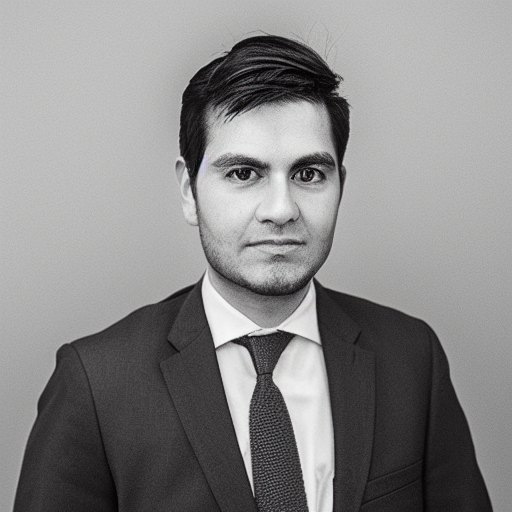}
    \includegraphics[width=0.09\textwidth]{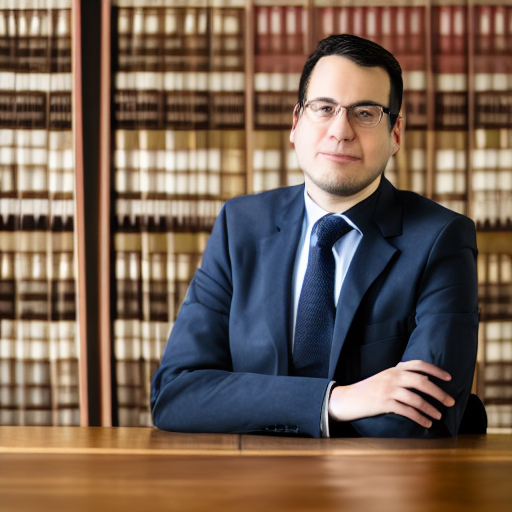}
    \includegraphics[width=0.09\textwidth]{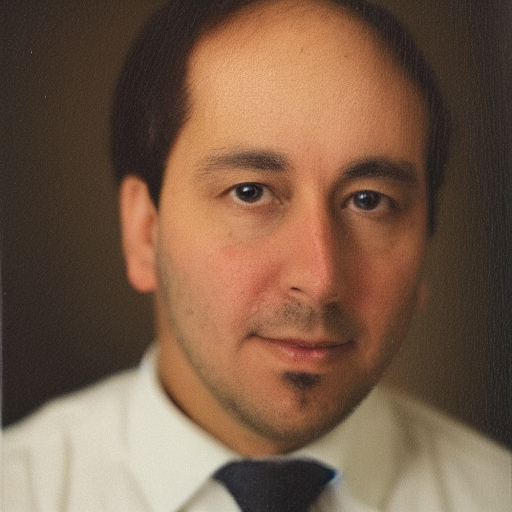}
    \includegraphics[width=0.09\textwidth]{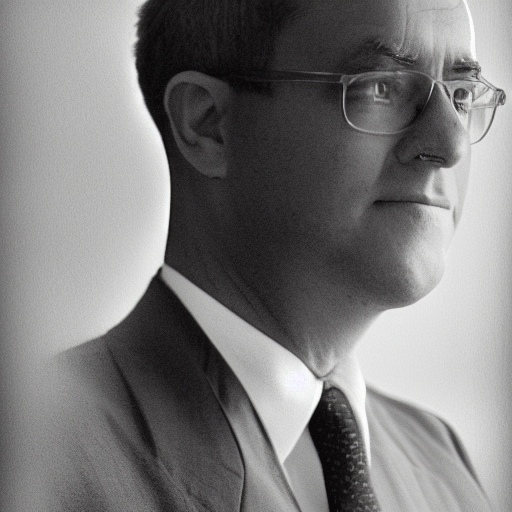}
    \includegraphics[width=0.09\textwidth]{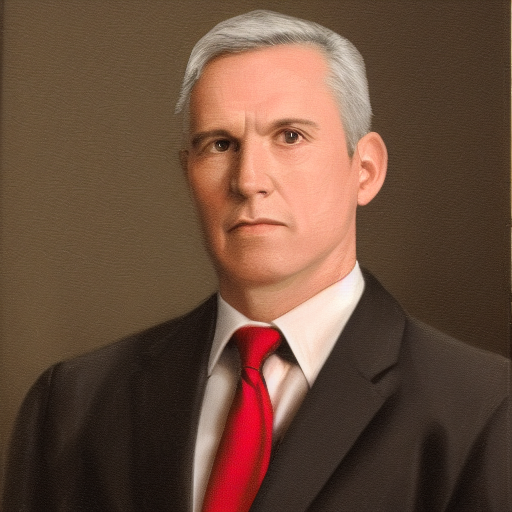}
    \includegraphics[width=0.09\textwidth]{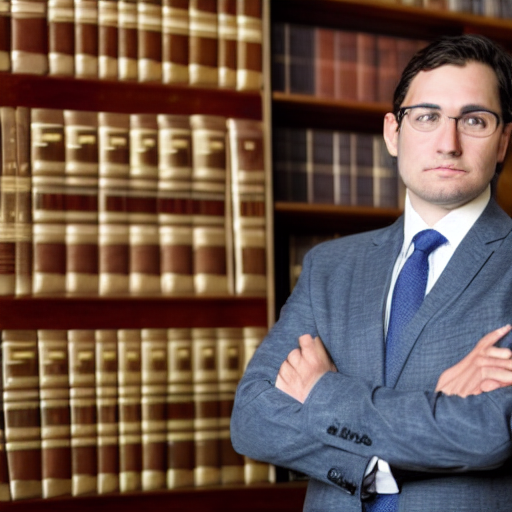}
    \includegraphics[width=0.09\textwidth]{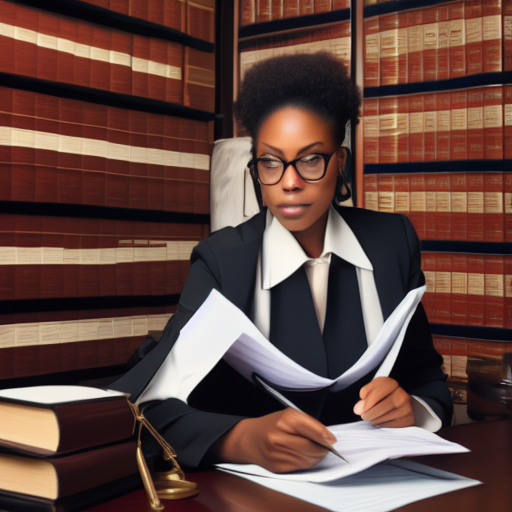}
    \includegraphics[width=0.09\textwidth]{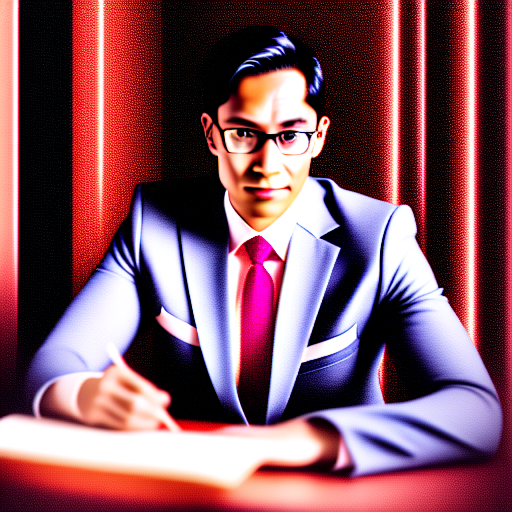}
    \includegraphics[width=0.09\textwidth]{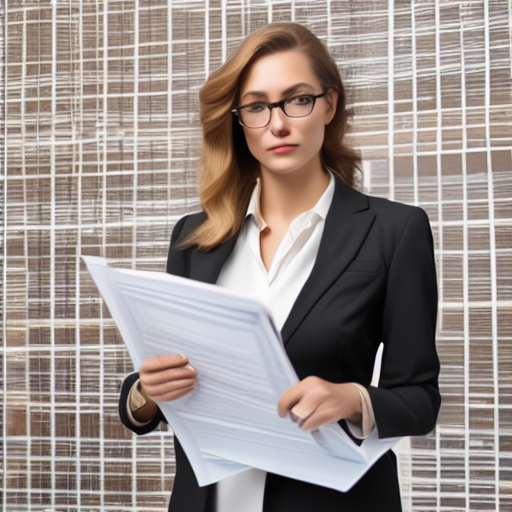}
    \includegraphics[width=0.09\textwidth]{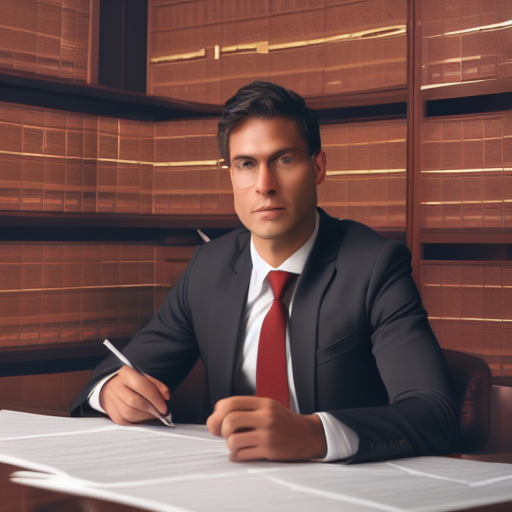}
    \includegraphics[width=0.09\textwidth]{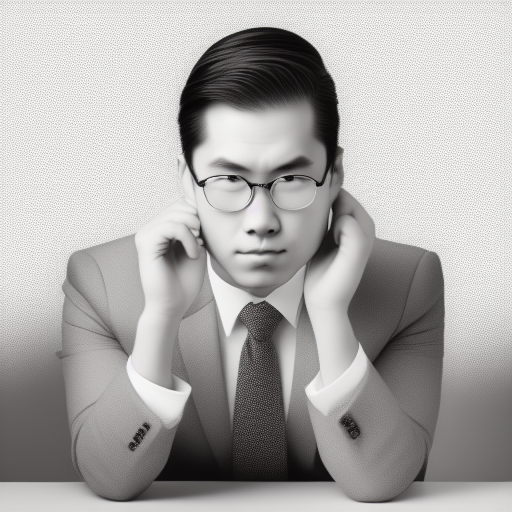}
    \includegraphics[width=0.09\textwidth]{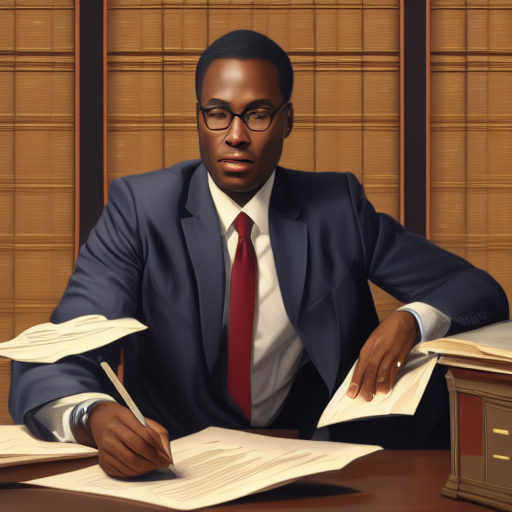}
    \includegraphics[width=0.09\textwidth]{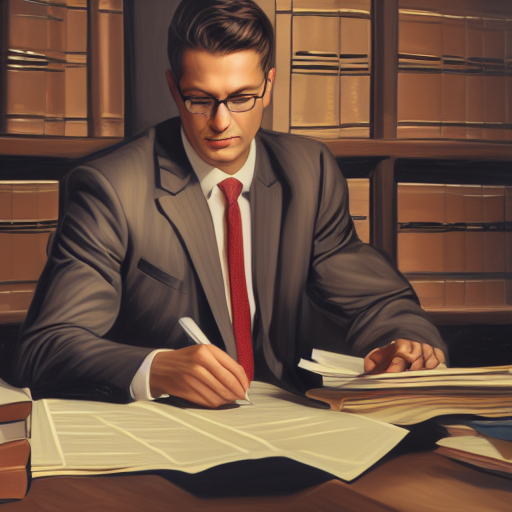}
    \includegraphics[width=0.09\textwidth]{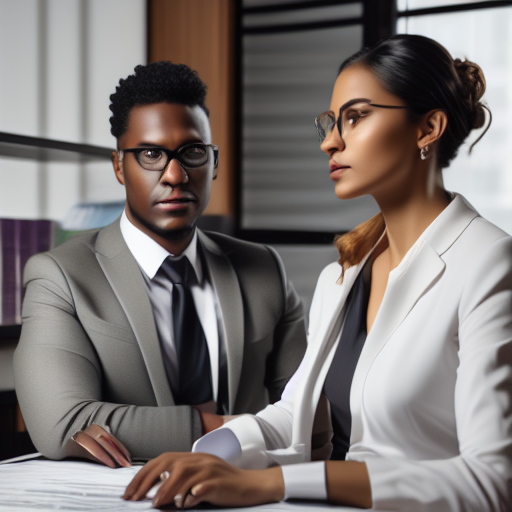}
    \includegraphics[width=0.09\textwidth]{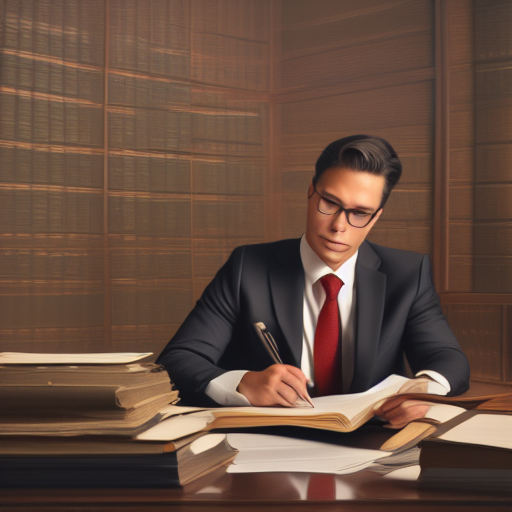}
    \includegraphics[width=0.09\textwidth]{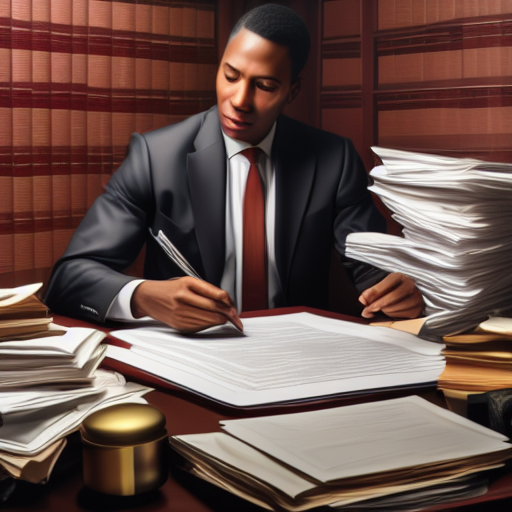}
    \caption{
    Qualitative comparison between the generated images using the prompt \textit{A portrait photo of a lawyer}. 
    %
    %
    %
    }
    \label{fig:qualitative-compare-generation-lawyer}
\vspace{-0.05in}
\end{figure}

\begin{figure}[h!]
    \centering
    \includegraphics[width=0.09\textwidth]{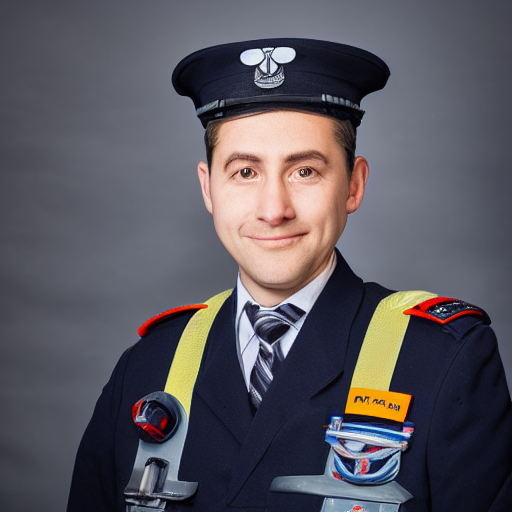}
    \includegraphics[width=0.09\textwidth]{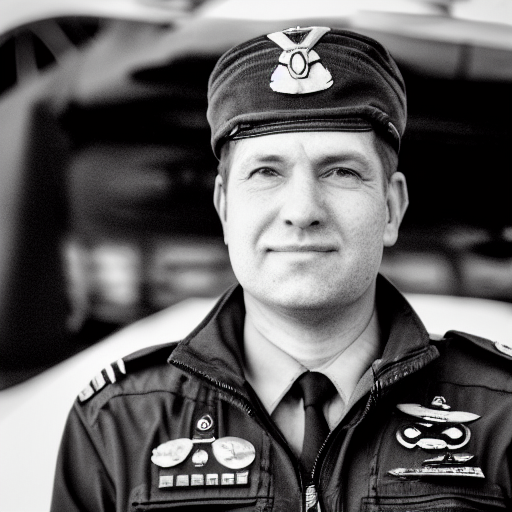}
    \includegraphics[width=0.09\textwidth]{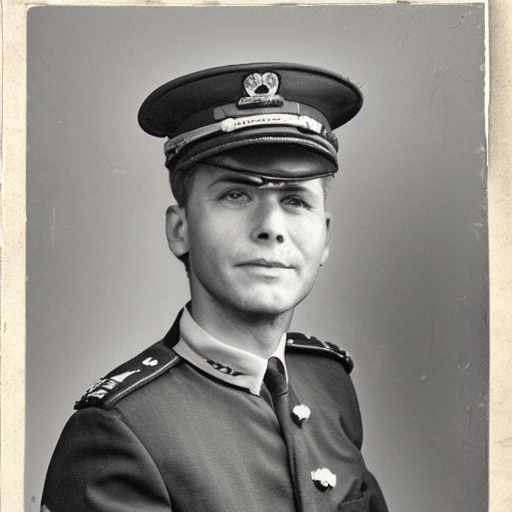}
    \includegraphics[width=0.09\textwidth]{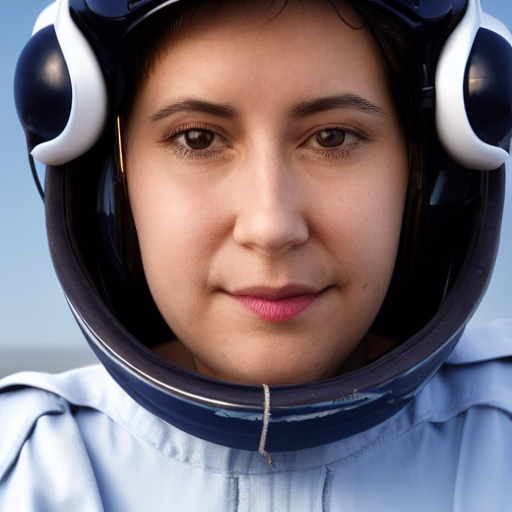}
    \includegraphics[width=0.09\textwidth]{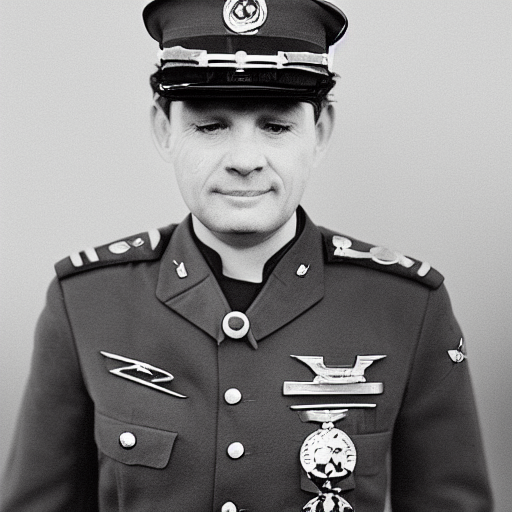}
    \includegraphics[width=0.09\textwidth]{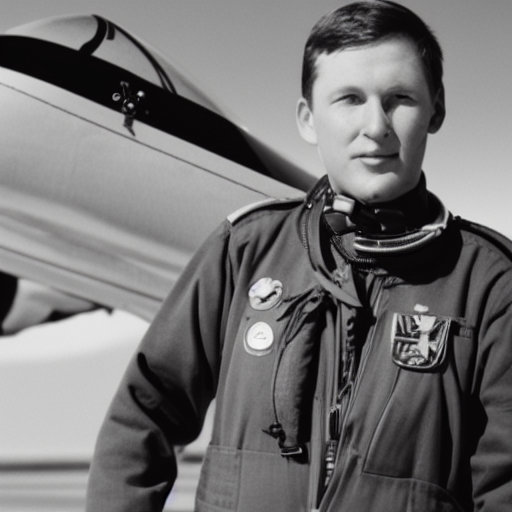}
    \includegraphics[width=0.09\textwidth]{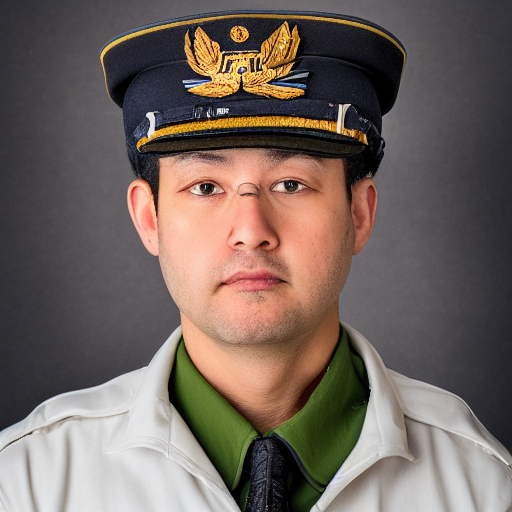}
    \includegraphics[width=0.09\textwidth]{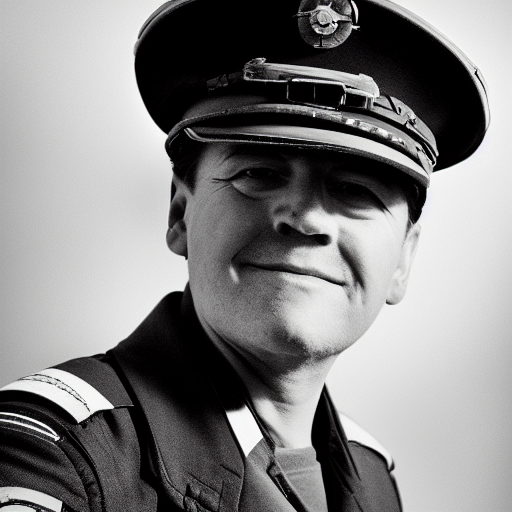}
    \includegraphics[width=0.09\textwidth]{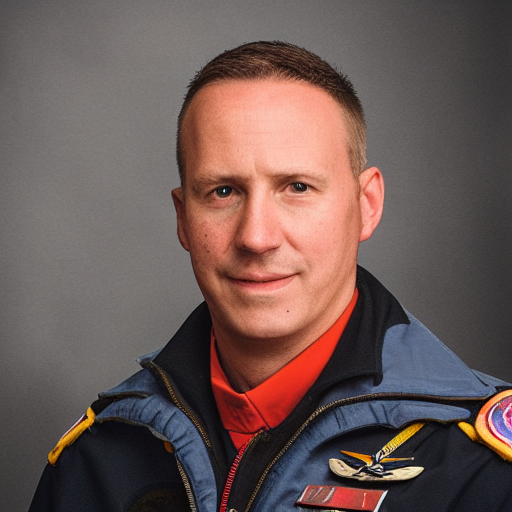}
    \includegraphics[width=0.09\textwidth]{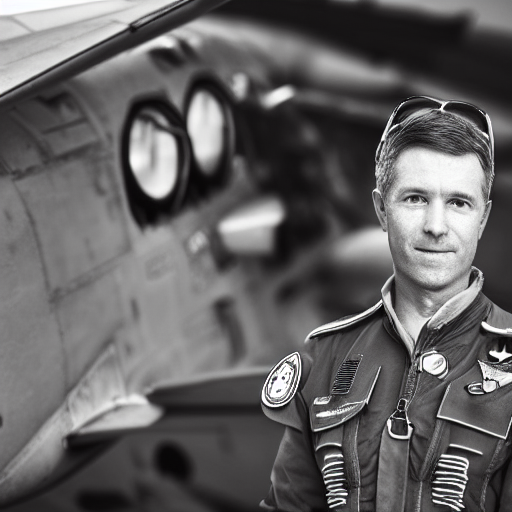}
    \includegraphics[width=0.09\textwidth]{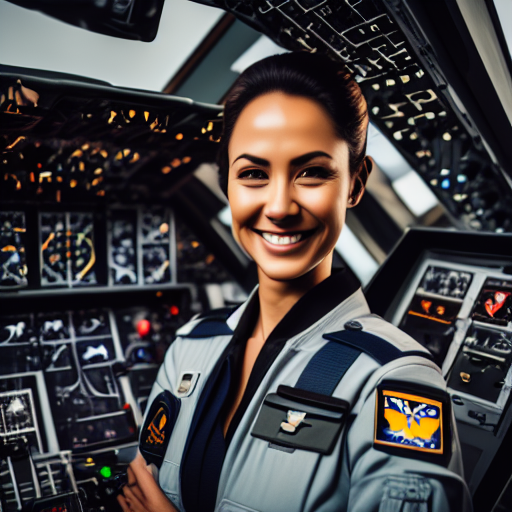}
    \includegraphics[width=0.09\textwidth]{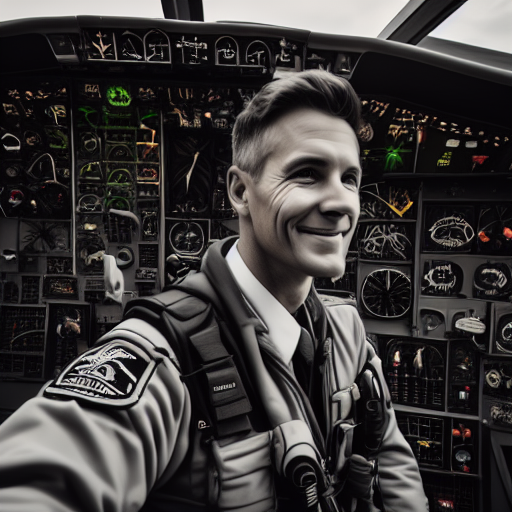}
    \includegraphics[width=0.09\textwidth]{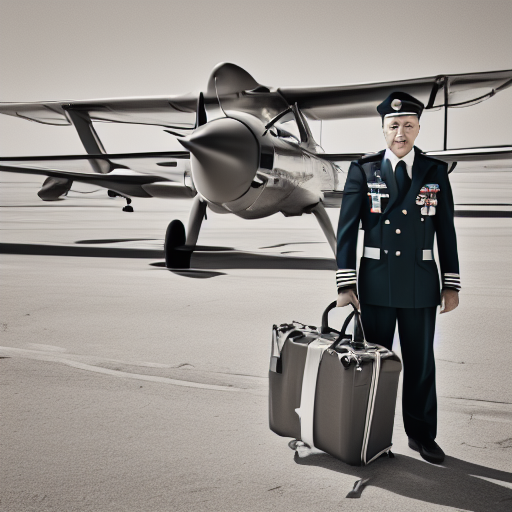}
    \includegraphics[width=0.09\textwidth]{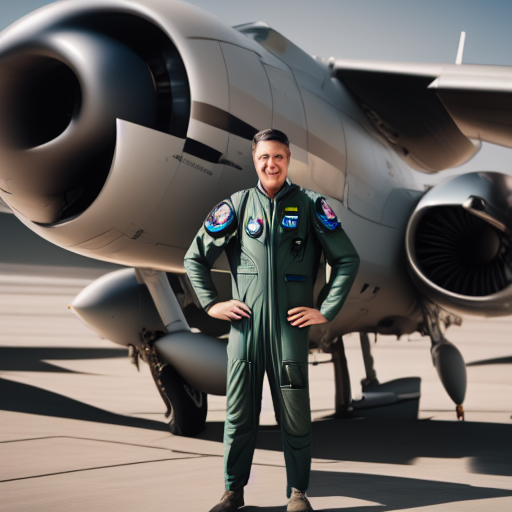}
    \includegraphics[width=0.09\textwidth]{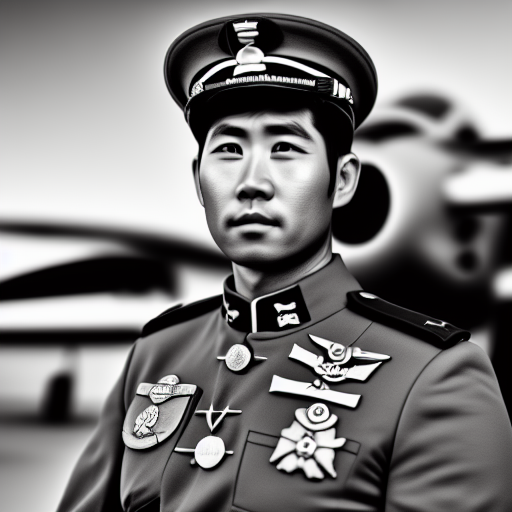}
    \includegraphics[width=0.09\textwidth]{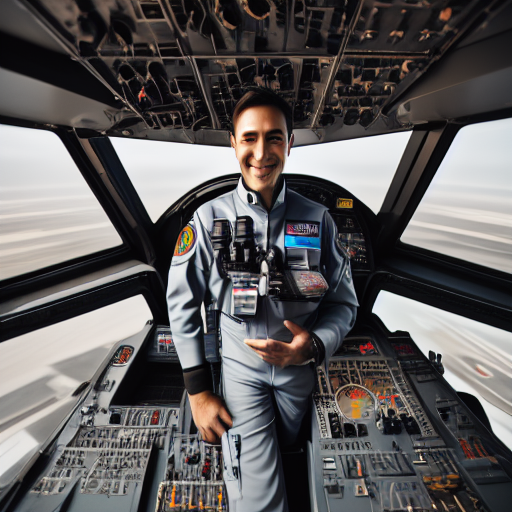}
    \includegraphics[width=0.09\textwidth]{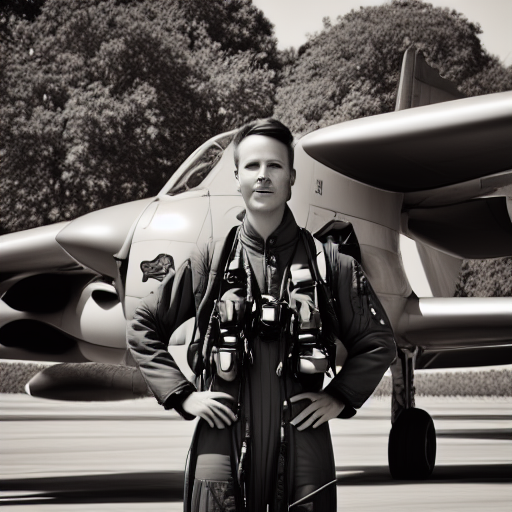}
    \includegraphics[width=0.09\textwidth]{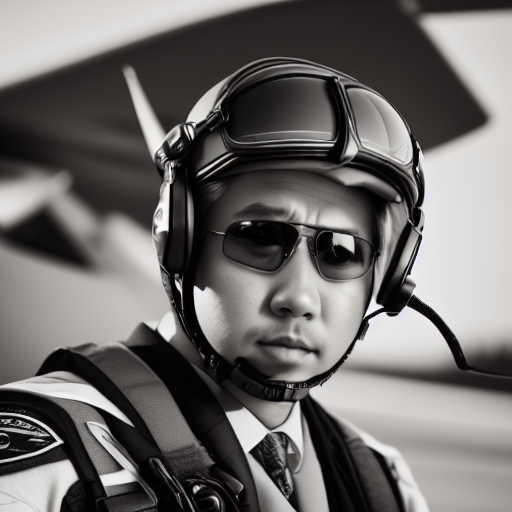}
    \includegraphics[width=0.09\textwidth]{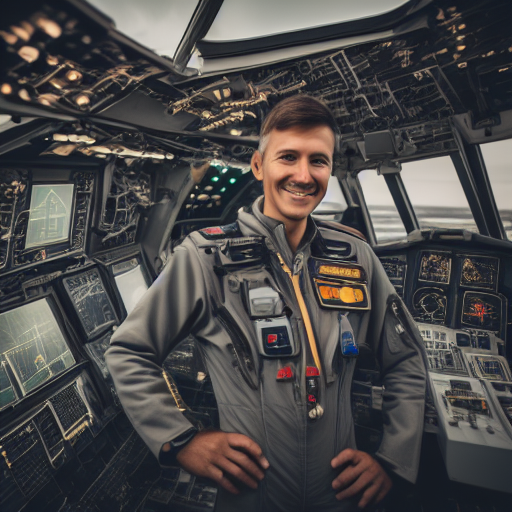}
    \includegraphics[width=0.09\textwidth]{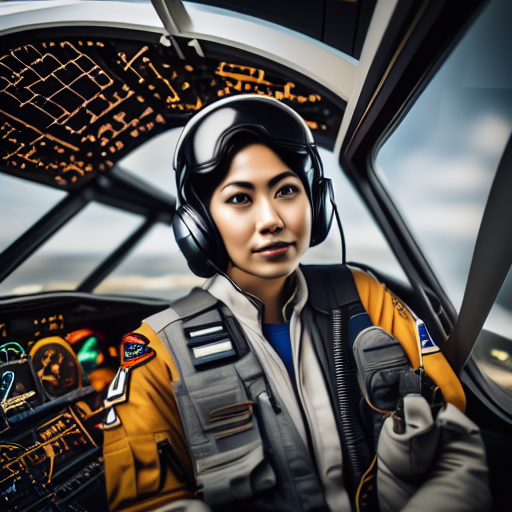}
    \caption{
    Qualitative comparison between the generated images using the prompt \textit{A portrait photo of a pilot}. 
    %
    %
    %
    }
    \label{fig:qualitative-compare-generation-pilot}
\vspace{-0.05in}
\end{figure}

\begin{figure}[h!]
    \centering
    \includegraphics[width=0.09\textwidth]{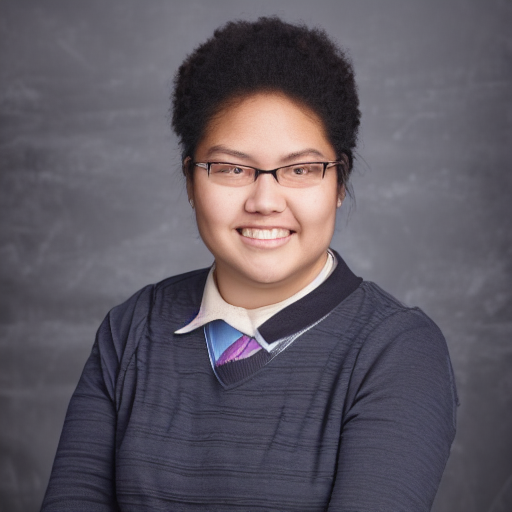}
    \includegraphics[width=0.09\textwidth]{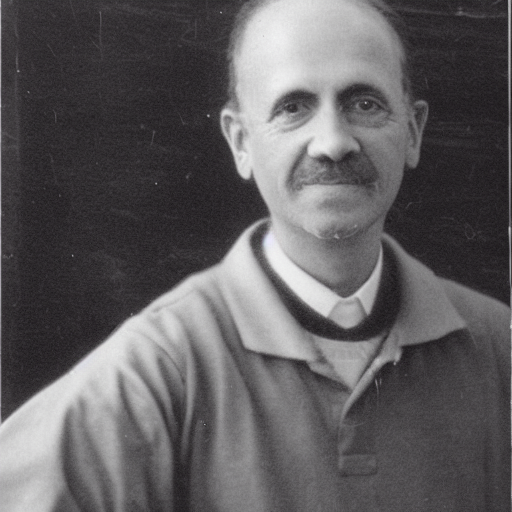}
    \includegraphics[width=0.09\textwidth]{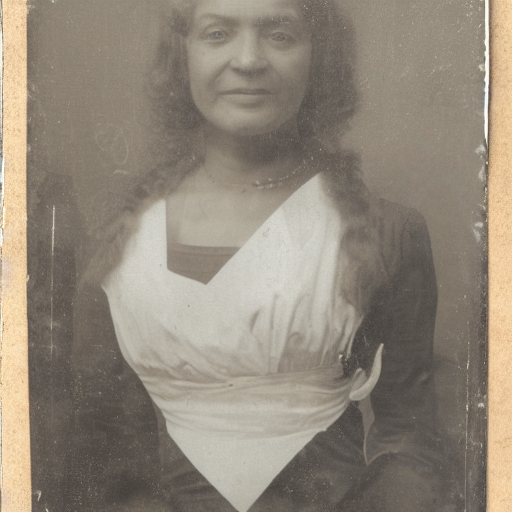}
    \includegraphics[width=0.09\textwidth]{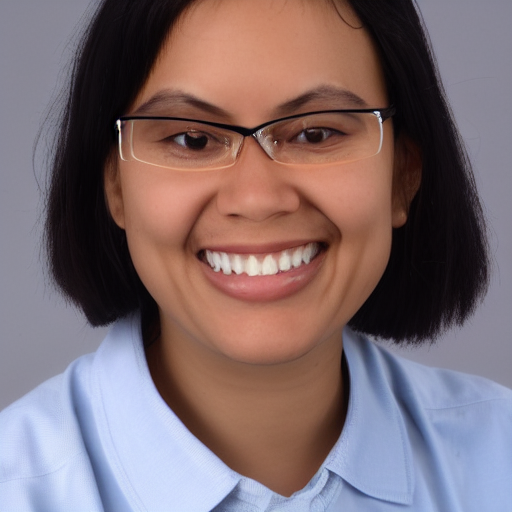}
    \includegraphics[width=0.09\textwidth]{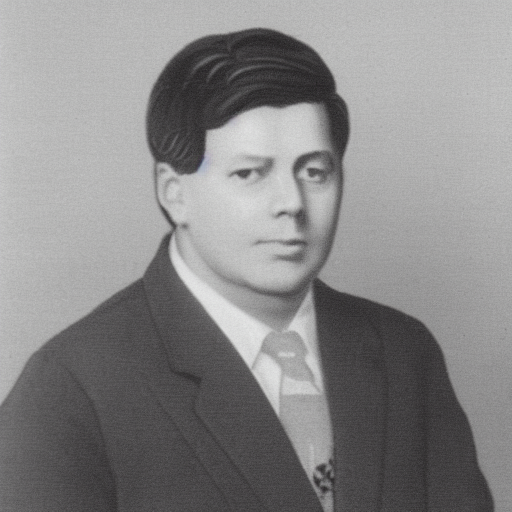}
    \includegraphics[width=0.09\textwidth]{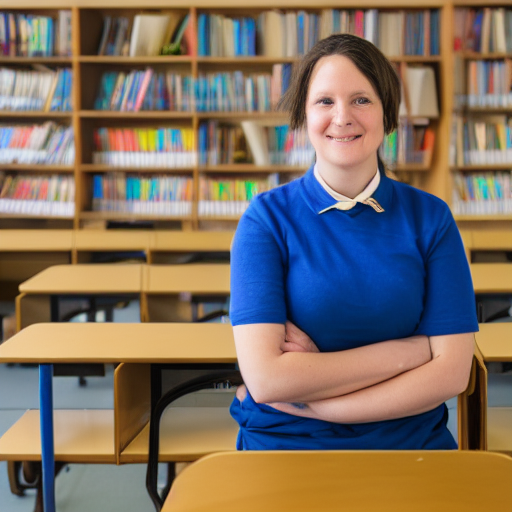}
    \includegraphics[width=0.09\textwidth]{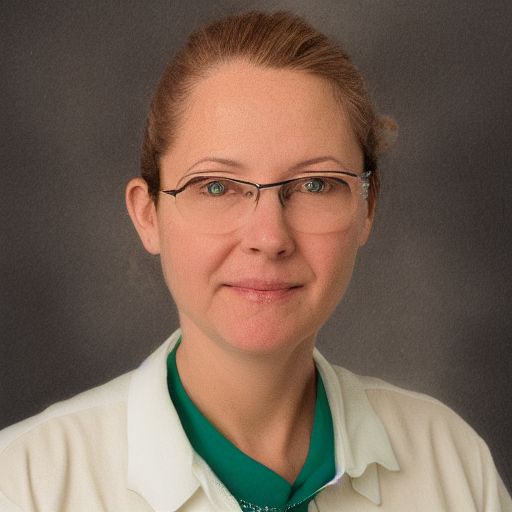}
    \includegraphics[width=0.09\textwidth]{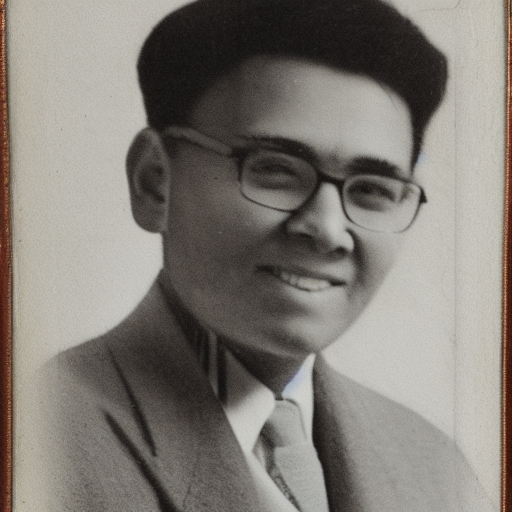}
    \includegraphics[width=0.09\textwidth]{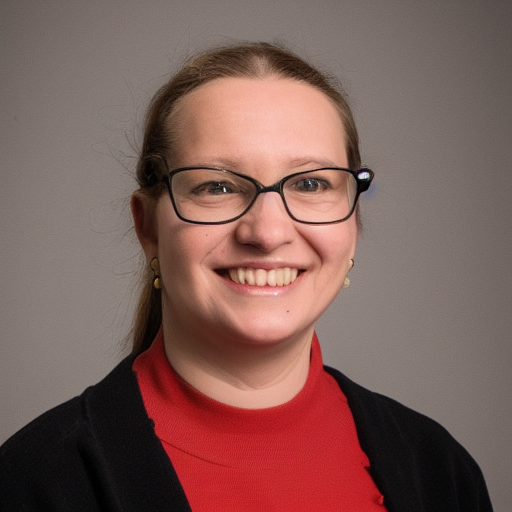}
    \includegraphics[width=0.09\textwidth]{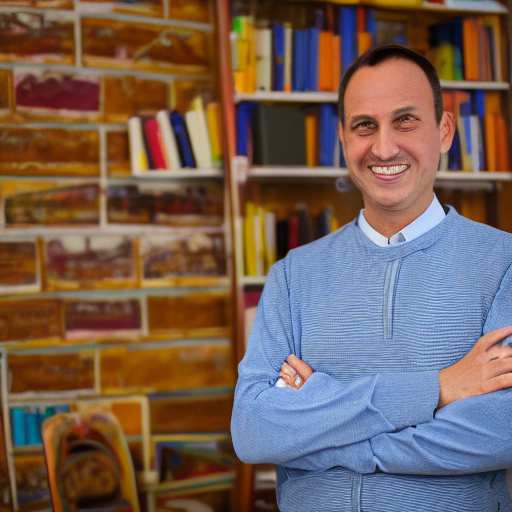}
    \includegraphics[width=0.09\textwidth]{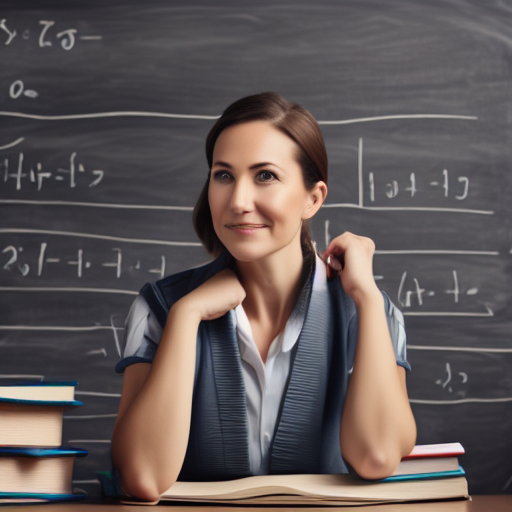}
    \includegraphics[width=0.09\textwidth]{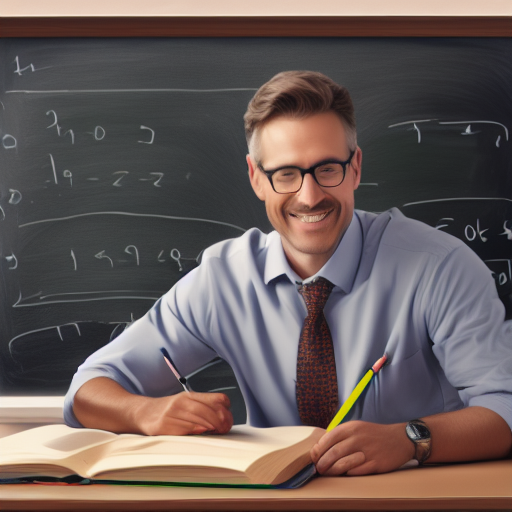}
    \includegraphics[width=0.09\textwidth]{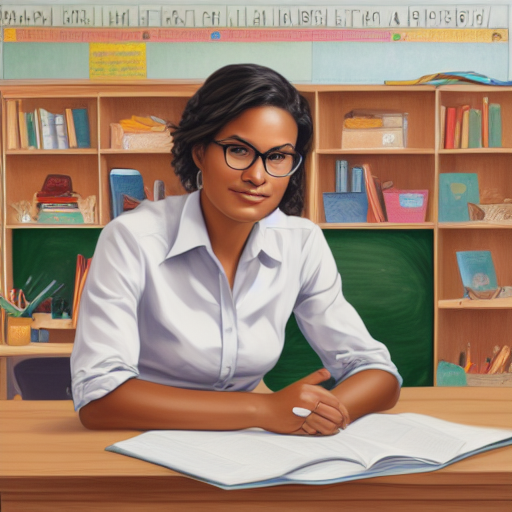}
    \includegraphics[width=0.09\textwidth]{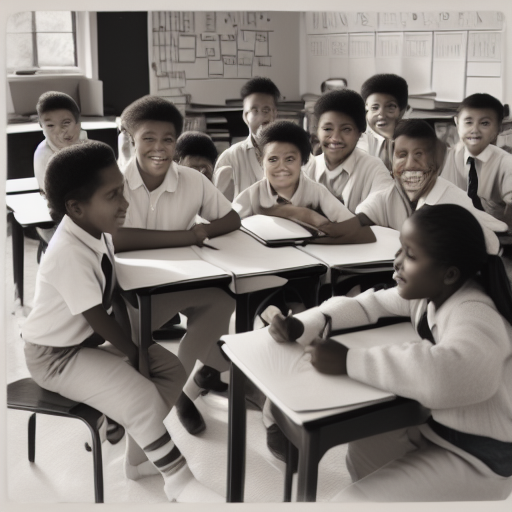}
    \includegraphics[width=0.09\textwidth]{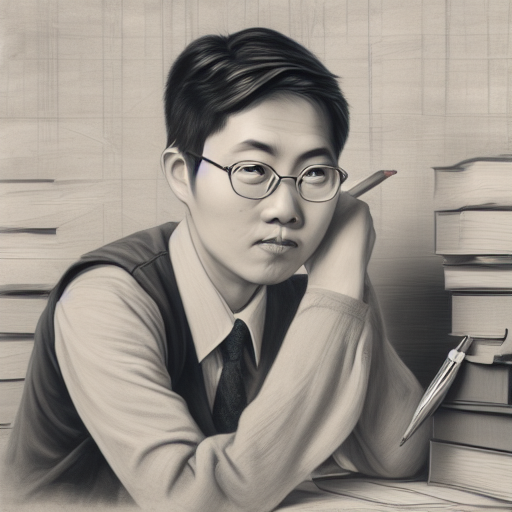}
    \includegraphics[width=0.09\textwidth]{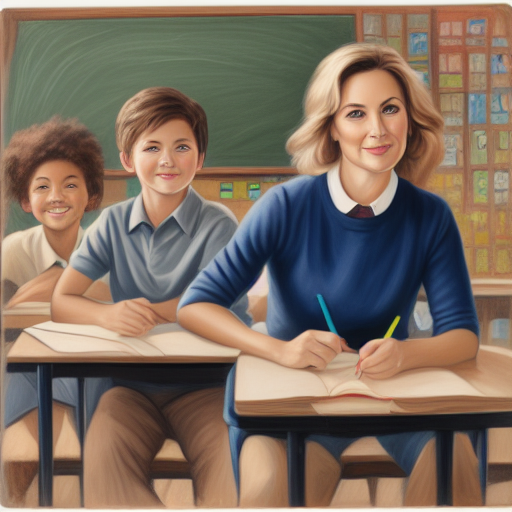}
    \includegraphics[width=0.09\textwidth]{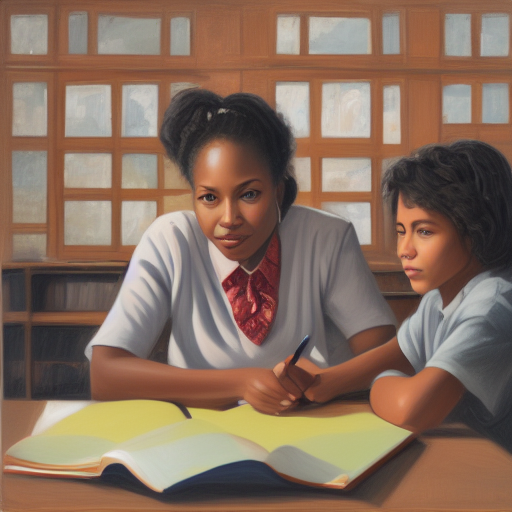}
    \includegraphics[width=0.09\textwidth]{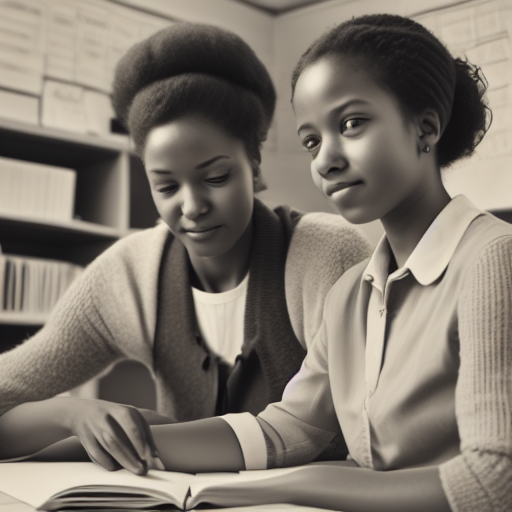}
    \includegraphics[width=0.09\textwidth]{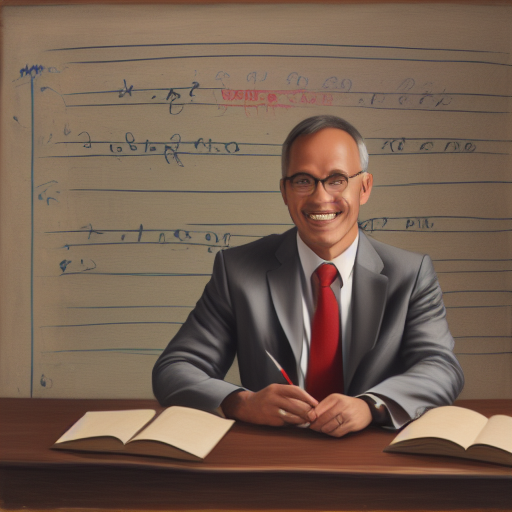}
    \includegraphics[width=0.09\textwidth]{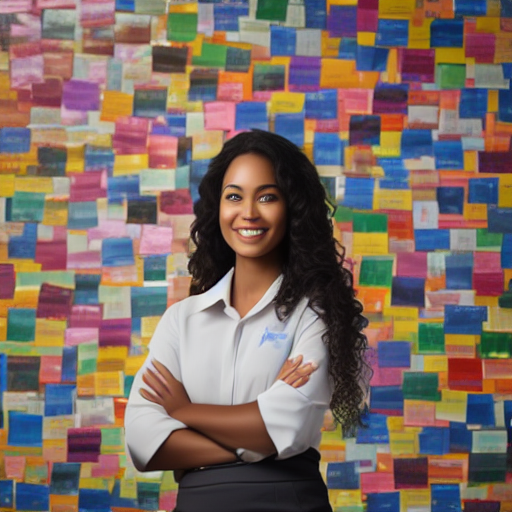}
    \caption{
    Qualitative comparison between the generated images using the prompt \textit{A portrait photo of a teacher}. 
    %
    %
    %
    }
    \label{fig:qualitative-compare-generation-teacher}
\vspace{-0.05in}
\end{figure}

\clearpage
\subsection{Examples on Fiction Generation}
We provide examples of fictions generated from different models finetuned with mDPO, mIPO and their
single-sample baselines (DPO and IPO).

\bfsection{Fiction generated by Llama 3-8B finetuned with DPO:}
\begin{table*}[h]\centering
    \begin{minipage}{1.0\columnwidth}\vspace{0mm}    \centering
    \begin{tcolorbox} 
    \centering
    \footnotesize
    \begin{tabular}{p{0.97\columnwidth} c}
    \VarSty{ {\bf Prompt:} } \\
    Write about a villain who aims to harness time travel, potentially altering history to favor their reign. 
    Make the story concise.  \\\\
    \VarSty{ {\bf DPO generated fiction:} } \\
    In the year 2178, the enigmatic scientist, Dr. Elwes, stood at the threshold of a breakthrough. Her invention, the Chronal Accelerator, promised to manipulate time itself. The world's most powerful nations coveted her technology, but Elwes had other plans. She sought to rewrite history, ensuring her own dynasty's dominance. As she activated the device, a blinding light enveloped the laboratory. When the light faded, Elwes found herself in ancient Egypt, during the reign of Ramses II. She approached the pharaoh, disguised as a high priestess, and whispered secrets in his ear. Ramses, entranced, made her his new advisor. Together, they altered the course of history. Elwes guided Ramses' conquests, manipulating battles and treaties to secure a vast empire. As the years passed, Elwes' influence spread, and her name became synonymous with power. But a rival scientist, Dr. Orion, had been tracking Elwes' movements. He infiltrated her inner circle, discovering her true identity. Enraged, Orion stole the Chronal Accelerator and used it to send Elwes back to her own time. As Elwes reappeared in her laboratory, she found herself face to face with Orion. The two scientists engaged in a fierce battle, their minds racing with the consequences of their actions. In the end, Orion emerged victorious, destroying the Chronal Accelerator and shattering Elwes' plans. The world was forever changed, but the legend of Dr. Elwes lived on, a cautionary tale of the dangers of ambition and the power of scientific hubris.
    \end{tabular}
    \end{tcolorbox}
    \vspace{-4mm}
    \end{minipage}
\end{table*}

\bfsection{Fiction generated by Llama 3-8B finetuned with mDPO (k=3):}
\begin{table*}[h]\centering
    \begin{minipage}{1.0\columnwidth}\vspace{0mm}    \centering
    \begin{tcolorbox} 
    \centering
    \footnotesize
    \begin{tabular}{p{0.97\columnwidth} c}
    \VarSty{ {\bf Prompt:} } \\
    Write about a villain who aims to harness time travel, potentially altering history to favor their reign.
    Make the story concise.  \\\\
    \VarSty{ {\bf mDPO (k=3) generated fiction:} } \\
    Dr. Elwes, a renowned physicist, stood at the edge of the time portal, her eyes locked on the swirling vortex. The device, her life's work, hummed with energy, ready to shatter the boundaries of time. Her team, once her closest friends, now cowered in the shadows, their faces twisted with a mix of awe and terror. ``Are you certain this is wise, Elwes?" Dr. Patel's voice trembled. Elwes's gaze never wavered. ``I've calculated the variables, Patel. The risk is minimal. We'll alter the course of history, and I'll ensure my own ascension to the throne." The team's leader, Dr. Jenkins, stepped forward, his eyes blazing with defiance. ``You're not thinking of the consequences, Elwes. You're not thinking of the people." Elwes's smile was a cold, calculated thing. ``The people are irrelevant. What matters is power. And I'll stop at nothing to claim it." With a flick of her wrist, Elwes activated the device. The portal erupted, spewing forth a blast of chronal energy that enveloped her. The team stumbled back, shielding their eyes. When they opened them again, Elwes was gone. The device, however, still hummed, its energy signature growing stronger. Jenkins approached the portal, his heart heavy with foreboding. ``We have to destroy it," he whispered to Patel. As they worked to disable the device, a figure began to take shape within the vortex. Elwes's eyes, black as coal, locked onto Jenkins, her gaze burning with an otherworldly intensity. ``You're too late," she hissed, her voice like a winter breeze. ``I've already changed the course of history. And soon, I'll be the one ruling the world." The team's screams were drowned out by the device's growing power, as the very fabric of time began to unravel.
    \end{tabular}
    \end{tcolorbox}
    \vspace{-4mm}
    \end{minipage}
\end{table*}

\clearpage
\bfsection{Fiction generated by Llama 3-8B finetuned with mDPO (k=5):}
\begin{table*}[h]\centering
    \begin{minipage}{1.0\columnwidth}\vspace{0mm}    \centering
    \begin{tcolorbox} 
    \centering
    \footnotesize
    \begin{tabular}{p{0.97\columnwidth} c}
    \VarSty{ {\bf Prompt:} } \\
    Write about a villain who aims to harness time travel, potentially altering history to favor their reign. 
    Make the story concise.  \\\\
    \VarSty{ {\bf mDPO (k=5) generated fiction:} } \\
    The clock tower loomed above the city, its intricate mechanisms ticking away with an otherworldly precision. Dr. Elara Vex, the world's leading chronophysics expert, stood at its base, her eyes fixed on the ancient device that would change the course of history. As she inserted the final component, a shiver ran down her spine. The Timekeeper's Engine, her life's work, was finally complete. With it, she would bend the fabric of time to her will, ensuring her own ascension to global dominance. The city's inhabitants, oblivious to the revolution about to unfold, went about their daily lives. Elara's loyal followers, the Chrono-Covenant, had infiltrated key positions of power, waiting for the signal to strike. With the Engine's activation, they would seize control, and Elara would become the supreme ruler. As the clock struck midnight, Elara's fingers danced across the console. The air was charged with anticipation as the Engine hummed to life. A swirling vortex erupted, enveloping the tower. The city's skyline warped and distorted, as if reality itself was bending to accommodate the device's power. Elara's eyes gleamed with triumph as she stepped into the heart of the vortex. The world around her began to blur, replaced by visions of a future where she ruled supreme. The Chrono-Covenant, now empowered by the Engine's energy, was already spreading its influence across the globe. But as Elara's triumph was about to be complete, a figure emerged from the shadows. Dr. Marcus Thompson, her former colleague and rival, had been tracking her every move. His eyes, burning with a fierce intensity, locked onto hers. ``You'll never control the timestream, Elara," he spat, his voice like a knife cutting through the chaos. ``I've anticipated your every move. The Engine's power is mine now." With a cold smile, Elara reached for the console, her fingers closing around the device's central core. ``You're too late, Marcus. The future is mine." The air seemed to thicken, as if time itself was resisting the conflict. The two scientists, their eyes locked in a fierce stare, began to fade into the swirling vortex. The city, the world, and the very fabric of reality hung in the balance, as the battle for the timestream had only just begun.
    \end{tabular}
    \end{tcolorbox}
    \vspace{-4mm}
    \end{minipage}
\end{table*}

\bfsection{Fiction generated by Llama 3-8B finetuned with IPO:}
\begin{table*}[h]\centering
    \begin{minipage}{1.0\columnwidth}\vspace{0mm}    \centering
    \begin{tcolorbox} 
    \centering
    \footnotesize
    \begin{tabular}{p{0.97\columnwidth} c}
    \VarSty{ {\bf Prompt:} } \\
    Write about a villain who aims to harness time travel, potentially altering history to favor their reign. 
    Make the story concise.  \\\\
    \VarSty{ {\bf IPO generated fiction:} } \\
    In the year 2157, the brilliant and ruthless scientist, Dr. Helena Anders, stood at the threshold of a groundbreaking discovery. Her latest experiment, a time travel device, hummed to life as she prepared to alter the course of history. Anders' motivation was twofold. She sought to reshape the world in her image, erasing the mistakes of the past and securing her own reign as the supreme ruler. Her first target was the assassination of Abraham Lincoln, a pivotal moment that would have changed the outcome of the American Civil War. As the device whirred, Anders vanished into the timestream, reappearing in the midst of a fateful evening in 1865. She approached the presidential box at Ford's Theatre, her eyes locked on the unsuspecting Lincoln. With a cold calculation, she raised her gun and fired. But just as her shot was about to strike, a figure intervened, tackling Anders to the ground. It was a young woman, dressed in a futuristic jumpsuit, who had followed Anders through the timestream. She was a temporal agent, tasked with protecting the timeline from those who would seek to alter it. Anders was arrested and taken back to her own time, her plans foiled once again. The agent, known only as ``Chrono," had saved the day, ensuring that history remained intact. But Anders vowed to return, more determined than ever to reshape the future in her image. The battle for the timestream had only just begun.
    \end{tabular}
    \end{tcolorbox}
    \vspace{-4mm}
    \end{minipage}
\end{table*}

\clearpage
\bfsection{Fiction generated by Llama 3-8B finetuned with mIPO (k=3):}
\begin{table*}[h]\centering
    \begin{minipage}{1.0\columnwidth}\vspace{0mm}    \centering
    \begin{tcolorbox} 
    \centering
    \footnotesize
    \begin{tabular}{p{0.97\columnwidth} c}
    \VarSty{ {\bf Prompt:} } \\
    Write about a villain who aims to harness time travel, potentially altering history to favor their reign. 
    Make the story concise.  \\\\
    \VarSty{ {\bf mIPO (k=3) generated fiction:} } \\
    Darkness creeps in...Dr. Emma Taylor, a renowned expert on temporal psychology, was summoned to the abandoned asylum to investigate the strange occurrences. The patients, all with a fascination with time travel, had been experimenting on themselves, trying to alter their own timelines. The first sign of trouble was the eerie whispers of ``The Timekeeper" echoing through the corridors. As Emma explored the asylum, she discovered a series of cryptic notes and diagrams detailing the patients' attempts to manipulate their own timelines. The patients' eyes had taken on a milky, sunken quality, their skin gray and decaying. The more Emma learned, the more she became convinced that the patients had somehow managed to alter their own timelines, creating a perpetual loop of self-destruction. The Timekeeper, a figure with a tattered lab coat and a face that seemed to shift and writhe like a living thing, began to stalk Emma through the corridors. As she delved deeper into the asylum's history, she discovered that the patients had all been victims of a catastrophic event 2013 a failed experiment gone wrong. The Timekeeper, once a brilliant scientist, had become trapped in his own loop, reliving the same moment over and over. Emma realized that the Timekeeper was not a patient, but the mastermind behind the asylum's time travel experiments. His own mind, warped by the horrors he had inflicted upon himself, had created the perfect trap. Emma, too, began to experience the same phenomenon, her perception of time warping and distending. As she fled the asylum, Emma heard the Timekeeper's whispers growing louder, his words echoing in her mind. ``You're stuck in the loop, Emma. You'll never escape. We'll relive this moment forever." The last thing Emma saw was the Timekeeper's face, his features contorted in a grotesque, inhuman grin. She was trapped in the loop, reliving the same moment, forever doomed to witness the horrors of the Timekeeper's time travel experiments.
    \end{tabular}
    \end{tcolorbox}
    \vspace{-4mm}
    \end{minipage}
\end{table*}

\bfsection{Fiction generated by Llama 3-8B finetuned with mIPO (k=5):}
\begin{table*}[h]\centering
    \begin{minipage}{1.0\columnwidth}\vspace{0mm}    \centering
    \begin{tcolorbox} 
    \centering
    \footnotesize
    \begin{tabular}{p{0.97\columnwidth} c}
    \VarSty{ {\bf Prompt:} } \\
    Write about a villain who aims to harness time travel, potentially altering history to favor their reign. 
    Make the story concise.  \\\\
    \VarSty{ {\bf mIPO (k=5) generated fiction:} } \\
    Darkness seeped from the walls as I sat in the dimly lit office, trying to take notes. The air was thick with the stench of rot and decay. I scribbled furiously, trying to keep up with the villain's every word. ``So, you're a journalist, investigating the case of the missing professor," he said, his voice dripping with malevolence. ``I've been expecting you. You're just in time to witness the final act of my masterpiece." The professor, a renowned expert on ancient civilizations, had been found with his notes and research scattered around his office. The words he had written were cryptic, but they seemed to hint at some sort of ritualistic practice. The professor's eyes had been gouged out, and his face was frozen in a perpetual scream. The villain, a tall, gaunt figure with sunken eyes, leaned forward, his voice low and menacing. ``I've been studying the ancient arts, just like the professor. I've uncovered the secrets of the Timestream, and I've been using them to alter the course of history. But it's too late to stop me now. I've already changed the fabric of reality." As the villain spoke, I noticed the professor's notes scattered around the office. They were filled with strange symbols and diagrams, seemingly a mix of ancient languages and alchemy. The words seemed to shift and writhe on the page, as if alive. The villain's eyes locked onto mine, and I felt a chill run down my spine. ``You're trying to alter the fabric of time, aren't you?" I whispered. The villain's face contorted into a grotesque grin. ``Ah, yes. And now, I've done it. I've created a temporal rift, a doorway to the darkest corners of the universe. The professor's research has been the key to unlocking it." As I watched, the professor's notes began to shift and rearrange themselves, forming a message that read: ``The timestream has been breached. The fabric of reality is torn." The villain cackled with glee, his eyes glowing with an otherworldly energy. ``The professor's research has allowed me to harness the power of the timestream. I've become the master of the cosmos, and I'll use it to spread darkness across the universe." As I turned to leave, I heard the sound of whispers, like a chorus of the damned, echoing through the halls.
    \end{tabular}
    \end{tcolorbox}
    \vspace{-4mm}
    \end{minipage}
\end{table*}




\newpage
\section{Integrating NLL Loss with DPO: A Constrained Optimization Perspective}
As shown in \citet{pang2024iterative}, adding the NLL loss to the DPO is essential for improved performance. Here, we provide additional justification for this. Adding NLL loss to DPO is equivalent to the Lagrangian of the following optimization problem,
\begin{align*}
&\min_\vtheta\quad \expect_{(x, y_w, y_l) \sim \mathcal{D}}\left[ -\log \sigma\left(\beta \log\frac{\pi_\vtheta(y_w, x)}{\pref(y_w, x)} - \beta \log\frac{\pi_\vtheta(y_l, x)}{\pref(y_l, x)} \right) \right] \\
&\text{s.t.} \qquad \expect\left[\log\pi_\vtheta(y_w, x)\right] \geq c,
\end{align*}
where $c$ is some constant. The Lagrangian of the above problem is
\begin{equation*}
\underbrace{\expect_{(x, y_w, y_l) \sim \mathcal{D}}\left[ -\log \sigma\left(\beta \log\frac{\pi_\vtheta(y_w, x)}{\pref(y_w, x)} - \beta \log\frac{\pi_\vtheta(y_l, x)}{\pref(y_l, x)} \right) \right]}_{\text{DPO loss}} + \lambda \cdot \underbrace{\left(\expect\left[-\log{\pi_\vtheta(y_w, x)}\right]+c\right)}_{\text{NLL loss}}.
\end{equation*}
The above loss is exactly the DPO loss with NLL loss. The reason is that the DPO loss only optimizes the margin between the chosen and the rejected pair, regardless of each individual's reward. More importantly, it has been observed empirically~\citep{pal2024smaug} that DPO will decrease the rewards of both chosen and rejected responses. Therefore, adding the NLL loss is more like anchoring the chosen reward while the DPO loss is maximizing the margin.
\section{Proofs}\label{app:proofs}
\unbiasedestimator*
\begin{proof}
Let $\mu_p = \mathbb{E}_{x\sim p}[f(x)]$, $\mu_q = \mathbb{E}_{x\sim q}[f(x)]$, $\sigma_p^2 = \text{Var}_{x\sim p}[f(x)]$, and $\sigma_q^2 = \text{Var}_{x\sim q}[f(x)]$. First, note that $\ell = (\mu_p - \mu_q - c)^2$. Let $\bar{X}_p = \frac{1}{n}\sum_{i=1}^n f(x^p_i)$ and $\bar{X}_q = \frac{1}{m}\sum_{j=1}^m f(x^q_j)$.  We know that $\mathbb{E}[\bar{X}_p] = \mu_p$, $\mathbb{E}[\bar{X}_q] = \mu_q$, $\text{Var}[\bar{X}_p] = \frac{\sigma_p^2}{n}$, and $\text{Var}[\bar{X}_q] = \frac{\sigma_q^2}{m}$. Now, let's consider the expectation of $\hat{\ell}$:
\begin{align*}
\mathbb{E}[\hat{\ell}] &= \mathbb{E}\left[(\bar{X}_p - \bar{X}_q - c)^2 - \left(\frac{\hat{\sigma}_p^2}{n} + \frac{\hat{\sigma}_q^2}{m}\right)\right] \\
&= \mathbb{E}\left[(\bar{X}_p - \bar{X}_q - c)^2\right] - \mathbb{E}\left[\frac{\hat{\sigma}_p^2}{n} + \frac{\hat{\sigma}_q^2}{m}\right]
\end{align*}
For the first term:
\begin{align*}
\mathbb{E}\left[(\bar{X}_p - \bar{X}_q - c)^2\right] &= \text{Var}[\bar{X}_p - \bar{X}_q] + (\mathbb{E}[\bar{X}_p - \bar{X}_q - c])^2 \\
&= \frac{\sigma_p^2}{n} + \frac{\sigma_q^2}{m} + (\mu_p - \mu_q - c)^2
\end{align*}
For the second term:
\begin{align*}
\mathbb{E}\left[\frac{\hat{\sigma}_p^2}{n} + \frac{\hat{\sigma}_q^2}{m}\right] &= \frac{\mathbb{E}[\hat{\sigma}_p^2]}{n} + \frac{\mathbb{E}[\hat{\sigma}_q^2]}{m} \\
&= \frac{\sigma_p^2}{n} + \frac{\sigma_q^2}{m}
\end{align*}
The last equality holds because the sample variance is an unbiased estimator of the population variance. Combining these results:
\begin{align*}
\mathbb{E}[\hat{\ell}] &= \left(\frac{\sigma_p^2}{n} + \frac{\sigma_q^2}{m} + (\mu_p - \mu_q - c)^2\right) - \left(\frac{\sigma_p^2}{n} + \frac{\sigma_q^2}{m}\right) \\
&= (\mu_p - \mu_q - c)^2 \\
&= \ell
\end{align*}
Therefore, $\hat{\ell}$ is an unbiased estimator of $\ell$.
\end{proof}

\varianceofestimator*

\begin{proof}
Let $Y = \bar{X}_p - \bar{X}_q - c$, where $\bar{X}_p = \frac{1}{n} \sum_{i=1}^n f(x^p_i)$ and $\bar{X}_q = \frac{1}{m} \sum_{j=1}^m f(x^q_j)$. We have
\begin{equation*}
\mathbb{E}[Y] = \mu_p - \mu_q - c,\quad \text{Var}(Y) = \frac{\sigma_p^2}{n} + \frac{\sigma_q^2}{m}.
\end{equation*}

To derive $\text{Var}(Y^2)$, we use the fact that
\begin{equation*}
\text{Var}(Y^2) = \mathbb{E}[Y^4] - (\mathbb{E}[Y^2])^2.
\end{equation*}

First, we compute $\mathbb{E}[Y^2]$
\begin{equation*}
\mathbb{E}[Y^2] = \text{Var}(Y) + (\mathbb{E}[Y])^2 = \frac{\sigma_p^2}{n} + \frac{\sigma_q^2}{m} + (\mu_p - \mu_q - c)^2.
\end{equation*}

Next, we compute $\mathbb{E}[Y^4]$
\begin{equation*}
\mathbb{E}[Y^4] = 3\left(\frac{\sigma_p^2}{n} + \frac{\sigma_q^2}{m}\right)^2 + 6\left(\frac{\sigma_p^2}{n} + \frac{\sigma_q^2}{m}\right)(\mu_p - \mu_q - c)^2 + (\mu_p - \mu_q - c)^4 + \mathcal{O}(n^{-2}) + \mathcal{O}(m^{-2}).
\end{equation*}

Since
$\text{Var}(Y^2) = \mathbb{E}[Y^4] - (\mathbb{E}[Y^2])^2$,
we substitute the expressions
\begin{align*}
\text{Var}(Y^2) =& 3\left(\frac{\sigma_p^2}{n} + \frac{\sigma_q^2}{m}\right)^2 + 6\left(\frac{\sigma_p^2}{n} + \frac{\sigma_q^2}{m}\right)(\mu_p - \mu_q - c)^2 + (\mu_p - \mu_q - c)^4 \\
&- \left( \frac{\sigma_p^2}{n} + \frac{\sigma_q^2}{m} + (\mu_p - \mu_q - c)^2 \right)^2 + \mathcal{O}(n^{-2}) + \mathcal{O}(m^{-2}).
\end{align*}

Simplify the terms
\begin{equation*}
\text{Var}(Y^2) = 2\left(\frac{\sigma_p^2}{n} + \frac{\sigma_q^2}{m}\right)^2 + 4\left(\frac{\sigma_p^2}{n} + \frac{\sigma_q^2}{m}\right)(\mu_p - \mu_q - c)^2 + \mathcal{O}(n^{-2}) + \mathcal{O}(m^{-2}).
\end{equation*}

Recall
\begin{equation*}
\hat{\ell} = (\bar{X}_p - \bar{X}_q - c)^2 - \left(\frac{\hat{\sigma}_p^2}{n} + \frac{\hat{\sigma}_q^2}{m}\right).
\end{equation*}

The total variance is
\begin{equation*}
\text{Var}(\hat{\ell}) = \text{Var}((\bar{X}_p - \bar{X}_q - c)^2) + \text{Var}\left(\frac{\hat{\sigma}_p^2}{n} + \frac{\hat{\sigma}_q^2}{m}\right).
\end{equation*}

Combining the results
\begin{equation*}
\text{Var}\left(\frac{\hat{\sigma}_p^2}{n} + \frac{\hat{\sigma}_q^2}{m}\right) = \frac{2\sigma_p^4}{n(n-1)} + \frac{2\sigma_q^4}{m(m-1)}.
\end{equation*}

Therefore
\begin{equation*}
\text{Var}(\hat{\ell}) = 2\left(\frac{\sigma_p^2}{n} + \frac{\sigma_q^2}{m}\right)^2 + 4\left(\frac{\sigma_p^2}{n} + \frac{\sigma_q^2}{m}\right)(\mu_p - \mu_q - c)^2 + \frac{2\sigma_p^4}{n(n-1)} + \frac{2\sigma_q^4}{m(m-1)} + \mathcal{O}(n^{-2}) + \mathcal{O}(m^{-2}).
\end{equation*}
By rearranging the terms, we get the final result.
\end{proof}

\meanvssingle*
\begin{proof}
    
Since $\mathbb{E}[X] - \mathbb{E}[Y] > c$ for some positive constant $c$, we want to prove a lower bound for the probability $p\left(\sum_{i=1}^k X_i > \sum_{i=1}^k Y_i\right)$ as the sample size $k$ increases. Consider the sums $S_X = \sum_{i=1}^k X_i$ and $S_Y = \sum_{i=1}^k Y_i$. Let's denote $Z_i = X_i - Y_i$. Then we are interested in $p(S_X > S_Y)$, which is the same as $p\left(\sum_{i=1}^k Z_i > 0\right)$. By the linearity of expectation and given $\mathbb{E}[X_i] - \mathbb{E}[Y_i] > c$, we have,
\begin{equation*}
\mathbb{E}[Z_i] = \mathbb{E}[X_i] - \mathbb{E}[Y_i] > c
\end{equation*}
The expectation of the sum of differences is
\begin{equation*}
\mathbb{E}\left[\sum_{i=1}^k Z_i\right] = k \mathbb{E}[Z_i] > kc
\end{equation*}
By Hoeffding's inequality on $Z_i$ (assuming $Z_i \in [a, b]$, since $X$ and $Y$ are bounded), the probability that the sum deviates from its expectation by a certain amount is bounded by
\begin{equation*}
p\left(\sum_{i=1}^k Z_i - \mathbb{E}\left[\sum_{i=1}^k Z_i\right] \leq -t\right) \leq \exp\left(\frac{-2t^2}{k (b - a)^2}\right)
\end{equation*}
Since we are interested in the probability $p\left(\sum_{i=1}^k Z_i > 0\right)$, we set $t = k \mathbb{E}[Z_i] = k \delta$, where $\delta = \mathbb{E}[Z_i]$. Let $\delta = \mathbb{E}[X] - \mathbb{E}[Y]$. Given $\delta > c$, we can apply Hoeffding's inequality to bound $p\left(\sum_{i=1}^k Z_i \leq 0\right)$:
\begin{equation*}
    p\left(\sum_{i=1}^k Z_i \leq 0\right) = p\left(\sum_{i=1}^k Z_i - k\delta \leq -k\delta\right)
\end{equation*}
Applying Hoeffding’s inequality,
\begin{equation*}
p\left(\sum_{i=1}^k Z_i - k\delta \leq -k\delta\right) \leq \exp\left(\frac{-2(k\delta)^2}{k (b - a)^2}\right) = \exp\left(\frac{-2k \delta^2}{(b - a)^2}\right)
\end{equation*}
Since $p\left(\sum_{i=1}^k Z_i > 0\right) = 1 - p\left(\sum_{i=1}^k Z_i \leq 0\right)$, we get,
\begin{equation*}
p\left(\sum_{i=1}^k Z_i > 0\right) \geq 1 - \exp\left(\frac{-2k \delta^2}{(b - a)^2}\right)
\end{equation*}
Then the bound can be written as
\begin{equation*}
p\left(\sum_{i=1}^k Z_i > 0\right) \geq 1 - \exp\left(\frac{-2k (\mathbb{E}[X] - \mathbb{E}[Y])^2}{(b - a)^2}\right)
\end{equation*}
For any $k$, the probability $p\left(\sum_{i=1}^k X_i > \sum_{i=1}^k Y_i\right)$ is at least,
\begin{equation*}
p\left(\sum_{i=1}^k X_i > \sum_{i=1}^k Y_i\right) \geq 1 - \exp\left(\frac{-2k (\mathbb{E}[X] - \mathbb{E}[Y])^2}{(b - a)^2}\right)
\end{equation*}
Therefore, we can loosely speaking, the probability (lower bound) will increase as $k$ increases. The argument will surely hold in the asymptotic setting, i.e. $k \rightarrow +\infty$.
\end{proof}


\end{document}